\documentclass{article} 
\usepackage{iclr2026_conference,times}

\usepackage{amsmath,amsfonts,bm}

\def\eqref#1{equation~\ref{#1}}

\def\1{\bm{1}}

\DeclareMathAlphabet{\mathsfit}{\encodingdefault}{\sfdefault}{m}{sl}
\SetMathAlphabet{\mathsfit}{bold}{\encodingdefault}{\sfdefault}{bx}{n}

\usepackage{url}
\usepackage{booktabs}       
\usepackage{amsfonts}       
\usepackage{nicefrac}       
\usepackage{microtype}      
\usepackage{xcolor}         
\usepackage{tabularx}
\usepackage{multirow}
\usepackage{amsmath,amssymb,amsthm}
\usepackage{graphicx}
\usepackage{wrapfig}
\usepackage{enumitem}
\usepackage{mathtools}
\usepackage{pifont}
\usepackage{algorithm}
\usepackage{algorithmic}
\usepackage{bbm}
\usepackage[T1]{fontenc}
\usepackage{hyperref}

\theoremstyle{definition}

\theoremstyle{plain}
\newtheorem{theorem}{Theorem}
\newtheorem{proposition}{Proposition}
\newtheorem{lemma}{Lemma}
\newtheorem{assumption}{Assumption}

\title{Accelerating Benchmarking of Functional Connectivity Modeling via Structure-aware Core-set Selection}

\author{
  \textbf{Ling Zhan}$^{1,2}$, \quad \textbf{Zhen Li}$^{1}$, \quad \textbf{Junjie Huang}$^{1}$, \quad \textbf{Tao Jia}$^{1,2,3}$\thanks{Corresponding author.} \\[0.5ex]
  \small $^1$College of Computer and Information Science, Southwest University, Chongqing, China \\
  \small $^2$Chongqing Key Laboratory of Brain-Inspired Cognitive Computing and \\[-0.2ex]
  \small Educational Rehabilitation for Children with Special Needs, Chongqing Normal University, China \\
  \small $^3$College of Computer and Information Science, Chongqing Normal University, China \\
  \small \texttt{\{zl0327, cs2003lz\}@email.swu.edu.cn, \{junjiehuang, tjia\}@swu.edu.cn}
}

\iclrfinalcopy 
\begin{document}

\maketitle

\begin{abstract}
Benchmarking the hundreds of functional connectivity (FC) modeling methods on large-scale fMRI datasets is critical for reproducible neuroscience. However, the combinatorial explosion of model--data pairings makes exhaustive evaluation computationally prohibitive, preventing such assessments from becoming a routine pre-analysis step. To break this bottleneck, we reframe the challenge of FC benchmarking by selecting a small, representative \emph{core-set} whose sole purpose is to preserve the relative performance ranking of FC operators. We formalize this as a ranking-preserving subset selection problem and propose \textbf{S}tructure-aware \textbf{C}ontrastive \textbf{L}earning for \textbf{C}ore-set \textbf{S}election (\textbf{SCLCS}), a self-supervised framework to select these core-sets. \textbf{SCLCS} first uses an adaptive Transformer to learn each sample's unique FC structure. It then introduces a novel \textbf{S}tructural \textbf{P}erturbation \textbf{S}core (\textbf{SPS}) to quantify the stability of these learned structures during training, identifying samples that represent foundational connectivity archetypes. Finally, while \textbf{SCLCS} identifies stable samples via a top-$k$ ranking, we further introduce a \textbf{density-balanced sampling strategy} as a necessary correction to promote diversity, ensuring the final core-set is both structurally robust and distributionally representative. On the large-scale REST-meta-MDD dataset, \textbf{SCLCS} preserves the ground-truth model ranking with just $10\%$ of the data, outperforming state-of-the-art (SOTA) core-set selection methods by up to $23.2\%$ in ranking consistency (nDCG@k). To our knowledge, this is the first work to formalize core-set selection for FC operator benchmarking, thereby making large-scale operators comparisons a feasible and integral part of computational neuroscience. Code is publicly available on \url{https://github.com/lzhan94swu/SCLCS}
\end{abstract}

\section{Introduction}
Methodological choices can substantially affect scientific reproducibility, as reflected in highly variable outcomes obtained from the same dataset, making systematic benchmarking increasingly important~\citep{kohli2024towards,qiu2024tfb,marek2022reproducible}.
This issue is especially acute in functional connectivity (FC) modeling, where hundreds of candidate statistical pairwise interactions (SPIs) require careful evaluation to ensure reliable conclusions~\citep{liu2025benchmarking,roell2025measure}.
Yet the computational cost of exhaustive evaluation makes it impractical to run as a routine pre-analysis step for data-driven model selection~\citep{ying2024automating,zhou2021review} (see the complexity analysis in \textbf{Appendix~\ref{app:comps}}).
To address this bottleneck, we propose a two-stage workflow: we first benchmark all candidate SPIs on a small, representative core-set to identify top performers, and then evaluate the selected SPI(s) on the full dataset for downstream analysis.
This workflow hinges on selecting a core-set that preserves the relative performance ranking of SPIs.

While core-set selection is well studied, most existing methods target a different goal: constructing a small training proxy for a single predictive model~\citep{feldman2020core,lee2024coreset,hong2024evolution}.
In our setting (Figure~\ref{problem}), the core-set must preserve the relative performance ranking across hundreds of candidate SPIs~\citep{liu2025benchmarking,cliff2023unifying}.
This ranking-preservation objective raises three challenges: (1) Formulating a selection criterion that targets cross-SPI ranking stability rather than single-model training loss. (2) Defining a principled, structure-aware notion of sample importance based on FC patterns (the targets of SPIs). (3) Reducing the brittleness of score-based top-$k$ selection, which can fail to generalize across sampling ratios and distort rankings.

In this work, we cast core-set selection for FC benchmarking as a ranking-preserving subset selection problem.
Rather than training a predictive model, we seek a subset that preserves the SPI ordering from the full dataset (Figure~\ref{problem}).
We evaluate on the REST-meta-MDD dataset~\citep{yan2019reduced,long2020altered}, a large multi-site resting-state fMRI dataset for MDD, which captures heterogeneity across acquisition sites and a large cohort.
We instantiate benchmarking with two tasks, brain fingerprinting~\citep{van2021makes} and MDD diagnosis~\citep{gallo2023functional}, both widely used in FC research~\citep{lu2024brain,otte2016major}.
For each task, we score each SPI by how well the resulting FC matrices separate within-class from between-class pairs using Spearman’s rank correlation~\citep{sedgwick2014spearman}, yielding an SPI ranking.
Core-set quality is measured by nDCG@k~\citep{wang2013theoretical} between the SPI rankings induced by the core-set and the full dataset.

We use SPIs as a validation case because benchmarking FC operators has been formalized as a well-defined task in recent work~\citep{liu2025benchmarking,cliff2023unifying,honari2021evaluating}. Based on this formulation, we propose \textbf{S}tructure-aware \textbf{C}ontrastive \textbf{L}earning for \textbf{C}ore-set \textbf{S}election (\textbf{SCLCS}).
As shown in Figure~\ref{problem}, \textbf{SCLCS} is built around a Transformer-based encoder that encodes sample-specific synchronization structure via an adaptively weighted fusion of attention heads. Under the assumptions of \textbf{Theorem~\ref{thm:universal}}, we show this encoder has universal approximation capacity for continuous SPI mappings.
We then define a \textbf{S}tructure \textbf{P}erturbation \textbf{S}core (\textbf{SPS}) to quantify the stability of these structures, and prioritize low-\textbf{SPS} samples to form a robust core-set.
Because naïve top-$k$ selection can be brittle for certain task structures, \textbf{SCLCS} augments it with a density-aware sampling strategy to improve diversity.
\textbf{SCLCS} learns in an identity-supervised contrastive manner, using subject identities to encourage stable “brain fingerprints”\citep{van2021makes} that SPI-based analyses aim to capture\citep{liu2025benchmarking,luppi2024systematic}. This yields task-agnostic representations suitable for benchmarking.
Finally, \textbf{SCLCS} is a pre-analysis acceleration tool that makes large-scale benchmarking computationally feasible, rather than a method for the final neuroscientific discovery task.

\begin{figure}[!tb]
  \centering
  \includegraphics[width=0.9\linewidth]{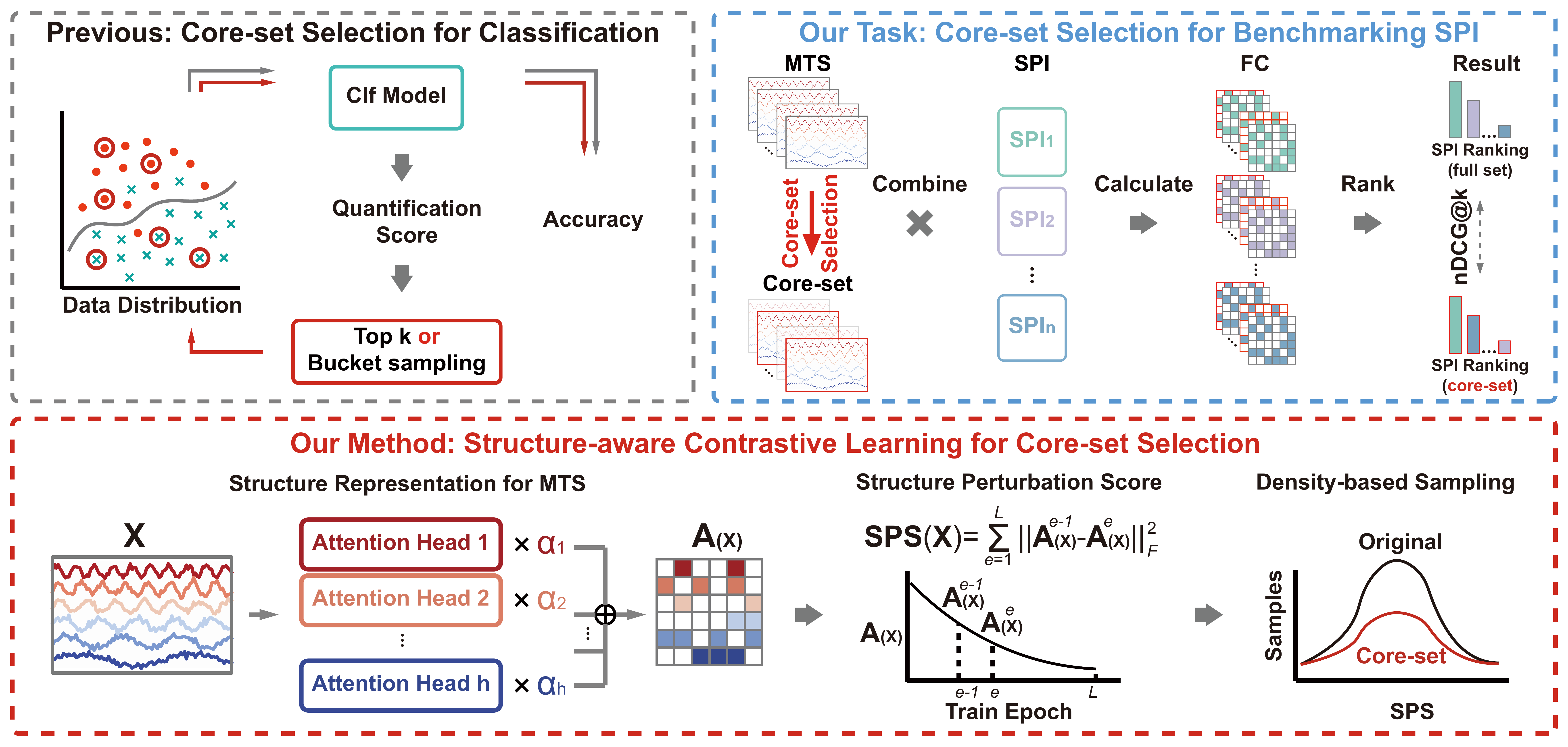}
  \caption{Overview of the \textbf{SCLCS} framework for ranking-preserving core-set selection.
Contrasting with selection for single-model classification (top left), our task is to preserve the performance ranking of SPIs (top right). Our method (bottom) achieves this using a Transformer to learn structures, our novel \textbf{SPS} metric to ensure stability, and a density-aware strategy to promote diversity.}
  \label{problem}
\end{figure}

Our theoretical analysis and empirical results on 130 candidate SPIs support our design choices and show consistent improvements over strong baselines. The main contributions are:
(1) We formulate core-set selection for efficient FC operator (SPI) benchmarking as a ranking-preservation problem.
(2) We propose \textbf{SCLCS}, a structure-aware framework for selecting stable and diverse samples for ranking-based benchmarking.
(3) We provide a universal approximation result for continuous SPI mappings (Theorem~\ref{thm:universal}) and introduce \textbf{SPS}, a new use of attention dynamics to quantify structural heterogeneity.
(4) We show that \textbf{SCLCS} enables reliable benchmarking at a fraction of the computational cost, making large-scale comparisons practical.

\section{Related Work}

\textbf{Benchmarking in Functional Connectivity.} Selecting an appropriate SPI (i.e., a network modeling method) is a central challenge in modern FC research. Prior studies show that different SPIs can yield divergent FC topologies and, consequently, different scientific conclusions~\citep{smith2011network,smith2013functional,bobadilla2020measures,mohanty2020rethinking,honari2021evaluating,luppi2024systematic}, contributing to long-standing concerns about reproducibility~\citep{open2015estimating,botvinik2020variability,marek2022reproducible}. Meanwhile, comprehensive libraries such as \texttt{pyspi}\citep{cliff2023unifying}, which include hundreds of SPIs, highlight the methodological richness of the field and magnify the scale of the selection problem\citep{liu2025benchmarking,roell2025measure}. These works motivate systematic benchmarking, but the computational cost of evaluating large SPI suites remains a key practical bottleneck. We address this bottleneck by introducing a core-set selection approach for efficient SPI benchmarking.

\textbf{Core-set Selection.} Core-set selection is a fundamental problem in machine learning~\citep{guo2022deepcore,feldman2020core,ros2019core}. Most score-based~\citep{Coleman2020Selection,feldman2020neural,paul2021deep,toneva2018an} and diversity-based~\citep{sener2018active,xia2022moderate} methods construct proxy datasets for training a single predictive model, which mismatches our ranking-preservation objective over many SPIs. Their criteria are often model-dependent (e.g., EVA~\citep{hong2024evolution}) and typically assume static i.i.d.\ inputs, overlooking the temporal dependencies in fMRI time series from which FC structure is derived. Consequently, selected core-sets may not transfer to ranking-based evaluation over large SPI suites~\citep{liu2024density,lee2024coreset}. Training-acceleration methods~\citep{hong2024diversified,killamsetty2021grad,mirzasoleiman2020coresets,wei2015submodularity} share the same single-model focus and thus do not directly address our setting.

To our knowledge, \textbf{SCLCS} is among the first methods tailored for accelerating benchmarking of FC operators (SPIs) via core-set selection.
It uses the stability of learned synchronization structures during training as a selection criterion, which is particularly natural for neuroimaging.
While related to graph structure learning~\citep{li2023gslb,zhou2023opengsl,zong2024new}, \textbf{SCLCS} treats learned structure as a diagnostic probe rather than an inference output.
We therefore do not review general-purpose structure learning in depth.

\section{Preliminaries}
\label{sec:preliminaries}
\paragraph{Benchmarking FC modeling.} The goal of FC benchmarking is to produce a principled ranking of statistical pairwise interaction (SPI) operators from a set $\mathcal{S}$. For a given fMRI dataset $\mathcal{X}$, where each sample $\mathbf{X} \in \mathcal{X}$ is a matrix in $\mathbb{R}^{N \times T}$, each operator $S \in \mathcal{S}$ maps $\mathbf{X}$ to an FC matrix $S(\mathbf{X}) \in \mathbb{R}^{N \times N}$. An evaluation index, $\mathcal{I}: \mathcal{S} \rightarrow \mathbb{R}$, assigns a score to each SPI under a chosen evaluation protocol. This process induces a ranking over all SPIs in $\mathcal{S}$, which we denote as $\text{Rank}(\mathcal{S}, \mathcal{X})$, to guide the selection of a suitable SPI for subsequent analysis.

\paragraph{Core-set Selection for Benchmarking FC Modeling.} Computing the full-dataset ranking $\text{Rank}(\mathcal{S}, \mathcal{X})$ is often computationally prohibitive. We therefore introduce the task of core-set selection for benchmarking, which seeks to identify a small subset of samples $\mathcal{X}_c\subset \mathcal{X}$ where $|\mathcal{X}_c|\ll |\mathcal{X}|$ that acts as an efficient proxy. Formally, a high-quality core-set is the solution to the following optimization problem:
\begin{equation}
\mathcal{X}_c^* = \operatorname*{argmin}_{\mathcal{X}' \subset \mathcal{X},\, |\mathcal{X}'|=c}
\mathcal{D}\!\bigl(\text{Rank}(\mathcal{S}, \mathcal{X}),\, \text{Rank}(\mathcal{S}, \mathcal{X}')\bigr).
\end{equation}
where $\mathcal{D}(\cdot, \cdot)$ is a ranking discrepancy metric (e.g., based on nDCG@k) and $c$ is the core-set budget. The goal is to preserve the full-dataset ranking while using only $c$ samples.

Directly optimizing this objective is intractable, as it requires exhaustively evaluating an exponential number of subsets, with each evaluation incurring the very computational cost we aim to avoid. We therefore propose a practical proxy: selecting a structurally representative subset. The core hypothesis is that preserving the distribution of functional connectivity structures also preserves the SPI ranking. Our \textbf{SCLCS} framework, detailed next, is designed to find such a subset.

\section{Method}

In this section, we introduce the detailed formulation of the proposed \textbf{SCLCS}. \textbf{SCLCS} consists of four modules: (1) attention-based FC learning, (2) structural perturbation score calculation, (3) structure-aware density-balanced sampling, and (4) contrastive learning.

\subsection{Attention-based FC Learning}
\label{sec:transformer}
To select a rank-preserving core-set, our framework first requires an encoder that can learn a general and expressive representation of each sample's FC structure. The self-attention mechanism within Transformers is a natural candidate for this task, as it can model complex inter-regional relationships~\citep{vaswani2017attention}. However, na\"{\i}ve fusion of multiple attention heads via uniform averaging is insufficient, as it can obscure distinct structural patterns learned by individual heads \textbf{Theorem~\ref{thm:interference}} (proved in \textbf{Appendix~\ref{app:interference}}).

\begin{theorem}[Interference of Averaged Attention]
\label{thm:interference}
Let $\{\mathbf{A}_h\}_{h=1}^H$ be row-stochastic attention matrices.
Assume disjoint structural masks: for each row $i$ there exist pairwise-disjoint sets
$\{S_h^{(i)}\}_{h=1}^H$ such that $\mathbf{A}_h(i,j)=0$ for all $j\notin S_h^{(i)}$.
Let $\bar{\mathbf{A}}:=\tfrac1H\sum_{h=1}^H \mathbf{A}_h$.
Then for every row $i$:
\[
\operatorname{supp}\!\bigl(\bar{\mathbf a}^{(i)}\bigr)=\bigcup_{h=1}^H S_h^{(i)}
\quad\text{and}\quad
\mathcal H\!\bigl(\bar{\mathbf a}^{(i)}\bigr)>\min_{1\le h\le H}\mathcal H\!\bigl(\mathbf a_h^{(i)}\bigr)
\ \text{if } \{\mathbf a_h^{(i)}\}_{h=1}^H \text{ are not all identical.}
\]
In particular, if $H\ge2$, naive averaging expands support beyond any single head's mask and inflates entropy,
blurring head-specific structure.
\end{theorem}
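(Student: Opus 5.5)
The plan is to prove both claims row by row, using only two facts: the mixture $\bar{\mathbf a}^{(i)}=\frac1H\sum_h \mathbf a_h^{(i)}$ has nonnegative entries, and the heads' supports are pairwise disjoint. Fix a row $i$ and write $p_h:=\mathbf a_h^{(i)}$. Each $p_h$ is a probability vector supported inside $S_h:=S_h^{(i)}$.

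\textbf{Support.} Since all entries are nonnegative, $\bar p(j)=\frac1H\sum_h p_h(j)>0$ if and only if $p_h(j)>0$ for some $h$. Hence $\operatorname{supp}(\bar p)=\bigcup_h\operatorname{supp}(p_h)$. To obtain the stated equality with $\bigcup_h S_h$, I will read the hypothesis with $S_h^{(i)}$ taken as the exact support of head $h$ in row $i$, i.e.\ $\operatorname{supp}(p_h)=S_h$. Without that reading the statement only gives the inclusion $\subseteq$, and I will remark on this explicitly. Because each $p_h$ sums to one, each $S_h$ is nonempty. For $H\ge2$, disjointness then gives $\bigcup_h S_h\supsetneq S_{h'}$ for every $h'$, which is the ``expands support'' conclusion.

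\textbf{Entropy.} The key step is the grouping (chain) rule for Shannon entropy. Let $Z$ be uniform on $\{1,\dots,H\}$ and, given $Z=h$, let $J\sim p_h$. Then $J\sim\bar p$. Because the supports are disjoint, $Z$ is a deterministic function of $J$, so $H(Z\mid J)=0$. Therefore
\[
\mathcal H(\bar p)=H(J)=H(J,Z)=H(Z)+H(J\mid Z)=\log H+\frac1H\sum_{h=1}^H\mathcal H(p_h).
\]
This identity can also be checked directly: on $S_h$ we have $\bar p(j)=p_h(j)/H$, so $-\sum_j \bar p(j)\log\bar p(j)=\sum_h\frac1H\bigl(\mathcal H(p_h)+\log H\bigr)$. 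From the identity,
\[
\mathcal H(\bar p)\ge \log H+\min_h\mathcal H(p_h),
\]
since an average dominates its minimum. This is strictly greater than $\min_h\mathcal H(p_h)$ whenever $H\ge2$.

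The condition ``not all identical'' needs one more observation, and I expect this to be the only delicate point. When $H\ge2$, two heads with disjoint nonempty supports can never be identical, so the condition is equivalent to $H\ge2$ and the strict inequality holds. When $H=1$, the heads are trivially all identical, so the claim is vacuous. I will state this equivalence explicitly so that the hypothesis is seen to be consistent rather than an extra restriction. The quantitative gap $\log H$ then makes ``inflates entropy'' precise.
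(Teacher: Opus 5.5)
Your proof is correct. The support part matches the paper's argument essentially verbatim, including the caveat that equality with $\bigcup_h S_h^{(i)}$ holds only if the masks are read as exact supports; otherwise only $\subseteq$ holds.

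The entropy part takes a different route. The paper invokes strict concavity of Shannon entropy and the equality case of Jensen's inequality. This gives $\mathcal H(\bar p)\ge\frac1H\sum_h\mathcal H(p_h)\ge\min_h\mathcal H(p_h)$, strict unless all $p_h$ coincide, and disjointness is used only to rule out identical heads. You instead use the chain rule to obtain the exact identity $\mathcal H(\bar p)=\log H+\frac1H\sum_h\mathcal H(p_h)$.

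The paper's argument is more general. It gives strict entropy inflation for any non-identical heads, even with overlapping supports, which is the natural reading of the ``not all identical'' clause. Yours relies on disjointness throughout but is sharper. In general the Jensen gap equals the mutual information $I(Z;J)$ between head index and attended token. Your identity shows that this gap attains its maximum value $\log H$ exactly when supports are disjoint. That gives a quantitative inflation of at least $\log H$ over the least-entropic head, and it makes visible that the ``not all identical'' hypothesis is equivalent to $H\ge2$ under the mask assumption. It also avoids appealing to the equality case of Jensen.

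One cosmetic point: you use $H$ both for the number of heads and for the entropy of random variables. Writing $\mathcal H(Z)$ and $\mathcal H(J\mid Z)$ would avoid the clash with $\log H$.
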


\textbf{Theorem~\ref{thm:interference}} pertains strictly to the internal attention maps and explains the empirical failure of directly applying the traditional Transformer on core-set selection for benchmarking FC modeling. It motivates us to propose an adaptive fusion mechanism that aggregates head-specific attention matrices via learnable weights. This modification is not merely an engineering choice: we prove it endows the architecture with the power of a universal approximator for the class of continuous FC operators as formalized in \textbf{Theorem~\ref{thm:universal}} (proved in \textbf{Appendix~\ref{app:universal}}). This provides a theoretical foundation for its ability to capture the diverse synchronization patterns required for our benchmarking task.

\begin{theorem}[Universal Approximation of Continuous Stochastic SPIs%
\footnote{This statement is for continuous targets on compact domains. If an SPI uses discrete thresholds (hard masks), the guarantee applies to any continuous relaxation (e.g., finite-temperature softmax / sigmoid gates) and then a limiting argument is required to justify the hard-threshold limit.}]
\label{thm:universal}
Let $\mathcal{X}\subset\mathbb{R}^{N\times T}$ be compact.
Let $S:\mathcal{X}\to\Delta^{N-1\times N}$ be continuous, where
$\Delta^{N-1\times N}:=\{\mathbf{P}\in\mathbb{R}^{N\times N}:\mathbf{P}\mathbf{1}=\mathbf{1},\ \mathbf{P}\ge 0\}$ denotes the set of row-stochastic matrices.
Consider the adaptive multi-head attention family
\begin{equation}
\label{eq:universal_family}
\mathbf{A}_\theta(\mathbf{X})
\;:=\;
\sum_{h=1}^{H}\alpha_h\,
\operatorname{softmax}\!\Bigl(
\tfrac{\mathbf{X}\mathbf{W}_h^{Q}(\mathbf{W}_h^{K})^{\!\top}\mathbf{X}^{\!\top}}{\tau}
\Bigr),
\qquad
\boldsymbol{\alpha}\in\Delta^{H-1},\ \tau>0,
\end{equation}
where $\operatorname{softmax}$ is applied row-wise.
Then for every $\varepsilon>0$ there exist $H$, $\tau$, and parameters
$\{\mathbf{W}_h^{Q},\mathbf{W}_h^{K}\}_{h=1}^{H}$ and $\boldsymbol{\alpha}$
such that
\begin{equation}
    \sup_{\mathbf{X}\in\mathcal{X}}
    \bigl\|\mathbf{A}_\theta(\mathbf{X})-S(\mathbf{X})\bigr\|_{F}
    <\varepsilon.
\end{equation}

\end{theorem}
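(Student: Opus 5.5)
Before attempting a constructive proof, I would test the statement against the symmetries of the family in \eqref{eq:universal_family}. For that family I expect the theorem, as stated, to be false without further assumptions. My plan is therefore to establish this obstruction first and then prove a corrected version.

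\emph{Step 1 (equivariance obstruction).} Let $\mathbf{P}$ be any permutation matrix. The logits of head $h$ are $\mathbf{X}\mathbf{W}_h^{Q}(\mathbf{W}_h^{K})^{\top}\mathbf{X}^{\top}/\tau$, and row-wise softmax commutes with simultaneous row and column permutation. Hence every member of the family satisfies $\mathbf{A}_\theta(\mathbf{P}\mathbf{X})=\mathbf{P}\mathbf{A}_\theta(\mathbf{X})\mathbf{P}^{\top}$, whatever the values of $H$, $\tau$, the weights and $\boldsymbol\alpha$.

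Now take $N\ge3$ and a compact $\mathcal X$ containing some $\mathbf{X}_0$ whose rows $1$ and $2$ coincide. With $\mathbf{P}$ the transposition of indices $1,2$, we have $\mathbf{P}\mathbf{X}_0=\mathbf{X}_0$. Equivariance then forces $\mathbf{A}_\theta(\mathbf{X}_0)(i,1)=\mathbf{A}_\theta(\mathbf{X}_0)(i,2)$ for all $i\ge3$.

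Consider the continuous row-stochastic target $S(\mathbf{X})\equiv\mathbf{1}\mathbf{e}_1^{\top}$, which puts all mass on column $1$. For every $\theta$ and $i\ge 3$, the entries $(i,1),(i,2)$ of $\mathbf{A}_\theta(\mathbf{X}_0)-S(\mathbf{X}_0)$ are $a-1$ and $a$ for some $a$. This gives
\[
\bigl\|\mathbf{A}_\theta(\mathbf{X}_0)-S(\mathbf{X}_0)\bigr\|_F\ \ge\ (a-1)^2+a^2\ \ge\ 1/2 ,
\]
since $(a-1)^2+a^2\ge 1/2$ for every real $a$ and $\sqrt{s}\ge s$ when $s\le 1$. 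The claimed uniform approximation therefore fails for $\varepsilon<1/2$.

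\emph{Step 2 (repairing the hypotheses).} I would restate the theorem under one of two additional assumptions:
\begin{enumerate}
\item[(a)] $S$ is permutation-equivariant and $\mathcal X$ consists of matrices with pairwise distinct rows; or
\item[(b)] each row of $\mathbf{X}$ is augmented with a fixed one-hot positional code, as ROI identities are in practice.
\end{enumerate}
Under (b) the head logit takes the form $\mathbf{x}_i^{\top}\mathbf{M}_h\mathbf{x}_j + b_h(i,j)+(\text{cross terms})$. This lets each head realize an arbitrary fixed bias pattern over index pairs.

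\emph{Step 3 (approximation under the repaired hypotheses).} By uniform continuity on the compact $\mathcal X$, I would first reduce to approximating $S$ on a finite $\delta$-net $\{\mathbf{X}^{(m)}\}$, combined with a continuity modulus. The intended construction takes $\tau\to0$ so that each head behaves like a hard selector $i\mapsto j_h(i,\mathbf{X})$, and realizes row $i$ of $S(\mathbf{X})$ as an empirical distribution over head selections with equal weights $\alpha_h=1/H$, in the spirit of quantizing each probability vector to multiples of $1/H$. Making the selected column depend on $\mathbf{X}$ in the required way would use the bilinear logits to separate net points.

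\emph{Main obstacle.} The difficulty is that $\boldsymbol\alpha$ is input-independent and the logits are only bilinear in row pairs. It is therefore unclear that the selections can track an arbitrary continuous $\mathbf{X}\mapsto S(\mathbf{X})$, since the normalization supplies only weak dependence on the other rows. If this step cannot be completed, I would strengthen the architecture, either with nonlinear feature maps $\phi(\mathbf{x}_i)$ inside the logits or with input-dependent gates $\alpha_h(\mathbf{X})$. With such a modification the result follows from a partition-of-unity argument combined with the classical universal approximation theorem for MLPs.
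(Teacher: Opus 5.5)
Your Step 1 is correct, and it refutes the statement as written. The singleton $\mathcal{X}=\{\mathbf{X}_0\}$ already suffices, and you do not need $N\ge 3$. Equal rows $1,2$ of $\mathbf{X}_0$ make columns $1,2$ of every logit matrix $\mathbf{X}_0\mathbf{W}_h^{Q}(\mathbf{W}_h^{K})^{\top}\mathbf{X}_0^{\top}$ coincide, so $\mathbf{A}_\theta(\mathbf{X}_0)(i,1)=\mathbf{A}_\theta(\mathbf{X}_0)(i,2)$ for every $i$. The cleaner bound is $\|\mathbf{A}_\theta(\mathbf{X}_0)-S(\mathbf{X}_0)\|_F\ge\sqrt{(a-1)^2+a^2}\ge 1/\sqrt{2}$.

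Consequently the paper's proof cannot be right, and it fails exactly where your obstruction bites. It reduces the problem to sequence-to-sequence approximation and cites an attention-only universality theorem. In its Step 3 it then asserts that the approximant ``can be chosen'' inside the family $\mathbf{A}_\theta$, because target and family are both row-stochastic. That inference is unsupported. The cited result, as the paper itself describes it, concerns sequence outputs of attention modules with linear projections, not the attention-weight matrix of a single layer. Sharing a codomain says nothing about reachability. In the reduction, the $N$ columns of $S(\mathbf{X})$ are treated as free output coordinates. In the family they are tied to token identities, and that is what produces your identity $\mathbf{A}_\theta(\mathbf{P}\mathbf{X})=\mathbf{P}\mathbf{A}_\theta(\mathbf{X})\mathbf{P}^{\top}$.

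The gap in your proposal is in Steps 2--3: neither repair makes the statement true for this family. Your ``main obstacle'' is therefore an impossibility, not a missing construction.

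\textbf{Repair (a).} Suppose $\mathcal{X}$ contains a matrix with pairwise distinct rows, one of which is $\mathbf{0}$. That row of every head is uniform, so the equivariant target $S\equiv\mathbf{I}$ stays at Frobenius distance at least $\sqrt{(N-1)/N}$.

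\textbf{Both repairs.} Take $N=2$ and a nonconstant affine segment $t\mapsto\mathbf{X}(t)$, $t\in[0,1]$, with distinct rows and disjoint from its image under the swap. Entry $(1,1)$ of head $h$ is $\sigma(\ell_{11}-\ell_{12})$, with $\sigma$ the logistic function. Here $\ell_{11}-\ell_{12}$ is a polynomial of degree at most $2$ in $t$; one-hot codes only add terms affine in $t$. So this entry is monotone on at most two subintervals and has total variation at most $2$. A convex combination over heads keeps total variation at most $2$, for every $H$, $\tau$, $\mathbf{W}_h^{Q}$, $\mathbf{W}_h^{K}$, $\boldsymbol\alpha$. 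Now take a continuous target whose $(1,1)$ entry along the segment is $\tfrac12+\tfrac12\sin(100t)$, extended by $S(\mathbf{P}\mathbf{X}):=\mathbf{P}S(\mathbf{X})\mathbf{P}^{\top}$ if you want equivariance. It oscillates between $0$ and $1$ about thirty times, so any approximant within $1/4$ would need total variation far above $2$.

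Hence no single-layer bilinear-softmax mixture with input-independent $\boldsymbol\alpha$ approximates arbitrary continuous $S$. A true statement must do one of two things. It can restrict the targets, e.g.\ to constant $S$, which your quantized hard-selector construction does handle under (b). Or it can change the architecture as in your fallback. In that case the fallback must keep the symmetry breaking of (b), since input-dependent gates or features $\phi(\mathbf{x}_i)$ alone still give equal columns at $\mathbf{X}_0$. Either way you would be proving a different theorem, and it should be presented as such rather than as a corrected version of this one.
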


Our implementation is as follows: for each fMRI sample \( \mathbf{X} \in \mathbb{R}^{N \times T} \), we treat the $N$ ROIs as input tokens, where each token has a feature dimension of $T$. Each attention head independently projects queries and keys using learnable linear maps, parameterized by matrices \(\mathbf{W}_h^Q\) and \(\mathbf{W}_h^K\):
\begin{equation}
    \mathbf{Q}_h = \mathbf{X} \mathbf{W}_h^Q, \quad
\mathbf{K}_h = \mathbf{X} \mathbf{W}_h^K, \quad
\mathbf{W}_h^Q, \mathbf{W}_h^K \in \mathbb{R}^{D \times d}, \quad h=1,\dots,H.
\end{equation}
The attention matrix from head \( h \) is computed as:
\begin{equation}
    \mathbf{A}_h = \text{softmax} \left( \frac{\mathbf{Q}_h \mathbf{K}_h^\top}{\sqrt{d}} \right), \quad \mathbf{A}_h \in \mathbb{R}^{N \times N}.
\end{equation}

Motivated by \textbf{Theorem~\ref{thm:interference}}, uniform head averaging is a
structural constraint: $\alpha_h\equiv 1/H$ forces every sample to use the same
(high-entropy) centroid of head-wise attention patterns.
By strict concavity of Shannon entropy, this averaging inflates uncertainty and
smears head-specific structure.
Learnable fusion weights $\boldsymbol\alpha$ relax the constraint, enabling
sparse/peaked mixtures (up to single-head selection) to reduce interference
while still combining complementary patterns.
Thus the operator class strictly expands: by \textbf{Theorem~\ref{thm:universal}},
an adaptive fusion module can approximate continuous FC operator on compact
domains.

Thus, we propose a learnable fusion mechanism that aggregates head-specific attention matrices via adaptive weights, formulated as:
\begin{equation}
\label{eq:adaptive_attn}
\mathbf{A} = \sum_{h=1}^H \boldsymbol{\alpha}_h \mathbf{A}_h, \quad \text{with} \quad \sum_{h=1}^H \boldsymbol{\alpha}_h = 1, \quad \boldsymbol{\alpha}_h \ge 0,
\end{equation}
where the weight \( \boldsymbol{\alpha} \) is normalized via \texttt{softmax}. The resulting matrix \( \mathbf{A} \in \mathbb{R}^{N \times N} \) serves as the operational definition of a sample's FC structure, forming the basis for our selection criteria. Importantly, we treat these attention maps as a normalized structural probe---a sample-specific proxy for synchronization structure---not as an attempt to replicate the raw outputs of any particular SPI.

\subsection{Structural Perturbation Score (SPS)}
\label{sec:sps}

Having established an encoder that is expressive enough to capture diverse FC structures, the next challenge is to define a criterion for identifying the most fundamental samples for a robust benchmark. Our central hypothesis is that samples representing common, foundational connectivity patterns will induce stable structural representations during training, while noisy or atypical samples will cause greater fluctuations. To quantify this phenomenon, we propose the \textbf{S}tructural \textbf{P}erturbation \textbf{S}core (\textbf{SPS}), a metric grounded in the principle that perturbation magnitude reflects structural heterogeneity (\textbf{Proposition~\ref{prop:mixture_sps}}). 

\begin{proposition}[Mixture-driven perturbation magnitude]
\label{prop:mixture_sps}
Let $S^{(1)},\dots,S^{(K)}\in\mathbb R^{N\times N}$ be distinct prototypes and
$D_{kl}:=\|S^{(k)}-S^{(l)}\|_F^2$ ($D_{kl}>0$ for $k\neq l$).
Let $(Z_e)_{e\ge 1}$ be i.i.d.\ with $\Pr[Z_e=S^{(k)}]=\lambda_k$,
$\sum_k\lambda_k=1$, and define $\Delta_e:=\|Z_e-Z_{e-1}\|_F^2$.

Then
\begin{equation}
\label{eq:mixture_exact}
\mathbb E[\Delta_e]
=\sum_{k,l}\lambda_k\lambda_l D_{kl}
=2\sum_{k<l}\lambda_k\lambda_l D_{kl}.
\end{equation}
With $D_{\min}:=\min_{k<l}D_{kl}$ and $D_{\max}:=\max_{k<l}D_{kl}$,
\begin{equation}
\label{eq:mixture_bounds}
D_{\min}\Bigl(1-\sum_k\lambda_k^2\Bigr)
\le \mathbb E[\Delta_e]
\le
D_{\max}\Bigl(1-\sum_k\lambda_k^2\Bigr).
\end{equation}
In particular, $\mathbb E[\Delta_e]$ scales with the Gini impurity
$1-\sum_k\lambda_k^2$ up to constants set by prototype separation.
If $D_{kl}\equiv D$ for all $k\neq l$, then
\begin{equation}
\label{eq:mixture_isotropic}
\mathbb E[\Delta_e]=D\Bigl(1-\sum_k\lambda_k^2\Bigr).
\end{equation}
\textup{(Proof in Appendix~\ref{app:mixture_sps}.)}
\end{proposition}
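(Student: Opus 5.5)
The plan is a direct computation from independence, followed by elementary bounds. Independence is also needed for $e\ge 2$; for $e=1$, $Z_0$ must be defined. I will read the statement as requiring $e\ge 2$, or equivalently assume $Z_0$ is an independent copy with the same law. Under that reading, $(Z_e,Z_{e-1})$ are independent with common law $\sum_k\lambda_k\delta_{S^{(k)}}$, so the pair takes the value $(S^{(k)},S^{(l)})$ with probability $\lambda_k\lambda_l$.

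\textbf{Step 1 (exact formula).} Condition on the pair of prototypes. Since $\Delta_e=\|S^{(k)}-S^{(l)}\|_F^2=D_{kl}$ on the event $\{Z_e=S^{(k)},Z_{e-1}=S^{(l)}\}$, the law of total expectation over this finite partition gives $\mathbb E[\Delta_e]=\sum_{k,l}\lambda_k\lambda_l D_{kl}$. Two facts give the second equality in \eqref{eq:mixture_exact}: $D_{kk}=0$, and $D_{kl}=D_{lk}$ by symmetry of the Frobenius norm. Together they collapse the double sum to $2\sum_{k<l}\lambda_k\lambda_l D_{kl}$.

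\textbf{Step 2 (bounds).} For all $k<l$ the weights satisfy $\lambda_k\lambda_l\ge 0$, and the distances satisfy $D_{\min}\le D_{kl}\le D_{\max}$. Hence
\[
D_{\min}\cdot 2\sum_{k<l}\lambda_k\lambda_l\le \mathbb E[\Delta_e]\le D_{\max}\cdot 2\sum_{k<l}\lambda_k\lambda_l .
\]
The key identity is $1=(\sum_k\lambda_k)^2=\sum_k\lambda_k^2+2\sum_{k<l}\lambda_k\lambda_l$. It gives $2\sum_{k<l}\lambda_k\lambda_l=1-\sum_k\lambda_k^2$, which is the Gini impurity, and \eqref{eq:mixture_bounds} follows. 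The isotropic case \eqref{eq:mixture_isotropic} is then immediate: either set $D_{\min}=D_{\max}=D$, or substitute directly into Step 1.

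\textbf{Main obstacle.} There is no real analytic difficulty here. The only point needing care is the indexing issue at $e=1$ noted above, and I would flag that assumption explicitly. The degenerate case $K=1$ also needs a remark, since $D_{\min}$ and $D_{\max}$ are then undefined; in that case both sides equal $0$ trivially.
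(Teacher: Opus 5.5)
Your proposal is correct and follows essentially the same route as the paper's proof. Both expand $\mathbb E[\Delta_e]$ over the $K^2$ prototype pairs using independence, collapse the double sum via $D_{kk}=0$ and $D_{kl}=D_{lk}$, and obtain the bounds from $D_{\min}\le D_{kl}\le D_{\max}$ together with $2\sum_{k<l}\lambda_k\lambda_l=1-\sum_k\lambda_k^2$. Your remarks on the undefined $Z_0$ (which requires $e\ge 2$ or an independent copy $Z_0$) and on the degenerate case $K=1$ are valid points that the paper's proof passes over silently.
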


\textbf{Proposition~\ref{prop:mixture_sps}} indicates samples that are a purer representation of a single archetype will be more stable (low \textbf{SPS}). The \textbf{SPS} for a sample $\mathbf{X} \in \mathcal{X}$ is thus defined as the cumulative structural instability across $L$ training epochs:

\begin{equation}
\label{eq:sps_epoch}
\mathrm{SPS}(\mathbf{X}) = \frac{1}{L} \sum_{e=1}^{L} \left\| \mathbf{A}^{(e)}_{(\mathbf{X})} - \mathbf{A}^{(e-1)}_{(\mathbf{X})} \right\|_F^2,
\end{equation}

where \( \mathbf{A}^{(e)}_{(\mathbf{X})} \) denotes the attention-based structure matrix of sample \( \mathbf{X} \in \mathcal{X} \) at training epoch \( e \in \{1, \ldots, L\} \) and $||\cdot||_{F}$ denotes the Frobenius norm. Specifically, $\mathbf{A}^{(e)}_{(\mathbf{X})}$ is a proxy for structural representation, not a proposed network model. In this way, \textbf{SPS} captures the structural volatility of the sample-specific synchronization graph during training, not fidelity to any specific SPI. 
Our rationale is that a robust and reliable benchmark is built upon foundational, structurally stable samples. As supported by \textbf{Proposition~\ref{prop:mixture_sps}}, low-\textbf{SPS} samples exhibit less internal structural conflict and thus represent stable archetypes of functional connectivity. Therefore, our primary selection strategy is to rank samples by their \textbf{SPS} and select those with the lowest \textbf{SPS}.

For \textbf{SPS} to be a reliable metric, however, we must ensure that it is a consistent estimator that does not depend on the arbitrary length of the training process. \textbf{Lemma~\ref{lem:sps_consistency}} (in \textbf{Appendix~\ref{app:consist}}) provides this theoretical guarantee: Under stable perturbation dynamics, the assumptions of \textbf{Lemma~\ref{lem:sps_consistency}} are satisfied. 
We used extensive grid search to find a configuration that achieves optimal performance on the downstream ranking preservation task as detailed in \textbf{Appendix~\ref{app:repro}}.
Notably, the assumptions of that configuration's stationarity and ergodicity in \textbf{Lemma~\ref{lem:sps_consistency}} are empirically supported by our convergence analysis in \textbf{Appendix~\ref{app:staconv}}, where we show that the perturbation dynamics stabilize as the model converges. This standard procedure sufficiently validates the feasibility of \textbf{SPS}.

\subsection{Structure-aware Density-Balanced Sampling}
\label{sec:structure_aware_sampling}

While selecting for structurally stable (low-\textbf{SPS}) samples provides a robust foundation, a na\"{\i}ve top-$k$ selection risks creating a core-set with low diversity by over-selecting from dense clusters of typical patterns. This lack of diversity can cause the core-set benchmark to diverge from the full-dataset ranking (\textbf{Theorem~\ref{thm:unbalanced_topk}}).

\begin{theorem}[Persistent bias of top-$k$ selection]
\label{thm:unbalanced_topk}
Let $\mathcal X$ contain two clusters $C_p,C_q$ with proportions $\pi_p,\pi_q$.
Given $\mathbf x\in C_r$ ($r\in\{p,q\}$), let the score $s(\mathbf x)$ have
continuous CDF $F_r$, and assume scores are independent across samples.
Select the $k=\lfloor \rho N\rfloor$ samples with the smallest scores ($\rho\in(0,1)$),
and write $\widehat\pi_r:=|S_k\cap C_r|/k$.

Let $\tau$ satisfy the mixture-quantile equation
\begin{equation}
\label{eq:tau_def}
\pi_p F_p(\tau)+\pi_q F_q(\tau)=\rho,
\end{equation}
and assume strict separation at $\tau$:
\begin{equation}
\label{eq:cdf_gap}
\gamma:=F_p(\tau)-F_q(\tau)>0.
\end{equation}
Then
\begin{equation}
\widehat\pi_p \xrightarrow{\Pr} \frac{\pi_p F_p(\tau)}{\rho}
      = \pi_p + \delta,
\qquad
\widehat\pi_q \xrightarrow{\Pr} \frac{\pi_q F_q(\tau)}{\rho}
      = \pi_q - \delta,
\qquad
\delta:=\frac{\pi_p\pi_q}{\rho}\gamma>0.
\label{eq:limit_bias}
\end{equation}
Consequently, the representation error
$\Delta_k:=|\widehat\pi_p-\pi_p|+|\widehat\pi_q-\pi_q|$
satisfies $\Delta_k\xrightarrow{\Pr}2\delta>0$.
\textup{(See \textbf{Appendix~\ref{app:proof_unbalanced_topk}}.)}
\end{theorem}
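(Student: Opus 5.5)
The plan is to treat top-$k$ selection as thresholding at the empirical $\rho$-quantile of the pooled scores, show that this random threshold converges to $\tau$, and then read off the cluster proportions from per-cluster empirical CDFs evaluated at that threshold. We work in the regime $N\to\infty$ with cluster sizes $n_r=\pi_r N$ fixed proportionally (or $n_r/N\to\pi_r$). We also assume the mixture CDF $G:=\pi_pF_p+\pi_qF_q$ is strictly increasing at $\tau$, so that $\tau$ is the unique solution of \eqref{eq:tau_def}. If only the stated assumptions are available, we take $\tau$ to be the $\rho$-quantile and use continuity.

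\textbf{Step 1 (per-cluster empirical CDFs).} Define $\widehat F_r(t):=n_r^{-1}\sum_{\mathbf x\in C_r}\mathbbm 1\{s(\mathbf x)\le t\}$. By independence and the Glivenko--Cantelli theorem, $\sup_t|\widehat F_r(t)-F_r(t)|\to0$ almost surely for each $r$. Consequently the pooled empirical CDF $\widehat G(t):=N^{-1}\sum_{\mathbf x}\mathbbm 1\{s(\mathbf x)\le t\}=\tfrac{n_p}{N}\widehat F_p+\tfrac{n_q}{N}\widehat F_q$ converges uniformly to $G$.

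\textbf{Step 2 (the threshold converges to $\tau$).} Let $\widehat\tau_k$ be the $k$-th smallest score. Continuity of $F_r$ means ties occur with probability zero, so $S_k=\{\mathbf x: s(\mathbf x)\le\widehat\tau_k\}$ and $\widehat G(\widehat\tau_k)=k/N\to\rho$. Fix $\eta>0$. Strict monotonicity of $G$ at $\tau$ gives $G(\tau-\eta)<\rho<G(\tau+\eta)$, and uniform convergence then forces $\widehat\tau_k\in(\tau-\eta,\tau+\eta)$ with probability tending to one. Hence $\widehat\tau_k\xrightarrow{\Pr}\tau$. This is the main technical point: the selection threshold is random and depends on every score, so the selected samples are not independent. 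We avoid dealing with that dependence directly by routing everything through uniform convergence of the empirical CDFs rather than any conditional argument.

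\textbf{Step 3 (limits of the proportions).} We have $|S_k\cap C_r|/N=\tfrac{n_r}{N}\widehat F_r(\widehat\tau_k)$. Write
\[
|\widehat F_r(\widehat\tau_k)-F_r(\tau)|\le\sup_t|\widehat F_r-F_r|+|F_r(\widehat\tau_k)-F_r(\tau)|.
\]
The first term vanishes by Step 1 and the second by continuity of $F_r$ together with Step 2, so $\widehat F_r(\widehat\tau_k)\xrightarrow{\Pr}F_r(\tau)$. Dividing by $k/N\to\rho$ yields $\widehat\pi_r\xrightarrow{\Pr}\pi_rF_r(\tau)/\rho$.

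\textbf{Step 4 (algebraic identification of $\delta$).} Using $\pi_p+\pi_q=1$ and \eqref{eq:tau_def},
\[
\pi_pF_p(\tau)-\pi_p\rho=\pi_p\bigl(F_p(\tau)-\pi_pF_p(\tau)-\pi_qF_q(\tau)\bigr)=\pi_p\pi_q\bigl(F_p(\tau)-F_q(\tau)\bigr)=\pi_p\pi_q\gamma.
\]
Dividing by $\rho$ gives $\pi_pF_p(\tau)/\rho=\pi_p+\delta$. The $q$-limit follows the same way, or simply because the two proportions sum to one. Positivity of $\delta$ follows from \eqref{eq:cdf_gap} and $\pi_p,\pi_q>0$. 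Finally, $\Delta_k\xrightarrow{\Pr}|\delta|+|-\delta|=2\delta$ by the continuous mapping theorem.
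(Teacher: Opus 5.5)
Your proof is correct and follows the paper's skeleton: the population quantile $\tau$, convergence of the empirical quantile $s_{(k)}$ to $\tau$ under strict monotonicity of $G$, per-cluster proportions, and then identical algebra for $\delta$. Where you differ is the key passage from $s_{(k)}\to\tau$ to $|S_k\cap C_r|/n_r\to F_r(\tau)$. The paper conditions on $s_{(k)}$ and treats the indicators $\mathbbm{1}\{s(\mathbf x)\le s_{(k)}\}$ as i.i.d.\ Bernoulli$(F_p(s_{(k)}))$, then applies the law of large numbers and Slutsky. That conditional claim is not actually true: given the pooled order statistic, the indicators are coupled, since they sum to exactly $k$ and one score equals $s_{(k)}$. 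Your decomposition $|\widehat F_r(\widehat\tau_k)-F_r(\tau)|\le\sup_t|\widehat F_r-F_r|+|F_r(\widehat\tau_k)-F_r(\tau)|$, combined with Glivenko--Cantelli, is unaffected by the data-dependence of the threshold. It is therefore the rigorous form of the paper's step, at no extra cost. You can also sharpen your vague fallback (``use continuity''). Because $F_p,F_q$ are nondecreasing, $|F_r(t)-F_r(\tau)|\le|G(t)-G(\tau)|/\pi_r$ for every $t$. Also $|G(\widehat\tau_k)-\rho|\le\sup_t|\widehat G-G|+|k/N-\rho|$, using $\widehat G(\widehat\tau_k)=k/N$. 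Hence $F_r(\widehat\tau_k)\xrightarrow{\Pr}F_r(\tau)$ without ever showing $\widehat\tau_k\to\tau$. This removes the strict-monotonicity assumption that both you and the paper impose.
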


To explicitly balance stability with diversity, we introduce a density-aware sampling scheme, yielding the \textbf{SCLCS}\textsubscript{Dense} variant. This scheme first ensures robustness by retaining a pool of the most stable samples (the bottom $1-\beta$ quantile of \textbf{SPS} scores), then promotes diversity by applying Kernel Density Estimation (KDE)~\citep{wkeglarczyk2018kernel} to up-weight samples from sparser regions within that stable pool. This mitigates redundancy and ensures the core-set captures a broader range of structurally distinct subtypes, which is crucial for including less common but potentially critical neural patterns often associated with clinical biomarkers.

Specifically, given the set of \textbf{SPS} for all samples in $\mathcal{X}$, we first discard the top $\beta$ quantile of the most unstable samples to form a stable candidate pool $\tilde{\mathcal{X}}$. This is formally defined as:
\begin{equation}
\tilde{\mathcal{X}} = \{ \mathbf{X} \in \mathcal{X} \mid \textbf{SPS}(\mathbf{X}) \leq Q_{1-\beta} \},
\end{equation}
where \( Q_{1-\beta} \) is the empirical quantile. On $\tilde{\mathcal{X}}$, we fit a Gaussian KDE to the empirical distribution of
$\{\mathrm{SPS}(\mathbf{X}) : \mathbf{X}\in\tilde{\mathcal{X}}\}$, and define the
local density of a sample as the KDE evaluated at its \textbf{SPS}:
\begin{equation}
\hat{p}_{\mathrm{SPS}}(s) = \texttt{KDE}\bigl(\{\mathrm{SPS}(\mathbf{X})\}_{\mathbf{X}\in\tilde{\mathcal{X}}}\bigr), 
\qquad
\rho(\mathbf{X}) = \hat{p}_{\mathrm{SPS}}\!\bigl(\mathrm{SPS}(\mathbf{X})\bigr).
\end{equation}
To promote diversity, weights are set inversely proportional to \( \rho(\mathbf{X}) \) and normalized over $\tilde{\mathcal{X}}$:
\begin{equation}
w(\mathbf{X}) = \frac{1}{\rho(\mathbf{X}) + \epsilon}, \qquad
w(\mathbf{X}) \leftarrow \frac{w(\mathbf{X})}{\sum_{\mathbf{X}'\in\tilde{\mathcal{X}}} w(\mathbf{X}')}.
\end{equation}

We then select \( m \) samples without replacement from \( \tilde{\mathcal{X}} \) using weights \( \{w(\mathbf{X})\} \). This yields a structurally diverse subset that up-weights samples in low-density regions of the \textbf{SPS} distribution within the stable pool. Theoretical guarantees on coverage and benchmarking consistency are demonstrated in \textbf{Theorem~\ref{thm:eps_cover}} and \textbf{Theorem~\ref{thm:discrep}}, respectively (provided and proved in \textbf{Appendix~\ref{app:cover}} and \textbf{Appendix~\ref{app:discrep}}, respectively). Notably, this scheme is applicable to other score-based methods by replacing \textbf{SPS} to different metrics as shown in \textbf{Appendix~\ref{app:density-bal}}.

\subsection{Structure-aware Contrastive Learning}
\label{sec:contrastive}
To learn structural representations, we train the encoder with a structure-aware contrastive objective.
Motivated by FC evaluation practices that exploit inter-subject differences~\citep{liu2025benchmarking,luppi2024systematic}, we enforce consistency among samples drawn from the same subject within a scan session.
This identity-supervised setup encourages the model to capture stable, person-specific traits (“brain fingerprints”~\citep{lu2024brain}), providing a task-agnostic signal for structure-based selection.

First, to obtain a graph-level embedding for each sample, we compute node-level embeddings $\mathbf{Z} \in \mathbb{R}^{N \times d}$ by applying the learned attention matrix to the value embeddings and projecting the result through a final linear layer. A global mean pooling is then applied to obtain the graph-level embedding $\mathbf{z} \in \mathbb{R}^{d}$, which captures the sample's global topological semantics and serves as input to our contrastive loss:
\begin{equation}
\label{eq:contrastive}
\mathcal{L}_{\text{contrast}} = \frac{1}{|\mathcal{P}|} \sum_{(i,j) \in \mathcal{P}}
- \log \frac{
\exp\left( \text{sim}(\mathbf{z}_i, \mathbf{z}_j) / \tau \right)
}{
\sum_{k \in \mathcal{N}(i)} \exp\left( \text{sim}(\mathbf{z}_i, \mathbf{z}_k) / \tau \right)
},
\end{equation}
where $\text{sim}(\cdot,\cdot)$ is cosine similarity, $\tau$ is a temperature parameter, and the pairs are defined across a batch. Positive pairs $(i,j) \in \mathcal{P}$ consist of two different temporal segments from the same subject, while for a given anchor sample $i$, the negative sample $k \in \mathcal{N}(i)$ are all other samples in the batch from different subjects. All the trainable parameters are optimized using Adam~\citep{kingma2014adam}.

\section{Experiment}\label{sec:exp}
In this section, we present empirical results to validate our proposed framework. We first detail the experimental settings and then present the main quantitative and qualitative results comparing \textbf{SCLCS} to SOTA baselines. Due to space constraints, several important settings and supplementary analyses, including detailed experimental settings for reproduction (\textbf{Appendix~\ref{app:repro}}), a detailed computational cost breakdown (\textbf{Appendix~\ref{app:comps}}), the effect of supervised information on baselines (\textbf{Appendix~\ref{app:labelef}}), the application of our density-aware sampling to other baselines (\textbf{Appendix~\ref{app:density-bal}}), empirical results (\textbf{Appendix~\ref{app:theorem_exp}}) for supporting \textbf{Theorem~\ref{thm:universal}} and the assumptions in \textbf{Lemma~\ref{lem:sps_consistency}}, and additional generalization and robustness analysis (\textbf{Appendix~\ref{app:GandR}}).

\subsection{Experimental Settings}\label{settings}
\paragraph{Data}
We validate our framework on the REST-meta-MDD dataset~\citep{yan2019reduced}, a large-scale, multi-site resting-state fMRI collection comprising 1,642 subjects from 17 sites. This highly heterogeneous collection was released with a standardized preprocessing pipeline, providing a rigorous testbed for core-set selection. For efficiency and consistency with prior work, we focus on a subset of 904 subjects~\citep{long2020altered}. To capture dynamic patterns, each subject's fMRI record is segmented into overlapping temporal samples via a sliding window, yielding 4,520 samples. Demographic statistics are in Table~\ref{tab:site-summary}. Complete data and preprocessing details are in \textbf{Appendix~\ref{app:dp}}.

\begin{table}[!h]
\centering
\caption{Summary of the used subset of REST-meta-MDD.}
\label{tab:site-summary}
\resizebox{0.9\textwidth}{!}{
\begin{tabular}{c|c|c|c|c|c|c|c}
\toprule
Site & \#Samples &  \#HC &  \#MDD &  \#Male &  \#Female & Age Range & Education Range (Years) \\
\midrule
15   & 335 &   37 &    30 &     26 &       41 &    19--65 &                   5--21 \\
17   & 410 &  41 &    41 &     27 &       55 &    18--30 &                   9--17 \\
19   & 245 &  31 &    18 &     19 &       30 &    18--51 &                   5--15 \\
20   & 2395 & 229 &   250 &    157 &      322 &    18--65 &                   3--20 \\
21   & 720 &  65 &    79 &     62 &       82 &    18--65 &                   5--15 \\
22   & 190 &  20 &    18 &     21 &       17 &    19--47 &                   8--17 \\
23   & 225 &  23 &    22 &     18 &       27 &    19--54 &                   6--20 \\
Overall & 4520 & 458 & 446 & 330 & 574 & 18--65 & 3--21\\
\bottomrule
\end{tabular}
}
\end{table}

\paragraph{Baselines}
Following the experimental setup in~\citet{hong2024evolution}, we extend their comparison with two additions (k-Means and BOSS), evaluating against 9 baselines in total: (1) Random; (2) k-Means~\citep{hartigan1979algorithm}; (3) Forgetting score~\citep{toneva2018an}; (4) Entropy~\citep{Coleman2020Selection}; (5) EL2N~\citep{paul2021deep}; (6) AUM~\citep{pleiss2020identifying}; (7) CCS~\citep{zheng2023coveragecentric}; (8)EVA~\citep{hong2024evolution};(9) BOSS~\citep{acharya2024balancing}. The latter $7$ of them are SOTA methods designed for core-set selection. Detailed introduction is summarized in \textbf{Appendix~\ref{app:bias}}.

\paragraph{Environment}
Experiments are performed on an 8-GPU (H20) high-performance computing cluster provided by the Large-scale Instrument Sharing Platform of Southwest University.

\paragraph{Evaluation Protocol}
Our goal is to select a subset that preserves model ranking, not to train a single predictive model. Therefore, our evaluation deviates from the standard train/test split. The protocol is as follows: (1) Each method selects a core-set of a given size from the entire dataset. (2) We then compute the SPI performance ranking on both the full dataset (ground truth) and the selected core-set. (3) The quality of the core-set is measured by the consistency between these two rankings.

\paragraph{Task}
We use two distinct downstream tasks to evaluate SPI discriminability: brain fingerprinting (distinguishing individuals based on subject ID), which probes for fine-grained, subject-specific structures, and MDD diagnosis, which relies on cohort-level patterns.

\paragraph{Metrics}
We use two primary metrics. (1) \textbf{Discriminability Score}, a metric based on Spearman's rank correlation~\citep{sedgwick2014spearman} that quantifies the class separability (within- vs. between-class) of the resulting FC matrices. (2) \textbf{Ranking Consistency}, the concordance between the full-dataset and core-set SPI rankings using nDCG@\(5/10/20\)~\citep{wang2013theoretical}.

\subsection{Quantitative Results}\label{quanti}
We present the primary quantitative results in Table~\ref{tab:bfrt} and Table~\ref{tab:MRT}. The evaluation includes \textbf{SCLCS} (using low-\textbf{SPS} top-$k$ selection), \textbf{SCLCS}\textsubscript{Dense} (density-aware selection), and \textbf{SPS}\textsubscript{MHA} (a variant using na\"{\i}ve attention averaging to empirically test \textbf{Theorem~\ref{thm:interference}}).

\paragraph{Brain Fingerprinting}

\begin{table}[!h]
\centering
\caption{Performance of different methods on brain fingerprinting ranking task (mean $\pm$ std) and nDCG@k is reported as percentage ($\times$100).}\label{tab:bfrt}
\resizebox{\textwidth}{!}{
\begin{tabular}{l|ccc|ccc|ccc}
\toprule
\multirow{2}{*}{Method} & \multicolumn{3}{c|}{\textbf{nDCG@5}} & \multicolumn{3}{c|}{\textbf{nDCG@10}} & \multicolumn{3}{c}{\textbf{nDCG@20}} \\ \cmidrule(lr){2-10}
& 0.1 & 0.3 & 0.5 & 0.1 & 0.3 & 0.5 & 0.1 & 0.3 & 0.5 \\
\midrule
Random & 15.17$_{\pm\text{13.97}}$ & \underline{69.57}$_{\pm\text{24.69}}$ & \underline{66.60}$_{\pm\text{20.21}}$ & 17.97$_{\pm\text{17.50}}$ & \underline{71.19}$_{\pm\text{20.93}}$ & \underline{66.98}$_{\pm\text{19.47}}$ & 21.97$_{\pm\text{21.91}}$ & \underline{71.63}$_{\pm\text{21.27}}$ & \underline{68.13}$_{\pm\text{17.80}}$ \\ 
k-Means & 17.32$_{\pm\text{11.53}}$ & 65.72$_{\pm\text{12.25}}$ & 67.29$_{\pm\text{11.41}}$ & 21.23$_{\pm\text{15.32}}$ & 64.35$_{\pm\text{14.83}}$ & 63.37$_{\pm\text{9.92}}$ & 22.15$_{\pm\text{15.19}}$ & 62.58$_{\pm\text{19.22}}$ & 66.78$_{\pm\text{15.28}}$ \\\midrule
Forgetting & 14.41$_{\pm\text{7.84}}$ & 54.43$_{\pm\text{13.26}}$ & 43.36$_{\pm\text{3.35}}$ & 15.49$_{\pm\text{7.83}}$ & 48.57$_{\pm\text{7.85}}$ & 49.87$_{\pm\text{4.66}}$ & 20.23$_{\pm\text{7.21}}$ & 55.80$_{\pm\text{6.72}}$ & 49.70$_{\pm\text{4.60}}$ \\
Entropy & 47.40$_{\pm\text{40.26}}$ & 22.84$_{\pm\text{13.20}}$ & 59.95$_{\pm\text{26.99}}$ & 37.73$_{\pm\text{25.94}}$ & 32.05$_{\pm\text{15.22}}$ & 58.68$_{\pm\text{23.71}}$ & 36.05$_{\pm\text{18.11}}$ & 35.90$_{\pm\text{16.19}}$ & 57.72$_{\pm\text{21.84}}$ \\
El2N & 35.56$_{\pm\text{36.16}}$ & 20.30$_{\pm\text{5.77}}$ & 31.03$_{\pm\text{34.57}}$ & 36.51$_{\pm\text{25.38}}$ & 33.18$_{\pm\text{14.68}}$ & 32.70$_{\pm\text{30.84}}$ & 33.30$_{\pm\text{21.55}}$ & 35.82$_{\pm\text{12.10}}$ & 40.06$_{\pm\text{32.08}}$ \\
AUM & \underline{65.92}$_{\pm\text{33.80}}$ & 56.68$_{\pm\text{11.11}}$ & 38.17$_{\pm\text{13.94}}$ & \underline{60.95}$_{\pm\text{30.91}}$ & 62.05$_{\pm\text{4.91}}$ & 36.83$_{\pm\text{8.78}}$ & \underline{51.75}$_{\pm\text{22.42}}$ & 59.09$_{\pm\text{5.72}}$ & 38.34$_{\pm\text{6.88}}$ \\
CCS & 1.90$_{\pm\text{2.07}}$ & 30.53$_{\pm\text{14.15}}$ & 46.65$_{\pm\text{22.32}}$ & 2.92$_{\pm\text{3.24}}$ & 29.13$_{\pm\text{8.00}}$ & 51.78$_{\pm\text{24.12}}$ & 16.24$_{\pm\text{13.34}}$ & 32.56$_{\pm\text{14.15}}$ & 52.18$_{\pm\text{20.20}}$ \\
EVA & 38.40$_{\pm\text{40.57}}$ & 62.03$_{\pm\text{26.64}}$ & 37.80$_{\pm\text{42.45}}$ & 43.37$_{\pm\text{19.92}}$ & 55.01$_{\pm\text{20.05}}$ & 49.56$_{\pm\text{33.28}}$ & 43.22$_{\pm\text{15.28}}$ & 53.51$_{\pm\text{14.70}}$ & 65.49$_{\pm\text{21.99}}$ \\
BOSS & 15.98$_{\pm\text{25.58}}$ & 42.11$_{\pm\text{24.33}}$ & 35.36$_{\pm\text{9.21}}$ & 29.44$_{\pm\text{11.05}}$ & 40.57$_{\pm\text{23.37}}$ & 39.15$_{\pm\text{6.71}}$ & 31.45$_{\pm\text{9.45}}$ & 38.24$_{\pm\text{19.65}}$ & 38.92$_{\pm\text{5.97}}$ \\ \midrule
\textbf{SCLCS} & \textbf{81.21}$_{\pm\text{2.86}}$ & 50.24$_{\pm\text{13.16}}$ & \textbf{72.68}$_{\pm\text{20.83}}$ & \textbf{66.54}$_{\pm\text{1.10}}$ & 49.45$_{\pm\text{14.18}}$ & \textbf{71.86}$_{\pm\text{4.35}}$ & \textbf{57.46}$_{\pm\text{0.52}}$ & 53.40$_{\pm\text{16.27}}$ & \textbf{70.13}$_{\pm\text{1.57}}$ \\
\textbf{SCLCS}$_\text{Dense}$ & 35.73$_{\pm\text{31.18}}$ & \textbf{79.18}$_{\pm\text{6.29}}$ & 51.54$_{\pm\text{13.83}}$ & 35.84$_{\pm\text{28.43}}$ & \textbf{73.45}$_{\pm\text{1.42}}$ & 50.54$_{\pm\text{15.16}}$ & 41.03$_{\pm\text{35.89}}$ & \textbf{72.96}$_{\pm\text{3.35}}$ & 55.43$_{\pm\text{13.64}}$ \\
\textbf{SPS}$_\text{MHA}$ & 1.32$_{\pm\text{2.07}}$ & 12.23$_{\pm\text{5.11}}$ & 15.62$_{\pm\text{6.33}}$ & 2.92$_{\pm\text{1.13}}$ & 13.12$_{\pm\text{11.33}}$ & 11.28$_{\pm\text{4.32}}$ & 1.21$_{\pm\text{1.04}}$ & 12.13$_{\pm\text{3.17}}$ & 12.18$_{\pm\text{7.23}}$ \\
\bottomrule
\end{tabular}
}
\end{table}

This task rewards subject-specific patterns, which aligns with our identity-supervised objective. As shown in Table~\ref{tab:bfrt}, \textbf{SCLCS} achieves stronger performance with lower variance at sampling ratios 0.1 and 0.5, suggesting that selecting structurally stable samples via low-\textbf{SPS} top-$k$ ranking is effective. In contrast, \textbf{SPS}\textsubscript{MHA} performs poorly, consistent with \textbf{Theorem~\ref{thm:interference}}. At the moderate ratio 0.3, \textbf{SCLCS} degrades, whereas \textbf{SCLCS}\textsubscript{Dense} performs best. This pattern supports \textbf{Theorem~\ref{thm:unbalanced_topk}}: when stability ranking alone is insufficient, explicitly promoting structural diversity provides a corrective signal.

\paragraph{MDD Diagnosis}
\begin{table}[ht]
\centering
\caption{Performance of different methods on MDD diagnosis ranking task (mean $\pm$ std) and nDCG@k is reported as percentage ($\times$100).}\label{tab:MRT}
\resizebox{\textwidth}{!}{
\begin{tabular}{l|ccc|ccc|ccc}
\toprule
\multirow{2}{*}{Method} & \multicolumn{3}{c|}{\textbf{nDCG@5}} & \multicolumn{3}{c|}{\textbf{nDCG@10}} & \multicolumn{3}{c}{\textbf{nDCG@20}} \\
\cmidrule(lr){2-10}
& 0.1 & 0.3 & 0.5 & 0.1 & 0.3 & 0.5 & 0.1 & 0.3 & 0.5 \\
\midrule
Random & 49.58$_{\pm\text{33.78}}$ & 23.77$_{\pm\text{35.85}}$ & 31.76$_{\pm\text{15.83}}$ & \underline{60.50}$_{\pm\text{33.36}}$ & 25.81$_{\pm\text{36.19}}$ & 77.95$_{\pm\text{11.77}}$ & 67.04$_{\pm\text{27.03}}$ & 30.30$_{\pm\text{39.67}}$ & 82.71$_{\pm\text{8.59}}$ \\
k-Means & 51.32$_{\pm\text{17.47}}$ & 32.45$_{\pm\text{33.46}}$ & 37.33$_{\pm\text{17.20}}$ & 30.18$_{\pm\text{12.65}}$ & 40.47$_{\pm\text{14.81}}$ & 43.75$_{\pm\text{22.84}}$ & 28.62$_{\pm\text{14.33}}$ & 37.87$_{\pm\text{14.77}}$ & 79.83$_{\pm\text{5.24}}$ \\\midrule
Forgetting & 28.70$_{\pm\text{28.06}}$ & 41.60$_{\pm\text{28.08}}$ & 61.42$_{\pm\text{4.31}}$ & 40.56$_{\pm\text{28.06}}$ & 50.27$_{\pm\text{32.37}}$ & 66.72$_{\pm\text{8.60}}$ & 44.37$_{\pm\text{28.07}}$ & 57.02$_{\pm\text{33.39}}$ & 75.70$_{\pm\text{7.58}}$ \\
Entropy & 29.00$_{\pm\text{45.10}}$ & 31.84$_{\pm\text{46.19}}$ & 57.48$_{\pm\text{29.81}}$ & 29.17$_{\pm\text{43.24}}$ & 40.75$_{\pm\text{44.05}}$ & 63.23$_{\pm\text{23.44}}$ & 30.30$_{\pm\text{43.52}}$ & 45.85$_{\pm\text{45.00}}$ & 70.79$_{\pm\text{18.00}}$ \\
El2N & 30.93$_{\pm\text{41.17}}$ & 51.70$_{\pm\text{28.14}}$ & 68.85$_{\pm\text{12.46}}$ & 36.05$_{\pm\text{24.41}}$ & 58.54$_{\pm\text{27.55}}$ & 72.96$_{\pm\text{15.77}}$ & 41.04$_{\pm\text{39.61}}$ & 64.68$_{\pm\text{26.25}}$ & 78.85$_{\pm\text{14.56}}$ \\
AUM & 36.07$_{\pm\text{12.59}}$ & 42.68$_{\pm\text{46.16}}$ & 35.49$_{\pm\text{14.37}}$ & 39.17$_{\pm\text{14.58}}$ & 44.41$_{\pm\text{44.05}}$ & 58.14$_{\pm\text{16.54}}$ & 44.94$_{\pm\text{18.48}}$ & 48.44$_{\pm\text{42.48}}$ & 64.36$_{\pm\text{14.40}}$ \\
CCS & \underline{55.95}$_{\pm\text{16.28}}$ & 59.92$_{\pm\text{30.08}}$ & 74.73$_{\pm\text{12.46}}$ & 59.25$_{\pm\text{16.26}}$ & 67.01$_{\pm\text{24.61}}$ & 75.41$_{\pm\text{18.08}}$ & \underline{68.98}$_{\pm\text{15.67}}$ & 71.62$_{\pm\text{21.15}}$ & 78.77$_{\pm\text{13.49}}$ \\
EVA & 31.80$_{\pm\text{17.70}}$ & 66.81$_{\pm\text{9.12}}$ & 70.07$_{\pm\text{6.05}}$ & 36.11$_{\pm\text{14.79}}$ & 72.61$_{\pm\text{8.73}}$ & 76.26$_{\pm\text{6.17}}$ & 48.39$_{\pm\text{19.48}}$ & 75.34$_{\pm\text{8.48}}$ & 81.73$_{\pm\text{5.61}}$ \\
BOSS & 42.44$_{\pm\text{24.37}}$ & 57.52$_{\pm\text{20.11}}$ & \underline{79.57}$_{\pm\text{16.62}}$ & 50.64$_{\pm\text{25.12}}$ & 64.86$_{\pm\text{21.86}}$ & \underline{84.70}$_{\pm\text{14.07}}$ & 58.11$_{\pm\text{23.22}}$ & 71.36$_{\pm\text{18.94}}$ & \underline{88.95}$_{\pm\text{9.49}}$ \\ \midrule
\textbf{SCLCS} & 48.38$_{\pm\text{23.00}}$ & \underline{70.27}$_{\pm\text{23.27}}$ & 64.18$_{\pm\text{25.98}}$ & 50.59$_{\pm\text{21.72}}$ & \underline{73.86}$_{\pm\text{15.67}}$ & 66.15$_{\pm\text{21.96}}$ & 61.12$_{\pm\text{16.75}}$ & \underline{76.89}$_{\pm\text{9.86}}$ & 68.43$_{\pm\text{17.52}}$ \\
\textbf{SCLCS}$_\text{Dense}$ & \textbf{57.29}$_{\pm\text{24.07}}$ & \textbf{74.62}$_{\pm\text{18.02}}$ & \textbf{81.87}$_{\pm\text{9.68}}$ & \textbf{64.34}$_{\pm\text{16.97}}$ & \textbf{77.52}$_{\pm\text{17.18}}$ & \textbf{86.25}$_{\pm\text{7.22}}$ & \textbf{69.84}$_{\pm\text{14.71}}$ & \textbf{82.70}$_{\pm\text{11.04}}$ & \textbf{89.45}$_{\pm\text{6.81}}$ \\
\textbf{SPS}$_\text{MHA}$ &19.13$_{\pm\text{15.36}}$ & 26.82$_{\pm\text{16.34}}$ & 27.45$_{\pm\text{19.31}}$ & 20.13$_{\pm\text{12.27}}$ & 22.75$_{\pm\text{14.33}}$ & 23.19$_{\pm\text{13.22}}$ & 25.73$_{\pm\text{11.92}}$ & 17.85$_{\pm\text{12.03}}$ & 20.73$_{\pm\text{14.90}}$ \\
\bottomrule
\end{tabular}
}
\end{table}

This cohort-level task requires broader structural coverage than fingerprinting. As shown in Table~\ref{tab:MRT}, \textbf{SCLCS}\textsubscript{Dense} achieves superior performance with lower variance across sampling ratios and evaluation depths, highlighting the benefit of density-aware sampling for capturing diverse patterns in group-comparison benchmarking. In this setting, the standard \textbf{SCLCS} (top-$k$) is less effective, suggesting that the best sampling strategy depends on the task’s structural demands. Finally, simpler heuristics such as k-Means and class-imbalance–prone criteria such as Entropy (Figure~\ref{balance}) perform less competitively, reinforcing the need for structure-aware selection.

The behavior of the Random baseline suggests that ranking-preserving selection differs from traditional single-model core-set objectives, and that methods designed for the latter may not transfer well.
The non-monotonic trend (30\% < 10\%) observed for \textbf{SCLCS} and \textbf{SCLCS}\textsubscript{Dense} is consistent with \textbf{Theorem~\ref{thm:unbalanced_topk}}, indicating that naïve score-based top-$k$ sampling can be brittle: it may over-represent dense clusters of typical patterns while missing rarer but important structures. This motivates \textbf{SCLCS}\textsubscript{Dense}, which mitigates this failure mode via density-aware sampling.

\paragraph{Empirical Validation of Theorem~\ref{thm:universal}}

\begin{table}[h]
\centering
\caption{Empirical validation of \textbf{Theorem 2}. Our modified Transformer approximates a diverse set of SPIs. The model demonstrates effective fitting and generalization.}
\label{tab:theorem2_validation}
\resizebox{\textwidth}{!}{%
\begin{tabular}{l|c|c|c||l|c|c|c}
\toprule
\textbf{SPI Operator} & \textbf{Train MSE (Start)} & \textbf{Train MSE (End)} & \textbf{Test MSE} & \textbf{SPI Operator} & \textbf{Train MSE (Start)} & \textbf{Train MSE (End)} & \textbf{Test MSE} \\
\midrule
pec\_orth & 0.0605 & 0.0584 & 0.0584 & plv\_multitaper\_mean & 0.3122 & 0.0588 & 0.0585 \\
phase\_multitaper\_mean & 0.0295 & 0.0264 & 0.0265 & cohmag\_multitaper\_max & 0.5721 & 0.0055 & 0.0057 \\
pli\_multitaper\_mean & 0.0239 & 0.0221 & 0.0222 & te\_kernel & 0.8513 & 0.0181 & 0.0182 \\
wpli\_multitaper\_mean & 0.0237 & 0.0221 & 0.0222 & bary\_euclidean\_max & 1.9840 & 0.1164 & 0.1153 \\
psi\_wavelet\_max & 6.0959 & 3.5522 & 3.5789 & xme\_gaussian & 0.9223 & 0.0155 & 0.0158 \\
ppc\_multitaper\_mean & 0.2082 & 0.1391 & 0.1382 & je\_kernel & 7.5831 & 0.1021 & 0.1025 \\
gwtau & 6.4630 & 1.8087 & 1.8124 & ce\_kernel & 0.9243 & 0.0141 & 0.0144 \\
icoh\_multitaper\_mean & 0.0797 & 0.0206 & 0.0206 & lcss\_constraint & 0.2608 & 0.0020 & 0.0020 \\
\bottomrule
\end{tabular}%
}
\end{table}

To empirically test the approximation capacity implied by \textbf{Theorem~\ref{thm:universal}}, we train our modified Transformer to approximate the FC matrices produced by 16 representative SPI operators selected from the taxonomy of~\citet{cliff2023unifying}. For each target SPI, we train a separate model on fMRI time series by minimizing the mean squared error (MSE) to the SPI-generated FC matrices. Table~\ref{tab:theorem2_validation} shows low final test MSE across all 16 targets, indicating that the model can closely approximate a diverse set of SPIs. Approximation fidelity varies by SPI, but the overall trend supports \textbf{Theorem~\ref{thm:universal}}.

Together, \textbf{Theorem~\ref{thm:universal}} and Table~\ref{tab:theorem2_validation} suggest that the architecture is expressive enough to serve as a structural probe for fMRI time series. Small approximation errors (e.g., imperfect emulation of discrete statistical tests) need not invalidate our stability signal: \textbf{SPS} aims to distinguish structurally stable samples (low \textbf{SPS}) from unstable ones (high \textbf{SPS}), rather than to maximize approximation fidelity. Quantifying how approximation error propagates to ranking preservation is an important direction for future work. Full details and convergence curves are provided in \textbf{Appendix~\ref{convuni}}.

\paragraph{Sample Coverage Balance Analysis}
\begin{figure}[!h]
  \centering
  \includegraphics[width=\linewidth]{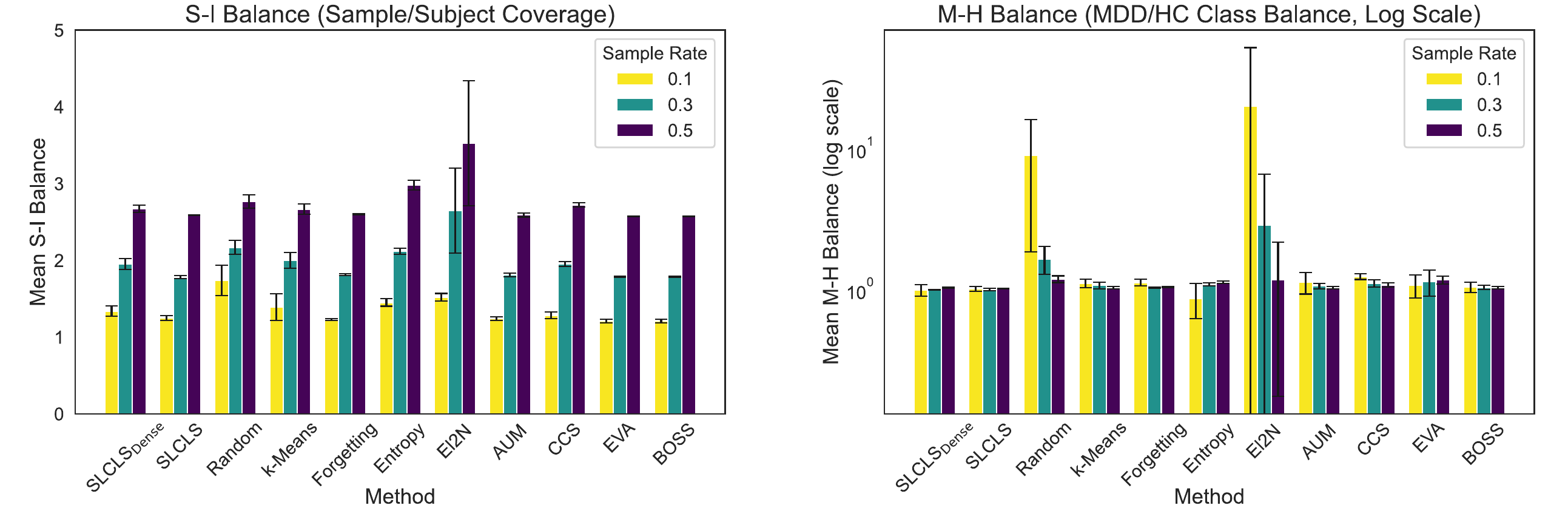}
  \caption{Sample coverage balance on subjects and MDD/HC of baselines.}
  \label{balance}
\end{figure}

Beyond performance, we assess the reliability and representativeness of selection by analyzing sample coverage balance (Figure~\ref{balance}). We introduce two metrics: \emph{S-I Balance} (selected samples per subject, lower indicates broader subject coverage) and \emph{M-H Balance} (MDD-to-HC ratio; deviation from 1 indicates class imbalance). As shown in Figure~\ref{balance}, \textbf{SCLCS} and \textbf{SCLCS\textsubscript{Dense}} maintain balanced coverage with low variance on both metrics, whereas several baselines are unstable. Entropy is particularly sensitive to class-label effects, selecting almost exclusively MDD subjects at low ratios. This produces a skewed, unrepresentative core-set and can make downstream benchmarking misleading, highlighting the risk of naïve score-based selection.

\subsection{Qualitative Results: Visualizing SPS Dynamics}
\label{sec:qualitative_analysis}

To provide an intuitive check of the \textbf{SPS} metric, we visualize the evolution of the learned attention map
$\mathbf{A}_{(\mathbf{X})}^e$ over the early training epochs $e$.
While the results in \textbf{Appendix~\ref{app:theorem_exp}} indicate that attention maps typically stabilize after $\sim 50$ epochs,
Figure~\ref{fig:sps_dynamics} shows that the stabilization dynamics vary across samples:
a low-\textbf{SPS} sample (top row) rapidly converges to a stable structural pattern,
whereas a high-\textbf{SPS} sample (bottom row) exhibits sustained fluctuations.
These observations suggest that \textbf{SPS} reflects a stable, sample-specific property rather than transient optimization noise,
supporting its use for identifying foundational samples for benchmarking.

\begin{figure}[h!]
    \centering
    \includegraphics[width=\linewidth]{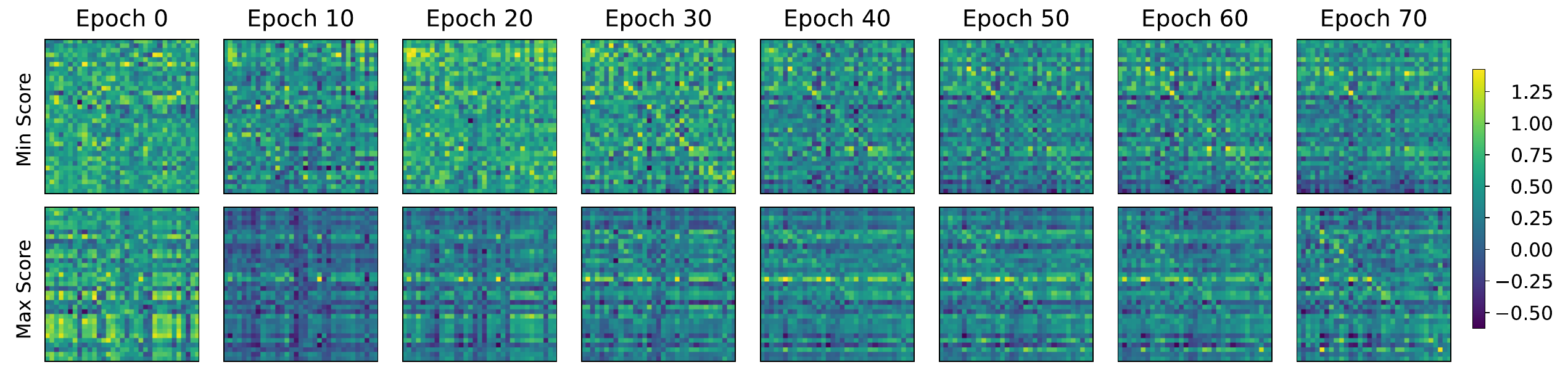} 
    \caption{The evolution of the learned attention map $\textbf{A}_{(\textbf{X})}^e$ across training epochs.}
    \label{fig:sps_dynamics}
\end{figure}

\section{Conclusion}

In this work, we address the computational bottleneck of large-scale FC operator (SPI) benchmarking by casting core-set selection as a ranking-preservation task. Our key technical contributions are: (1) A modified Transformer architecture with a universal approximation guarantee for continuous SPI mappings under our assumptions. (2) The \textbf{SPS} metric to identify structurally stable samples. (3) The \textbf{SCLCS} framework, which outperforms $9$ baselines in ranking-preservation evaluation. By accelerating FC benchmarking, \textbf{SCLCS} makes large-scale, pre-analysis SPI comparisons practical and supports more reproducible computational neuroscience.

\section*{Acknowledgements}
This work is supported by Natural Science Foundation of China (No. 62402398, No. 72374173), the Fundamental Research Funds for the Central Universities (No. SWU-XDJH202303, No. SWU-KR24025), the Technological Innovation and Application Development Project of Chongqing (No. CSTB2025TIAD-KPX0027), University Innovation Research Group of Chongqing (No. CXQT21005), China Scholarship Council (CSC) program (No. 202406990056) and Chongqing Graduate Research Innovation Project (CYB240088). The experiments are supported by the High Performance Computing clusters and the Large-Scale Instrument Sharing Platform (H20 GPU Server) at Southwest University.

\section*{Reproducibility Statement}
\label{repro_state}
To ensure our research is fully reproducible, we have made our code, data sources, and experimental details available as follows:

\begin{itemize}
    \item \textbf{Code:} The complete source code for our SCLCS framework and all experiments is publicly available on \url{https://github.com/lzhan94swu/SCLCS}.

    \item \textbf{Dataset and Preprocessing:} The REST-meta-MDD~\citep{yan2019reduced} dataset is publicly accessible at \url{https://rfmri.org/REST-meta-MDD}. Our detailed data preprocessing pipeline is described in \textbf{Appendix~\ref{app:dp}}.

    \item \textbf{External Libraries:} Our analysis relies on the \texttt{pyspi} library~\citep{cliff2023unifying}, which is publicly available at \url{https://github.com/DynamicsAndNeuralSystems/pyspi}. The specific criteria used to select SPIs for our benchmark are detailed in \textbf{Appendix~\ref{app:comps}}.

    \item \textbf{Experimental Settings:} A summary of the experimental setup is presented in \textbf{Section~\ref{settings}}, with a comprehensive breakdown of all parameters and configurations available in \textbf{Appendix~\ref{app:repro}}.

    \item \textbf{Theoretical Proofs:} Complete proofs for all theorems, propositions, and lemmas presented in this paper can be found in \textbf{Appendix~\ref{app:interference}–\ref{app:discrep}}.
\end{itemize}

\section*{Ethics Statement}
Our work provides a framework for evaluating and selecting computational models used in neuroimaging analysis, which can be applied to clinical tasks such as disease diagnosis. Therefore, it is important to consider the potential societal impacts of the models ultimately chosen via our benchmarking process. A potential negative impact could arise if a core-set, while efficient, is not perfectly representative of the full dataset's diversity, leading to the selection of a model that is biased or performs suboptimally on underrepresented demographic or clinical groups. To mitigate this, we emphasize that our framework is a tool for pre-clinical scientific validation. Any model selected using our approach for real-world medical scenarios must undergo its own rigorous, independent clinical validation, and the final diagnostic decision must always remain with a qualified physician.

\bibliography{iclr2026_conference}
\bibliographystyle{iclr2026_conference}

\appendix
\renewcommand{\thetable}{A\arabic{table}} 
\renewcommand{\thefigure}{A\arabic{figure}} 
\setcounter{table}{0}
\setcounter{figure}{0}
\newtheorem*{theorem*}{Theorem}

\section{Proof of Theorem~\ref{thm:interference}}
\label{app:interference}

\begin{theorem*}[Interference of Averaged Attention, full version]
Let $\{\mathbf{A}_{h}\}_{h=1}^{H}$ be a collection of row stochastic\footnote{%
Each row is a probability distribution produced by a softmax:
$\mathbf{A}_{h}(i,j)\ge 0$ and $\sum_{j=1}^{N}\mathbf{A}_{h}(i,j)=1$.}
attention matrices with $\mathbf{A}_{h}\in\mathbb{R}^{N\times N}$.
For every row index $i\in\{1,\dots,N\}$ assume there exist sets
$S_{h}^{(i)} \subseteq \{1,\dots,N\}$ that are pairwise disjoint, meaning
$S_{h}^{(i)}\cap S_{h'}^{(i)}=\varnothing$ for all $h\neq h'$,
and such that
\[
    \mathbf{A}_{h}(i,j)=0 \quad \text{for all } j\notin S_{h}^{(i)}.
\]
Define the uniform average $\bar{\mathbf{A}}:=\tfrac1H\sum_{h=1}^{H}\mathbf{A}_{h}$.
Then for every row $i$:
\begin{enumerate}[label=(\alph*)]
    \item \textbf{Support expansion.}
    $\operatorname{supp}(\bar{\mathbf{a}}^{(i)})=\bigcup_{h=1}^H S_h^{(i)}$.
    If $H\ge2$, then for every $h$,
    $\operatorname{supp}(\bar{\mathbf{a}}^{(i)})\not\subseteq S_h^{(i)}$.
    \item \textbf{Entropy inflation.}
    With $\mathcal{H}(\mathbf{p}) := -\sum_{j=1}^{N} p_{j}\log p_{j}$,
    \[
        \mathcal{H}\!\bigl(\bar{\mathbf{a}}^{(i)}\bigr)
        \;\ge\;
        \frac{1}{H}\sum_{h=1}^{H}\mathcal{H}\!\bigl(\mathbf{a}_{h}^{(i)}\bigr)
        \;\ge\;
        \min_{1\le h\le H}\mathcal{H}\!\bigl(\mathbf{a}_{h}^{(i)}\bigr),
    \]
    and strict inequality holds whenever the vectors
    $\{\mathbf{a}_{h}^{(i)}\}_{h=1}^{H}$ are not all identical.
    In particular, under the disjoint mask assumption, strict inequality holds for every $i$ whenever $H\ge2$.
\end{enumerate}
\end{theorem*}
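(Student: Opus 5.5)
The plan is to treat each row $i$ separately, since both claims are row-wise. Fix $i$ and write $\mathbf{a}_h:=\mathbf{a}_h^{(i)}$, $S_h:=S_h^{(i)}$ and $\bar{\mathbf{a}}:=\frac1H\sum_h\mathbf{a}_h$.

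\textbf{Part (a).} All entries are nonnegative, so $\bar{\mathbf{a}}(j)>0$ holds iff $\mathbf{a}_h(j)>0$ for some $h$. Hence $\operatorname{supp}(\bar{\mathbf{a}})=\bigcup_h\operatorname{supp}(\mathbf{a}_h)$. The mask assumption gives $\operatorname{supp}(\mathbf{a}_h)\subseteq S_h$.

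Equality with $\bigcup_h S_h$ needs $\operatorname{supp}(\mathbf{a}_h)=S_h$. I would make this explicit: either take $S_h$ to be the support of $\mathbf{a}_h$ (always a valid choice of disjoint masks), or note that a masked softmax is strictly positive on its mask. I expect this bookkeeping to be the only real obstacle in (a), since the statement is literally false if $S_h$ is chosen larger than the support.

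The non-containment claim is then easy. Each $\mathbf{a}_{h'}$ sums to $1$, so its support is nonempty. For $h'\neq h$ this support lies in $S_{h'}$, which is disjoint from $S_h$. So $\operatorname{supp}(\bar{\mathbf{a}})$ contains an index outside $S_h$.

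\textbf{Part (b).} The general inequality $\mathcal H(\bar{\mathbf{a}})\ge\frac1H\sum_h\mathcal H(\mathbf{a}_h)$ is Jensen's inequality for the concave function $\mathcal H$ on the simplex. The bound by $\min_h$ is trivial, since an average dominates its minimum.

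For strictness when the $\mathbf{a}_h$ are not all identical, I would use strict concavity of $\phi(t)=-t\log t$ coordinatewise. Some coordinate $j$ has non-constant values $\mathbf{a}_h(j)$ across $h$, so Jensen is strict in that coordinate. Summing over coordinates gives strict inequality for $\mathcal H$. This yields strictness over the average, and hence over the minimum.

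Under the disjoint-mask hypothesis with $H\ge2$, I would give a cleaner exact computation that also settles strictness. On $S_h$ we have $\bar{\mathbf{a}}(j)=\mathbf{a}_h(j)/H$, and $\bar{\mathbf{a}}$ vanishes off $\bigcup_h S_h$. Expanding the entropy over these disjoint blocks gives
\[
\mathcal H(\bar{\mathbf{a}})=\log H+\frac1H\sum_{h=1}^H\mathcal H(\mathbf{a}_h),
\]
using $\sum_{j\in S_h}\mathbf{a}_h(j)=1$. Since $\log H>0$ for $H\ge2$, this gives strict inflation above the average, and hence above the minimum. It does so even without assuming the heads differ, although disjoint supports already force them to be non-identical.
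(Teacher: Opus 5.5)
Your proof is correct, and its skeleton matches the paper's. In (a) you identify $\operatorname{supp}(\bar{\mathbf{a}}^{(i)})$ with the union of the head supports. Like the paper, you then need the convention $\operatorname{supp}(\mathbf{a}_h^{(i)})=S_h^{(i)}$ to get equality with $\bigcup_h S_h^{(i)}$. In (b), the general inequality is the same Jensen/strict-concavity argument.

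Two points differ. First, your non-containment argument uses only that $\operatorname{supp}(\mathbf{a}_{h'}^{(i)})$ is nonempty and lies in $S_{h'}^{(i)}$, which is disjoint from $S_h^{(i)}$. It therefore holds without the support-equals-mask convention, whereas the paper deduces non-containment from the union equality.

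Second, in the disjoint-mask case the paper obtains strictness indirectly. Disjoint nonempty supports force the rows to be non-identical, and then strict concavity of $\mathcal H$ applies. Your block computation instead gives the exact identity $\mathcal H(\bar{\mathbf{a}}^{(i)})=\log H+\frac{1}{H}\sum_h\mathcal H(\mathbf{a}_h^{(i)})$. This is more informative: it pins the Jensen gap at exactly $\log H$, independent of the individual head entropies. The gap is the mutual information between the head index and the token index, which is maximal because under disjoint masks the token determines the head. It also needs no strict-concavity argument.

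The paper's strict-concavity route is what covers the general case of non-identical, non-disjoint heads. Your coordinatewise strict-Jensen step handles that case as well, so your proposal loses nothing relative to the paper.
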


\begin{proof}
Fix a row index $i$ and write $\mathbf{p}_h := \mathbf{a}_{h}^{(i)}\in\Delta^{N-1}$.
By definition,
\[
\bar{\mathbf{p}}
:= \bar{\mathbf{a}}^{(i)}
= \frac{1}{H}\sum_{h=1}^{H}\mathbf{p}_h,
\qquad
\bar{\mathbf{p}}_j
= \frac{1}{H}\sum_{h=1}^{H}(\mathbf{p}_h)_j.
\]

\textbf{(a) Support expansion.}
All entries are nonnegative, so $\bar{\mathbf{p}}_j>0$ if and only if there exists a head $h$
with $(\mathbf{p}_h)_j>0$.
Under the masking assumption, $(\mathbf{p}_h)_j>0$ implies $j\in S_h^{(i)}$.
Conversely, since each $\mathbf{p}_h$ is a probability vector supported on $S_h^{(i)}$,
every $j\in S_h^{(i)}$ with nonzero mass in head $h$ contributes a positive term to $\bar{\mathbf{p}}_j$.
Therefore,
\[
\operatorname{supp}(\bar{\mathbf{p}})
= \bigcup_{h=1}^{H}\operatorname{supp}(\mathbf{p}_h)
\subseteq \bigcup_{h=1}^{H} S_h^{(i)}.
\]
Moreover, because $\mathbf{p}_h$ is row stochastic and supported inside $S_h^{(i)}$,
we have $\operatorname{supp}(\mathbf{p}_h)\neq\varnothing$ and $\operatorname{supp}(\mathbf{p}_h)\subseteq S_h^{(i)}$,
so $\operatorname{supp}(\bar{\mathbf{p}})$ equals the union of the head supports and is contained in the union of the masks.
If, as in the theorem statement, the intended structural supports are exactly the mask sets
(that is, the mask defines which indices can receive positive mass),
then $\operatorname{supp}(\mathbf{p}_h)=S_h^{(i)}$ and hence
$\operatorname{supp}(\bar{\mathbf{p}})=\bigcup_{h=1}^{H} S_h^{(i)}$.
When $H\ge2$ and the sets $\{S_h^{(i)}\}_{h=1}^{H}$ are pairwise disjoint, the union strictly contains each $S_h^{(i)}$,
so $\operatorname{supp}(\bar{\mathbf{p}})\not\subseteq S_h^{(i)}$ for every $h$.

\textbf{(b) Entropy inflation.}
The Shannon entropy $\mathcal H$ is strictly concave on the probability simplex.
By Jensen's inequality,
\[
\mathcal{H}(\bar{\mathbf{p}})
\;\ge\;
\frac{1}{H}\sum_{h=1}^{H}\mathcal{H}(\mathbf{p}_h)
\;\ge\;
\min_{1\le h\le H}\mathcal{H}(\mathbf{p}_h),
\]
and the first inequality is strict unless $\mathbf{p}_1=\cdots=\mathbf{p}_H$.
Under the disjoint mask assumption with $H\ge2$, the vectors cannot all be identical:
if $\mathbf{p}_h=\mathbf{p}_{h'}$ for some $h\neq h'$, then their supports coincide and are nonempty,
so $S_h^{(i)}\cap S_{h'}^{(i)}\neq\varnothing$, contradicting disjointness.
Hence $\mathbf{p}_1,\dots,\mathbf{p}_H$ are not all identical, so
$\mathcal{H}(\bar{\mathbf{p}}) > \frac{1}{H}\sum_{h=1}^{H}\mathcal{H}(\mathbf{p}_h)$
and therefore $\mathcal{H}(\bar{\mathbf{p}}) > \min_h \mathcal{H}(\mathbf{p}_h)$.

Together, (a) and (b) show that uniform averaging introduces additional nonzero entries
and increases entropy, which blurs head specific structural patterns.
\end{proof}

\section{Proof of Theorem~\ref{thm:universal}}
\label{app:universal}

\begin{proof}
We use an existing attention-only universal approximation result as the main engine
and then specialize it to row-stochastic matrix-valued targets.

\paragraph{Step 1 (Reduce to sequence-to-sequence approximation).}
View $\mathbf{X}\in\mathbb{R}^{N\times T}$ as a length-$N$ sequence with token dimension $T$.
Likewise, view $S(\mathbf{X})\in\mathbb{R}^{N\times N}$ as a length-$N$ sequence of
$row$ vectors in $\mathbb{R}^{N}$.
Since $\mathcal{X}$ is compact and $S$ is continuous, this is a continuous
sequence-to-sequence map on a compact domain.

\paragraph{Step 2 (Attention-only universality on compact sets).}
Recent results show that (softmax-)attention-only architectures with linear projections
are universal approximators for continuous sequence-to-sequence maps on compact domains~\citep{hu2025universal}.
In particular, attention modules can simulate piecewise-linear bases and hence achieve uniform approximation without requiring feed-forward sublayers.
We invoke such a theorem.

\paragraph{Step 3 (Constrain the output to be row-stochastic).}
Our target operator satisfies $S(\mathbf{X})\in\Delta^{N-1\times N}$ for all $\mathbf{X}$.
The model family in \eqref{eq:universal_family} is also row-stochastic by construction:
each head output is row-stochastic (row-wise softmax), and the convex mixture
with $\boldsymbol{\alpha}\in\Delta^{H-1}$ preserves row-stochasticity.
Therefore the approximating attention-only construction can be chosen to lie entirely
inside $\Delta^{N-1\times N}$.

\paragraph{Step 4 (Uniform approximation in Frobenius norm).}
The cited attention-only universality provides uniform approximation in a sup norm over
the compact domain. Since $\|\cdot\|_F \le \sqrt{N}\|\cdot\|_{\infty,\infty}$,
the same parameter choice yields
\[
\sup_{\mathbf{X}\in\mathcal{X}}
\bigl\|\mathbf{A}_\theta(\mathbf{X})-S(\mathbf{X})\bigr\|_{F}
<\varepsilon
\]
after tightening constants.

This completes the proof.
\end{proof}

\section{Proof of Proposition~\ref{prop:mixture_sps}}
\label{app:mixture_sps}

\begin{proof}
Because $Z_e$ and $Z_{e-1}$ are i.i.d.,
\begin{align}
\mathbb{E}[\Delta_e]
&=\sum_{k=1}^{K}\sum_{l=1}^{K}\Pr[Z_e=S^{(k)},\,Z_{e-1}=S^{(l)}]\,
\|S^{(k)}-S^{(l)}\|_F^2 \notag\\
&=\sum_{k=1}^{K}\sum_{l=1}^{K}\lambda_k\lambda_l D_{kl}\notag,
\end{align}
which gives the first equality in \eqref{eq:mixture_exact}.
Since $D_{kk}=0$ and $D_{kl}=D_{lk}$,
\[
\sum_{k=1}^{K}\sum_{l=1}^{K}\lambda_k\lambda_l D_{kl}
=
2\sum_{k<l}\lambda_k\lambda_l D_{kl},
\]
proving the second equality in \eqref{eq:mixture_exact}.

For the bounds \eqref{eq:mixture_bounds}, note that for all $k<l$,
$D_{\min}\le D_{kl}\le D_{\max}$, hence
\[
2D_{\min}\sum_{k<l}\lambda_k\lambda_l
\;\le\;
2\sum_{k<l}\lambda_k\lambda_l D_{kl}
\;\le\;
2D_{\max}\sum_{k<l}\lambda_k\lambda_l.
\]
Finally,
\[
2\sum_{k<l}\lambda_k\lambda_l
=
\Bigl(\sum_{k=1}^{K}\lambda_k\Bigr)^2-\sum_{k=1}^{K}\lambda_k^2
=
1-\sum_{k=1}^{K}\lambda_k^2,
\]
which is the Gini impurity of $\{\lambda_k\}$.
This yields \eqref{eq:mixture_bounds}.

If $D_{kl}\equiv D$ for all $k\neq l$, then \eqref{eq:mixture_exact} becomes
\[
\mathbb{E}[\Delta_e]
=
2D\sum_{k<l}\lambda_k\lambda_l
=
D\Bigl(1-\sum_{k=1}^{K}\lambda_k^2\Bigr),
\]
establishing \eqref{eq:mixture_isotropic}.
\end{proof}

\section{Lemma~\ref{lem:sps_consistency} and Proof}\label{app:consist}
\begin{lemma}[Consistency of SPS]
\label{lem:sps_consistency}
Let $\{\mathbf{A}^{(e)}_{(\mathbf{X})}\}_{e\ge 0}$ be the sequence of
attention‐based structure matrices for a fixed sample $\mathbf{X}$ generated by a stochastic optimization algorithm.
Assume the sequence of differences
\begin{equation}
\Delta_e(\mathbf{X}) \;=\;
\bigl\|\mathbf{A}^{(e)}_{(\mathbf{X})}-\mathbf{A}^{(e-1)}_{(\mathbf{X})}\bigr\|_F^2
\end{equation}
forms a {\em stationary and ergodic} process with finite mean
$\sigma^{2}(\mathbf{X})=\mathbb{E}[\Delta_e(\mathbf{X})]$.
Then the empirical SPS estimator
\begin{equation}
\widehat{\mathrm{SPS}}_L(\mathbf{X}) \;=\;
\frac{1}{L}\sum_{e=1}^{L}\Delta_e(\mathbf{X})
\end{equation}
converges almost surely to~$\sigma^{2}(\mathbf{X})$ as $L\to\infty$:
\(
\widehat{\mathrm{SPS}}_L(\mathbf{X}) \;
\longrightarrow\;
\sigma^{2}(\mathbf{X})
\quad\text{a.s.}
\)
\end{lemma}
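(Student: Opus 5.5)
The plan is to recognize the statement as a direct instance of Birkhoff's pointwise ergodic theorem, applied to the scalar process $(\Delta_e(\mathbf{X}))_{e\ge1}$. The only work is to put the hypotheses of Lemma~\ref{lem:sps_consistency} into the form that theorem requires.

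First, I would realize the process canonically. Take the sequence space $\Omega=[0,\infty)^{\mathbb N}$ with its product $\sigma$-algebra, the law $\mathbb P$ of $(\Delta_1(\mathbf{X}),\Delta_2(\mathbf{X}),\dots)$, and the left shift $\theta$. Stationarity of $(\Delta_e)$ is exactly the statement that $\theta$ preserves $\mathbb P$. Ergodicity of the process means, by definition, that every $\theta$-invariant event has probability $0$ or $1$. The coordinate map $f(\omega)=\omega_1$ then satisfies $\Delta_e=f\circ\theta^{e-1}$.

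Second, I would check integrability. Each $\Delta_e(\mathbf{X})\ge 0$, being a squared Frobenius norm, and its mean $\sigma^2(\mathbf{X})$ is finite by assumption, so $f\in L^1(\mathbb P)$. In fact boundedness is automatic when the attention maps are row-stochastic as in \eqref{eq:adaptive_attn}. Each row lies in the simplex, so each row difference has squared Euclidean norm at most $2$, which gives $\Delta_e\le 2N$. I would mention this as a remark but not rely on it.

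Third, I would apply Birkhoff's theorem. It gives
\[
\frac1L\sum_{e=1}^{L} f\circ\theta^{e-1}\to \mathbb E[f\mid\mathcal I] \quad\text{a.s.},
\]
where $\mathcal I$ is the invariant $\sigma$-algebra. Ergodicity makes $\mathcal I$ trivial, so the limit equals $\mathbb E[f]=\sigma^2(\mathbf{X})$. Since the left-hand side is exactly $\widehat{\mathrm{SPS}}_L(\mathbf{X})$, this proves the claim.

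I expect the only real obstacle to be bookkeeping rather than analysis. One must make sure the ergodicity assumption is read as ergodicity of the shift on the law of the difference sequence, not of the underlying optimizer's state. One must also note that the randomness comes from the stochastic optimization, so ``a.s.'' refers to that probability space. The transfer to the canonical space is standard, because almost-sure convergence of the averages depends only on the law of $(\Delta_e)$.
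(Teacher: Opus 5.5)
Your proposal is correct and takes the same approach as the paper: the paper's proof also applies Birkhoff's pointwise ergodic theorem directly to the stationary, ergodic, integrable sequence $\Delta_e(\mathbf{X})$ and identifies the almost-sure limit of the averages with $\sigma^2(\mathbf{X})$. Your canonical-space realization and the remark that $\Delta_e\le 2N$ for row-stochastic maps are extra bookkeeping that the paper leaves implicit.
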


\begin{proof}
Because $\{\Delta_e(\mathbf{X})\}$ is assumed stationary and ergodic with finite first moment,
Birkhoff’s pointwise ergodic theorem~\citep{birkhoff1931proof} applies:
\begin{equation}
\frac1L\sum_{e=1}^{L}\Delta_e(\mathbf{X})
\;\xrightarrow{\text{a.s.}}\;
\mathbb{E}[\Delta_e(\mathbf{X})]
\;=\;\sigma^{2}(\mathbf{X}).
\end{equation}
But the left–hand side is precisely $\widehat{\mathrm{SPS}}_L(\mathbf{X})$.
Hence the estimator is strongly consistent.
\end{proof}

\section{Proof of Theorem~\ref{thm:unbalanced_topk}}
\label{app:proof_unbalanced_topk}

\begin{proof}
Write the scores as $\{s_i\}_{i=1}^{N}$ and let $s_{(k)}$ be the $k$-th order statistic.
Equivalently, $S_k=\{\mathbf{x}_i: s_i \le s_{(k)}\}$ up to tie-breaking on a null event
(because the score distributions are continuous).

\paragraph{Step 1: Identify the population selection threshold.}
Let $\tau$ satisfy \eqref{eq:tau_def}.
Intuitively, $\tau$ is the population score threshold whose expected accepted fraction is $\rho$:
\[
\mathbb{E}\Bigl[\frac{1}{N}\sum_{i=1}^{N}\mathbbm{1}\{s_i\le \tau\}\Bigr]
=\pi_p F_p(\tau)+\pi_q F_q(\tau)=\rho.
\]

\paragraph{Step 2: The empirical threshold concentrates.}
Define the empirical accepted fraction at threshold $t$:
\[
G_N(t):=\frac{1}{N}\sum_{i=1}^{N}\mathbbm{1}\{s_i\le t\}.
\]
Because scores are independent across samples and each indicator is bounded,
a standard concentration argument (e.g., Hoeffding) yields that $G_N(t)$
concentrates uniformly on compact intervals around its mean
$G(t):=\pi_p F_p(t)+\pi_q F_q(t)$.
Since $F_p,F_q$ are continuous and $G$ is strictly increasing at $\tau$ (under mild regularity),
it follows that the empirical quantile $s_{(k)}$ converges in probability to $\tau$:
\begin{equation}
\label{eq:quantile_conv}
s_{(k)} \xrightarrow[N\to\infty]{\Pr} \tau.
\end{equation}

\paragraph{Step 3: Selected cluster counts converge to their expectations.}
Conditional on $s_{(k)}$, the number of selected points from cluster $C_p$ is
\[
|S_k\cap C_p|=\sum_{\mathbf{x}\in C_p}\mathbbm{1}\{s(\mathbf{x})\le s_{(k)}\}.
\]
Given $s_{(k)}$, the indicators in the sum are i.i.d.\ Bernoulli with parameter $F_p(s_{(k)})$,
so by the law of large numbers (and Slutsky using \eqref{eq:quantile_conv}),
\begin{equation}
\label{eq:count_conv_p}
\frac{|S_k\cap C_p|}{n_p}\xrightarrow{\Pr} F_p(\tau).
\end{equation}
Similarly,
\begin{equation}
\label{eq:count_conv_q}
\frac{|S_k\cap C_q|}{n_q}\xrightarrow{\Pr} F_q(\tau).
\end{equation}

\paragraph{Step 4: Convert to selected proportions and compute the bias.}
Divide \eqref{eq:count_conv_p} by $k/N\to \rho$:
\[
\widehat{\pi}_p(S_k)
=\frac{|S_k\cap C_p|}{k}
=
\frac{|S_k\cap C_p|/N}{k/N}
=
\frac{(n_p/N)\cdot (|S_k\cap C_p|/n_p)}{k/N}
\xrightarrow{\Pr}
\frac{\pi_p F_p(\tau)}{\rho}.
\]
Using \eqref{eq:tau_def}, we compute
\[
\frac{\pi_p F_p(\tau)}{\rho}-\pi_p
=\pi_p\Bigl(\frac{F_p(\tau)-\rho}{\rho}\Bigr)
=\pi_p\Bigl(\frac{F_p(\tau)-\pi_pF_p(\tau)-\pi_qF_q(\tau)}{\rho}\Bigr)
=\frac{\pi_p\pi_q}{\rho}\bigl(F_p(\tau)-F_q(\tau)\bigr).
\]
Under \eqref{eq:cdf_gap}, this equals $\delta:=\frac{\pi_p\pi_q}{\rho}\gamma>0$,
establishing the first limit in \eqref{eq:limit_bias}.
The statement for $\widehat{\pi}_q(S_k)$ follows since $\widehat{\pi}_q(S_k)=1-\widehat{\pi}_p(S_k)$,
and then $\Delta_k=2|\widehat{\pi}_p(S_k)-\pi_p|\xrightarrow{\Pr}2\delta>0$.
\end{proof}

\section{Theorem~\ref{thm:eps_cover} and Proof}\label{app:cover}
\begin{theorem}[$\varepsilon$-coverage of density-reweighted sampling]
\label{thm:eps_cover}
Fix \(0<\delta<1\) and \(\varepsilon>0\).
Let \(\tilde{\mathcal X}_c\) be the candidate pool with \(n=\lvert\tilde{\mathcal X}_c\rvert\),
and let \(S\subset\tilde{\mathcal X}_c\) be the subset of size \(m\) returned by the proposed sampling procedure.
Let \(N_{\varepsilon}\) be the \(\varepsilon\)-covering number of \(\mathcal X_c\) under
\(d(\mathbf A,\mathbf A')=\|\mathbf A-\mathbf A'\|_F\).
Under \textbf{\textup{Assumption~\ref{asmp:density}}}, if
\begin{equation}
m \;\ge\; \frac{n(\rho_{\max}+\tau)}{\rho_{\min}+\tau}
\Bigl(\log N_{\varepsilon}+\log(1/\delta)\Bigr),
\end{equation}
then \(S\) is an \(\varepsilon\)-cover of \(\mathcal X_c\) with probability at least \(1-\delta\).
\end{theorem}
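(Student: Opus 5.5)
The plan is a standard covering argument followed by a union bound over the cells of a minimal $\varepsilon$-net. Throughout, I take Assumption~\ref{asmp:density} to say that the KDE density satisfies $\rho_{\min}\le \rho(\mathbf X)\le\rho_{\max}$ on the pool, with the regularizer $\tau$ (the $\epsilon$ of the weight definition) added in the denominator.

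\textbf{Step 1: discretize.} Fix a minimal $(\varepsilon/2)$-net, or an $\varepsilon$-net whose cells $B_1,\dots,B_{N_\varepsilon}$ partition $\mathcal X_c$ in structure space with diameter at most $\varepsilon$. Each cell is assumed to contain at least one pool element, so it is nonempty in $\tilde{\mathcal X}_c$. If every cell receives at least one sampled point, then every $\mathbf X\in\mathcal X_c$ lies within distance $\varepsilon$ (after adjusting the radius constant) of some element of $S$, and $S$ is an $\varepsilon$-cover. The problem therefore reduces to bounding $\Pr[\exists j: S\cap B_j=\varnothing]$.

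\textbf{Step 2: per-draw hitting probability.} After normalization, each weight satisfies
\[
w(\mathbf X)=\frac{(\rho(\mathbf X)+\tau)^{-1}}{\sum_{\mathbf X'}(\rho(\mathbf X')+\tau)^{-1}}\ \ge\ \frac{(\rho_{\max}+\tau)^{-1}}{n(\rho_{\min}+\tau)^{-1}}=\frac{\rho_{\min}+\tau}{n(\rho_{\max}+\tau)}=:p_0 .
\]
So any nonempty cell is hit by a single draw with probability at least $p_0$.

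For sampling without replacement, I would argue the following. At each draw, the remaining weights are renormalized over fewer elements, so the conditional probability of hitting a still-unhit cell (which retains all its mass) does not decrease below $p_0$. Alternatively, one can couple with sampling with replacement, since without-replacement only increases coverage.

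\textbf{Step 3: union bound.} By Step 2,
\[
\Pr[S\cap B_j=\varnothing]\le(1-p_0)^m\le e^{-mp_0}.
\]
A union bound over the cells gives
\[
\Pr[\exists j: S\cap B_j=\varnothing]\le N_\varepsilon e^{-mp_0}\le\delta
\]
whenever
\[
m\ge p_0^{-1}\bigl(\log N_\varepsilon+\log(1/\delta)\bigr)=\frac{n(\rho_{\max}+\tau)}{\rho_{\min}+\tau}\bigl(\log N_\varepsilon+\log(1/\delta)\bigr),
\]
which is exactly the stated condition.

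\textbf{Main obstacle.} The delicate point is making the reduction in Step 1 rigorous. The covering number is defined for $\mathcal X_c$ (the cover must cover the whole set), while sampling only happens from the stable pool $\tilde{\mathcal X}_c$. The metric lives on the attention matrices $\mathbf A$, but the KDE density lives on scalar SPS values.

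I would handle this by reading Assumption~\ref{asmp:density} as guaranteeing two things: that every net cell contains a pool point, and that the density bounds hold uniformly on the pool. I would also use cells of radius $\varepsilon/2$ so that the triangle inequality yields the full $\varepsilon$. If necessary, I would absorb the resulting change from $N_{\varepsilon}$ to $N_{\varepsilon/2}$ into the definition, as the statement seems to implicitly do.

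The without-replacement monotonicity in Step 2 is secondary. The only subtlety there is that renormalization can only increase the relative weight of an unhit cell, because the removed elements lie in already-hit cells.
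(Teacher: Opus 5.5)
Your skeleton is the paper's: the per-point floor $w(\mathbf X)\ge w_{\min}=\frac{\rho_{\min}+\tau}{n(\rho_{\max}+\tau)}$, renormalization after removing points outside an unhit set (which can only raise that set's mass, so each without-replacement draw misses it with probability at most $1-w_{\min}$), and a union bound over $N_\varepsilon$ sets. The mismatch you raise between the matrix metric and the scalar KDE is harmless, since only $\rho_{\min},\rho_{\max}$ enter. There are, however, two problems, and the second is a real shortfall relative to the statement.

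\textbf{Nonemptiness of cells.} Do not read nonemptiness into Assumption~\ref{asmp:density}, which only bounds the KDE. The paper gets it from $\mathcal X_c\subseteq\tilde{\mathcal X}_c$: every ball of a minimal cover of $\mathcal X_c$ meets $\mathcal X_c$, hence contains a candidate.

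\textbf{The radius.} Your worry is justified. The paper uses radius-$\varepsilon$ balls and concludes directly that hitting every ball makes $S$ an $\varepsilon$-cover, but the triangle inequality then gives only a $2\varepsilon$-cover. Your remedy, however, replaces $N_\varepsilon$ by $N_{\varepsilon/2}$, so you prove a weaker theorem than the one stated.

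The idea you are missing is that the mass bound $W_j\ge w_{\min}$ uses only one point of the ball. You therefore lose nothing by aiming at a designated point. Take a minimal cover whose centers $\mathbf c_1,\dots,\mathbf c_{N_\varepsilon}$ lie in $\mathcal X_c\subseteq\tilde{\mathcal X}_c$. Each center has weight at least $w_{\min}$, and the same renormalization argument gives $\Pr[\mathbf c_j\notin S]\le(1-w_{\min})^m\le e^{-w_{\min}m}$. A union bound over the centers, together with the stated condition on $m$, gives $\{\mathbf c_j\}_{j}\subseteq S$ with probability at least $1-\delta$. Then every $\mathbf X\in\mathcal X_c$ is within $\varepsilon$ of a point of $S$. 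This is an honest $\varepsilon$-cover with $N_\varepsilon$ unchanged. Because the internal $\varepsilon$-covering number never exceeds the external $(\varepsilon/2)$-covering number, it subsumes your version.

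This fix requires reading $N_\varepsilon$ as an internal covering number, with centers in $\mathcal X_c$. With arbitrary centers the statement is false. Take $\tilde{\mathcal X}_c=\mathcal X_c=\{\mathbf X_1,\mathbf X_2\}$ at distance $2\varepsilon$ with equal KDE values. Then $N_\varepsilon=1$, and $m=1$, $\delta=0.7$ satisfy the hypothesis, since $2\log(1/0.7)<1$. Yet no single point $\varepsilon$-covers both.
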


Define the structure-representation space
\((\mathcal{M},d)\) with metric
\(d(\mathbf{A},\mathbf{A}')=\lVert \mathbf{A}-\mathbf{A}'\rVert_F\).
For \(\varepsilon>0\) let \(N_{\varepsilon}\) be the covering number of
\(\mathcal{X}_c\subset\mathcal{M}\). That is, the smallest number of closed
\(d\)-balls of radius \(\varepsilon\) needed to cover all subjects in
\(\mathcal{X}_c\).

\begin{assumption}
\label{asmp:density}
After the $\beta$-filter step, define the KDE-induced density
\(\rho(\mathbf{X}) := \hat{p}_{\mathrm{SPS}}\!\bigl(\mathrm{SPS}(\mathbf{X})\bigr)\),
where \(\hat{p}_{\mathrm{SPS}}\) is a Gaussian KDE fit on
\(\{\mathrm{SPS}(\mathbf{X}) : \mathbf{X}\in\tilde{\mathcal{X}}_c\}\).
Assume the estimator is bounded on the candidate pool:
\(0<\rho_{\min}\le\rho(\mathbf{X})\le\rho_{\max}<\infty\) for every
\(\mathbf{X}\in\tilde{\mathcal{X}}_c\).
\end{assumption}

\textbf{Assumption~\ref{asmp:density}} is mild because Gaussian KDE with finite
bandwidth produces a bounded, strictly positive estimate on any finite
sample.

\begin{proof}
Define the (normalized) sampling weight for \(\mathbf X\in\tilde{\mathcal X}_c\) as
\begin{equation}
w(\mathbf{X})
=
\frac{\frac{1}{\rho(\mathbf{X})+\tau}}
     {\sum_{\mathbf{Z}\in\tilde{\mathcal{X}}_c}\frac{1}{\rho(\mathbf{Z})+\tau}}.
\end{equation}
By \textbf{Assumption~\ref{asmp:density}}, for all \(\mathbf X\),
\(
\frac{1}{\rho(\mathbf{X})+\tau}\ge \frac{1}{\rho_{\max}+\tau}
\)
and
\(
\sum_{\mathbf{Z}}\frac{1}{\rho(\mathbf{Z})+\tau}\le \frac{n}{\rho_{\min}+\tau}.
\)
Hence every point has weight bounded below by
\begin{equation}
w(\mathbf X)\;\ge\;
\frac{1/(\rho_{\max}+\tau)}{n/(\rho_{\min}+\tau)}
=
\frac{\rho_{\min}+\tau}{n(\rho_{\max}+\tau)}
\;=:\; w_{\min}.
\end{equation}

Let \(\{B_1,\dots,B_{N_{\varepsilon}}\}\) be a collection of closed \(d\)-balls of radius \(\varepsilon\)
covering \(\mathcal X_c\). Since \(\mathcal X_c\subseteq \tilde{\mathcal X}_c\),
each \(B_j\) contains at least one candidate point. Therefore its total sampling mass satisfies
\(
W_j := \sum_{\mathbf X\in B_j\cap \tilde{\mathcal X}_c} w(\mathbf X)\ge w_{\min}.
\)

Consider sequential sampling without replacement where at each draw we sample from the remaining points
proportionally to their weights (renormalized). If we have not yet sampled from \(B_j\), then removing points
outside \(B_j\) can only increase the renormalized mass of \(B_j\); thus, at every draw the conditional
probability of selecting a point outside \(B_j\) is at most \(1-W_j\le 1-w_{\min}\).
Therefore
\begin{equation}
\Pr[B_j\cap S=\varnothing] \;\le\; (1-w_{\min})^{m}
\;\le\; \exp(-w_{\min}m).
\end{equation}

Choose \(m\) so that \(\exp(-w_{\min}m)\le \delta/N_{\varepsilon}\), i.e.
\(m \ge \frac{1}{w_{\min}}(\log N_{\varepsilon}+\log(1/\delta))\).
A union bound over the \(N_{\varepsilon}\) balls yields
\(
\Pr[\exists j:\, B_j\cap S=\varnothing]\le \delta.
\)
Hence with probability at least \(1-\delta\), every ball contains at least one sampled point,
so \(S\) is an \(\varepsilon\)-cover of \(\mathcal X_c\).
\end{proof}

\section{Theorem~\ref{thm:discrep} and Proof}\label{app:discrep}

\begin{theorem}[Expectation discrepancy under $\varepsilon$-coverage]
\label{thm:discrep}
Let $(\mathcal M,d)$ be the structure-representation space with
$d(\mathbf A,\mathbf A')=\|\mathbf A-\mathbf A'\|_F$ and representation map
$\mathbf X\mapsto \mathbf A^{(\mathbf X)}\in\mathcal M$.
Let $P$ be a probability distribution supported on $\mathcal X$.
Assume $\tilde{\mathcal X}\subset\mathcal X$ is an $\varepsilon$-cover of $\mathcal X$ in $\mathcal M$:
for every $\mathbf X\in\mathcal X$ there exists $\tilde{\mathbf X}\in\tilde{\mathcal X}$ with
$d(\mathbf A^{(\mathbf X)},\mathbf A^{(\tilde{\mathbf X})})\le \varepsilon$.

Let $\pi:\mathcal X\to \tilde{\mathcal X}$ be any (measurable) selection satisfying
$d(\mathbf A^{(\mathbf X)},\mathbf A^{(\pi(\mathbf X))})\le \varepsilon$ for all $\mathbf X$,
and define the push-forward measure $P_\pi := P\circ \pi^{-1}$ on $\tilde{\mathcal X}$.
Then for any function $f:\mathcal X\to\mathbb R$ that is $L$-Lipschitz w.r.t.\ $d$,
\begin{equation}
\left|
\mathbb E_{\mathbf X\sim P}\!\bigl[f(\mathbf X)\bigr]
-
\mathbb E_{\mathbf X\sim P_\pi}\!\bigl[f(\mathbf X)\bigr]
\right|
\le L\,\varepsilon.
\end{equation}
\end{theorem}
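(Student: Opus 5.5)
The plan is to use the change-of-variables formula for push-forward measures. This turns the discrepancy into a single expectation under $P$, to which the Lipschitz bound can be applied pointwise.

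First, by the definition $P_\pi := P\circ\pi^{-1}$ and the measurability of $\pi$, for any $f$ that is integrable under $P_\pi$ we have
\[
\mathbb E_{\mathbf X\sim P_\pi}\bigl[f(\mathbf X)\bigr]=\int_{\tilde{\mathcal X}} f\,\mathrm dP_\pi=\int_{\mathcal X} f\bigl(\pi(\mathbf X)\bigr)\,\mathrm dP(\mathbf X)=\mathbb E_{\mathbf X\sim P}\bigl[f(\pi(\mathbf X))\bigr].
\]
This is the standard abstract change-of-variables identity. It is verified on indicator functions, extended to simple functions, and then to integrable functions by monotone convergence applied separately to $f^+$ and $f^-$.

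Second, we subtract the two expectations, both now taken under $P$, and apply Jensen's inequality $|\mathbb E[g]|\le\mathbb E|g|$. This gives
\[
\Bigl|\mathbb E_P[f]-\mathbb E_{P_\pi}[f]\Bigr|\le \mathbb E_{\mathbf X\sim P}\bigl|f(\mathbf X)-f(\pi(\mathbf X))\bigr|.
\]
Since $f$ is $L$-Lipschitz with respect to $d$, the integrand is at most $L\,d(\mathbf A^{(\mathbf X)},\mathbf A^{(\pi(\mathbf X))})$. By the selection property of $\pi$, this is at most $L\varepsilon$ for every $\mathbf X$. Integrating this constant bound against the probability measure $P$ gives $L\varepsilon$.

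The only delicate point is integrability. We need $\mathbb E_P[f]$ to be finite for the difference to make sense. This holds automatically if $\mathcal X$ is compact or if $f$ is bounded, and we state it as an implicit standing assumption. Once $\mathbb E_P|f|<\infty$ holds, the pointwise bound $|f(\pi(\mathbf X))|\le |f(\mathbf X)|+L\varepsilon$ gives integrability of $f\circ\pi$ for free, so the change of variables is justified.

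Two further remarks. The Lipschitz condition is interpreted in the pseudometric induced on $\mathcal X$ by $\mathbf X\mapsto\mathbf A^{(\mathbf X)}$. The $\varepsilon$-cover hypothesis is used only to guarantee that a selection $\pi$ with the stated property exists; the bound itself uses only that property.
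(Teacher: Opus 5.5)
Your proposal is correct and follows the paper's own argument: the pointwise Lipschitz bound $|f(\mathbf X)-f(\pi(\mathbf X))|\le L\varepsilon$, then Jensen/triangle under $P$, then the push-forward identity $\mathbb E_{P}[f\circ\pi]=\mathbb E_{P_\pi}[f]$. Your explicit integrability discussion is a useful addition that the paper leaves implicit. The compactness remark needs qualifying, though: compactness of $\mathcal X$ bounds $f$ only if $\mathbf X\mapsto\mathbf A^{(\mathbf X)}$ is continuous. Alternatively, in the paper's setting the row-stochastic maps give a bounded $d$-image, which already makes any $d$-Lipschitz $f$ bounded.
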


\begin{proof}
Because $f$ is $L$-Lipschitz w.r.t.\ $d$, for every $\mathbf X\in\mathcal X$,
\[
\bigl|f(\mathbf X)-f(\pi(\mathbf X))\bigr|
\le
L\, d\!\bigl(\mathbf A^{(\mathbf X)},\mathbf A^{(\pi(\mathbf X))}\bigr)
\le
L\,\varepsilon.
\]
Taking expectation under $P$ and using Jensen/triangle inequality,
\[
\left|
\mathbb E_{\mathbf X\sim P}\!\bigl[f(\mathbf X)\bigr]
-
\mathbb E_{\mathbf X\sim P}\!\bigl[f(\pi(\mathbf X))\bigr]
\right|
\le
\mathbb E_{\mathbf X\sim P}\!\bigl[|f(\mathbf X)-f(\pi(\mathbf X))|\bigr]
\le
L\,\varepsilon.
\]
Finally, by definition of the push-forward measure $P_\pi$,
\[
\mathbb E_{\mathbf X\sim P}\!\bigl[f(\pi(\mathbf X))\bigr]
=
\mathbb E_{\mathbf X\sim P_\pi}\!\bigl[f(\mathbf X)\bigr].
\]
Combining completes the proof.
\end{proof}

\section{Computational Complexity Analysis}\label{app:comps}

\subsection{Computational Complexity and Selection of SPIs}
\begin{figure}[ht]
  \centering
  \includegraphics[width=0.65\linewidth]{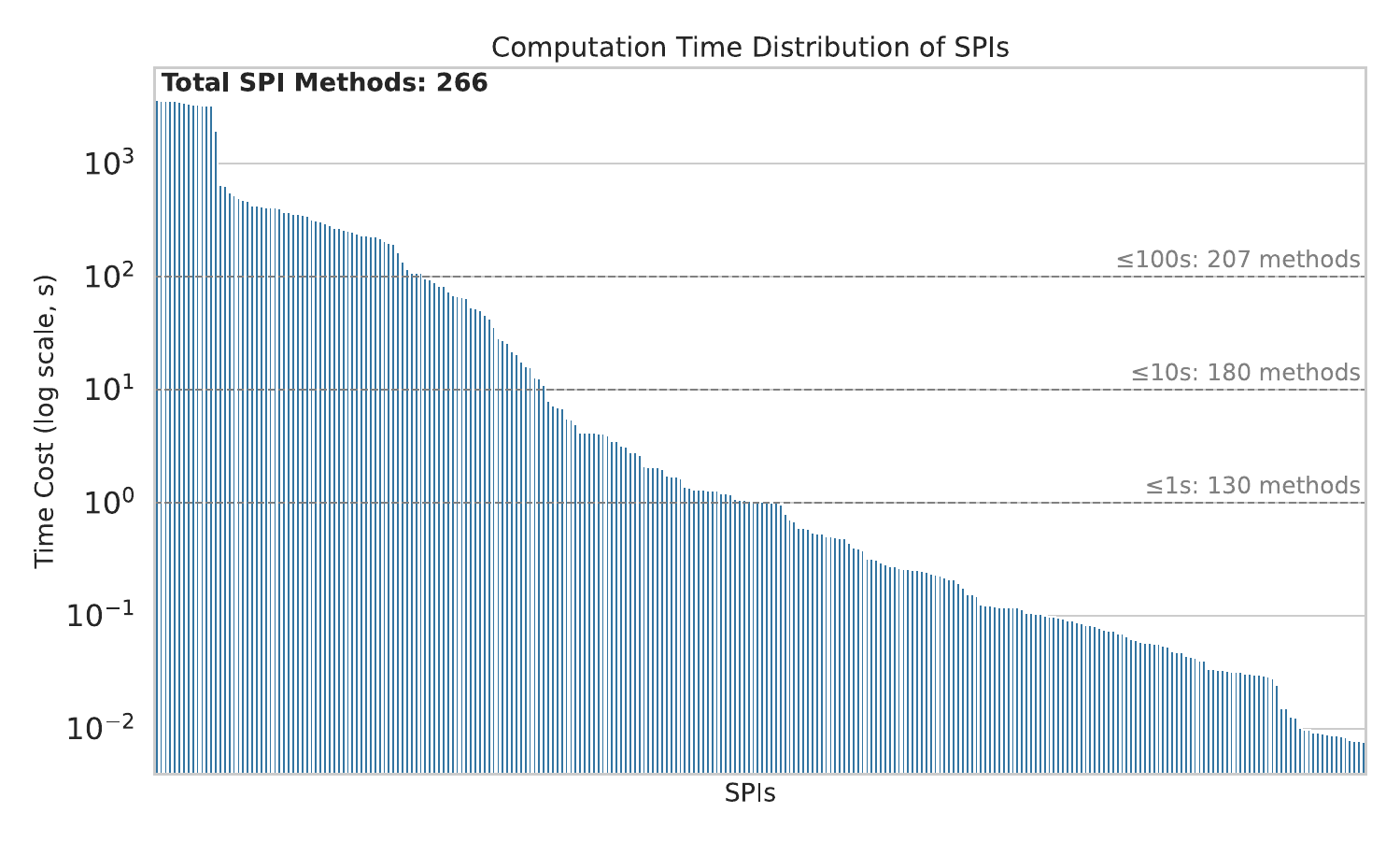}
  \caption{Time consumption of different SPIs on a single sample.}
  \label{time_comp}
\end{figure}

\begin{table}[ht]
\centering
\caption{SPIs included in \texttt{pyspi} (\textbf{bolded} ones are used in this paper).}\label{spis}
\renewcommand{\arraystretch}{1.2}
\setlength{\tabcolsep}{8pt}
\resizebox{\textwidth}{!}{%
\begin{tabular}{|l|l|l|l|}
\toprule
\textbf{cov\_EllipticEnvelope} & \textbf{cov\_GraphicalLasso} & \textbf{cov\_LedoitWolf} & \textbf{cov\_MinCovDet} \\
 \textbf{cov\_OAS} & \textbf{cov\_ShrunkCovariance} & \textbf{cov-sq\_EmpiricalCovariance} & \textbf{cov-sq\_EllipticEnvelope} \\
\textbf{cov-sq\_GraphicalLasso} & \textbf{cov-sq\_LedoitWolf} & \textbf{cov-sq\_MinCovDet} & \textbf{cov-sq\_OAS} \\
\textbf{cov\_PearsonCorrelation} &  \textbf{cov-sq\_ShrunkCovariance} & \textbf{prec\_EmpiricalCovariance} & \textbf{prec\_EllipticEnvelope} \\
\textbf{prec\_GraphicalLasso} & \textbf{prec\_LedoitWolf} & \textbf{prec\_MinCovDet} & \textbf{prec\_OAS} \\
\textbf{prec\_ShrunkCovariance} & \textbf{prec-sq\_EmpiricalCovariance} & \textbf{prec-sq\_EllipticEnvelope} & \textbf{prec-sq\_GraphicalLasso} \\
\textbf{prec-sq\_LedoitWolf} & \textbf{prec-sq\_MinCovDet} & \textbf{prec-sq\_OAS} & \textbf{prec-sq\_ShrunkCovariance}\\
 \textbf{kendalltau-sq} & \textbf{kendalltau} & \textbf{xcorr\_max\_sig-True} & \textbf{xcorr-sq\_max\_sig-True} \\
\textbf{xcorr\_mean\_sig-True} & \textbf{xcorr-sq\_mean\_sig-True} & \textbf{xcorr\_mean\_sig-False} & \textbf{xcorr-sq\_mean\_sig-False} \\
\textbf{pdist\_cityblock} &  \textbf{pdist\_cosine} & \textbf{pdist\_chebyshev} & \textbf{pdist\_canberra} \\
\textbf{pdist\_braycurtis} & \textbf{lcss\_constraint-sakoe-chiba} & \textbf{bary\_euclidean\_mean} & \textbf{bary\_euclidean\_max} \\
\textbf{bary-sq\_euclidean\_mean} & \textbf{bary-sq\_euclidean\_max} & \textbf{gwtau} & \textbf{je\_kernel\_W-0.5} \\
\textbf{ce\_gaussian} & \textbf{ce\_kernel\_W-0.5} & \textbf{xme\_gaussian\_k1} & \textbf{xme\_gaussian\_k10} \\
\textbf{mi\_gaussian} & \textbf{tlmi\_gaussian} & \textbf{te\_kernel\_W-0.25\_k-1} & \textbf{gc\_gaussian\_k-max-10\_tau-max-2} \\
\textbf{gc\_gaussian\_k-1\_kt-1\_l-1\_lt-1} & \textbf{te\_symbolic\_k-1\_kt-1\_l-1\_lt-1} & \textbf{te\_symbolic\_k-10\_kt-1\_l-1\_lt-1} & \textbf{phase\_multitaper\_mean\_fs-1\_fmin-0\_fmax-0-5} \\
\textbf{phase\_multitaper\_mean\_fs-1\_fmin-0\_fmax-0-25} & \textbf{phase\_multitaper\_mean\_fs-1\_fmin-0-25\_fmax-0-5} & \textbf{phase\_multitaper\_max\_fs-1\_fmin-0\_fmax-0-5} & \textbf{phase\_multitaper\_max\_fs-1\_fmin-0\_fmax-0-25} \\
\textbf{phase\_multitaper\_max\_fs-1\_fmin-0-25\_fmax-0-5} & \textbf{cohmag\_multitaper\_mean\_fs-1\_fmin-0\_fmax-0-5} & \textbf{cohmag\_multitaper\_mean\_fs-1\_fmin-0\_fmax-0-25} & \textbf{cohmag\_multitaper\_mean\_fs-1\_fmin-0-25\_fmax-0-5} \\
\textbf{cohmag\_multitaper\_max\_fs-1\_fmin-0\_fmax-0-5}& \textbf{cohmag\_multitaper\_max\_fs-1\_fmin-0\_fmax-0-25} & \textbf{cohmag\_multitaper\_max\_fs-1\_fmin-0-25\_fmax-0-5} & \textbf{icoh\_multitaper\_mean\_fs-1\_fmin-0\_fmax-0-5} \\
\textbf{icoh\_multitaper\_mean\_fs-1\_fmin-0\_fmax-0-25} & \textbf{icoh\_multitaper\_mean\_fs-1\_fmin-0-25\_fmax-0-5} & \textbf{icoh\_multitaper\_max\_fs-1\_fmin-0\_fmax-0-5} & \textbf{icoh\_multitaper\_max\_fs-1\_fmin-0\_fmax-0-25} \\
\textbf{icoh\_multitaper\_max\_fs-1\_fmin-0-25\_fmax-0-5} & \textbf{psi\_multitaper\_mean\_fs-1\_fmin-0\_fmax-0-5} & \textbf{psi\_multitaper\_mean\_fs-1\_fmin-0\_fmax-0-25} & \textbf{psi\_multitaper\_mean\_fs-1\_fmin-0-25\_fmax-0-5} \\
\textbf{plv\_multitaper\_mean\_fs-1\_fmin-0\_fmax-0-5} & \textbf{plv\_multitaper\_mean\_fs-1\_fmin-0\_fmax-0-25} & \textbf{plv\_multitaper\_mean\_fs-1\_fmin-0-25\_fmax-0-5} & \textbf{plv\_multitaper\_max\_fs-1\_fmin-0\_fmax-0-5} \\
\textbf{plv\_multitaper\_max\_fs-1\_fmin-0\_fmax-0-25} & \textbf{plv\_multitaper\_max\_fs-1\_fmin-0-25\_fmax-0-5} & \textbf{pli\_multitaper\_mean\_fs-1\_fmin-0\_fmax-0-5} & \textbf{pli\_multitaper\_mean\_fs-1\_fmin-0\_fmax-0-25} \\
\textbf{pli\_multitaper\_mean\_fs-1\_fmin-0-25\_fmax-0-5} & \textbf{pli\_multitaper\_max\_fs-1\_fmin-0\_fmax-0-5} & \textbf{pli\_multitaper\_max\_fs-1\_fmin-0\_fmax-0-25} & \textbf{pli\_multitaper\_max\_fs-1\_fmin-0-25\_fmax-0-5} \\
\textbf{wpli\_multitaper\_mean\_fs-1\_fmin-0\_fmax-0-5} & \textbf{wpli\_multitaper\_mean\_fs-1\_fmin-0\_fmax-0-25} & \textbf{wpli\_multitaper\_mean\_fs-1\_fmin-0-25\_fmax-0-5} & \textbf{wpli\_multitaper\_max\_fs-1\_fmin-0\_fmax-0-5} \\
\textbf{wpli\_multitaper\_max\_fs-1\_fmin-0\_fmax-0-25} & \textbf{wpli\_multitaper\_max\_fs-1\_fmin-0-25\_fmax-0-5} & \textbf{dspli\_multitaper\_mean\_fs-1\_fmin-0\_fmax-0-5} & \textbf{dspli\_multitaper\_mean\_fs-1\_fmin-0\_fmax-0-25} \\
\textbf{dspli\_multitaper\_mean\_fs-1\_fmin-0-25\_fmax-0-5} & \textbf{dspli\_multitaper\_max\_fs-1\_fmin-0\_fmax-0-5} & \textbf{dspli\_multitaper\_max\_fs-1\_fmin-0\_fmax-0-25} & \textbf{dspli\_multitaper\_max\_fs-1\_fmin-0-25\_fmax-0-5} \\
\textbf{dswpli\_multitaper\_mean\_fs-1\_fmin-0\_fmax-0-5} & \textbf{dswpli\_multitaper\_mean\_fs-1\_fmin-0\_fmax-0-25} & \textbf{dswpli\_multitaper\_mean\_fs-1\_fmin-0-25\_fmax-0-5} & \textbf{dswpli\_multitaper\_max\_fs-1\_fmin-0\_fmax-0-5} \\
\textbf{dswpli\_multitaper\_max\_fs-1\_fmin-0\_fmax-0-25} & \textbf{dswpli\_multitaper\_max\_fs-1\_fmin-0-25\_fmax-0-5} & \textbf{ppc\_multitaper\_mean\_fs-1\_fmin-0\_fmax-0-5} & \textbf{ppc\_multitaper\_mean\_fs-1\_fmin-0\_fmax-0-25} \\
\textbf{ppc\_multitaper\_mean\_fs-1\_fmin-0-25\_fmax-0-5} & \textbf{ppc\_multitaper\_max\_fs-1\_fmin-0\_fmax-0-5} & \textbf{ppc\_multitaper\_max\_fs-1\_fmin-0\_fmax-0-25} & \textbf{ppc\_multitaper\_max\_fs-1\_fmin-0-25\_fmax-0-5} \\
\textbf{gd\_multitaper\_delay\_fs-1\_fmin-0\_fmax-0-5} & \textbf{gd\_multitaper\_delay\_fs-1\_fmin-0-25\_fmax-0-5} & \textbf{sgc\_parametric\_mean\_fs-1\_fmin-0\_fmax-0-25\_order-1} & \textbf{sgc\_parametric\_max\_fs-1\_fmin-1e-05\_fmax-0-5\_order-1} \\
\textbf{psi\_wavelet\_mean\_fs-1\_fmin-0\_fmax-0-25\_mean} & \textbf{psi\_wavelet\_mean\_fs-1\_fmin-0-25\_fmax-0-5\_mean} & \textbf{psi\_wavelet\_max\_fs-1\_fmin-0\_fmax-0-5\_max} & \textbf{psi\_wavelet\_max\_fs-1\_fmin-0\_fmax-0-25\_max} \\
\textbf{psi\_wavelet\_max\_fs-1\_fmin-0-25\_fmax-0-5\_max} & \textbf{pec} & \textbf{pec\_orth} & \textbf{pec\_log} \\
\textbf{pec\_orth\_log} & \textbf{pec\_orth\_abs} & \textbf{pec\_orth\_log\_abs} & cov\_GraphicalLassoCV \\
cov-sq\_GraphicalLassoCV & prec\_GraphicalLassoCV & prec-sq\_GraphicalLassoCV & spearmanr-sq \\
spearmanr & pdist\_euclidean & dcorr & dcorr\_biased \\
mgc & hsic & hsic\_biased & hhg \\
mgcx\_maxlag-1 & mgcx\_maxlag-10 & dcorrx\_maxlag-1 & dcorrx\_maxlag-10 \\
dtw & dtw\_constraint-itakura & dtw\_constraint-sakoe-chiba & softdtw \\
softdtw\_constraint-itakura & softdtw\_constraint-sakoe-chiba & lcss & lcss\_constraint-itakura \\ bary\_dtw\_mean & bary\_dtw\_max & bary\_sgddtw\_mean & bary\_sgddtw\_max \\
bary\_softdtw\_mean & bary\_softdtw\_max & bary-sq\_dtw\_mean & bary-sq\_dtw\_max \\
bary-sq\_sgddtw\_mean & bary-sq\_sgddtw\_max & bary-sq\_softdtw\_mean & bary-sq\_softdtw\_max \\
anm & cds & reci & igci \\
ccm\_E-None\_mean & ccm\_E-None\_max & ccm\_E-None\_diff & ccm\_E-1\_mean \\
ccm\_E-1\_max & ccm\_E-1\_diff & ccm\_E-10\_mean & ccm\_E-10\_max \\
ccm\_E-10\_diff & je\_gaussian & je\_kozachenko & ce\_kozachenko \\
cce\_gaussian & cce\_kozachenko & cce\_kernel\_W-0.5 & xme\_kozachenko\_k1 \\
xme\_kernel\_W-0.5\_k1 & xme\_kozachenko\_k10 & xme\_kernel\_W-0.5\_k10 & di\_gaussian \\
di\_kozachenko & di\_kernel\_W-0.5 & si\_gaussian\_k-1 & si\_kozachenko\_k-1 \\
si\_kernel\_W-0.5\_k-1 & mi\_kraskov\_NN-4 & mi\_kraskov\_NN-4\_DCE & mi\_kernel\_W-0.25 \\
tlmi\_kraskov\_NN-4 & tlmi\_kraskov\_NN-4\_DCE & tlmi\_kernel\_W-0.25 & te\_kraskov\_NN-4\_k-max-10\_tau-max-4\\
te\_kraskov\_NN-4\_DCE\_k-max-10\_tau-max-4 & te\_kraskov\_NN-4\_DCE\_k-2\_kt-1\_l-1\_lt-1 & te\_kraskov\_NN-4\_DCE\_k-1\_kt-1\_l-1\_lt-1 & te\_kraskov\_NN-4\_k-1\_kt-1\_l-1\_lt-1 \\
phi\_star\_t-1\_norm-0 & phi\_star\_t-1\_norm-1 & phi\_Geo\_t-1\_norm-0 & phi\_Geo\_t-1\_norm-1 \\
dtf\_multitaper\_mean\_fs-1\_fmin-0\_fmax-0-5 & dtf\_multitaper\_mean\_fs-1\_fmin-0\_fmax-0-25 & dtf\_multitaper\_mean\_fs-1\_fmin-0-25\_fmax-0-5 & dtf\_multitaper\_max\_fs-1\_fmin-0\_fmax-0-5 \\
dtf\_multitaper\_max\_fs-1\_fmin-0\_fmax-0-25 & dtf\_multitaper\_max\_fs-1\_fmin-0-25\_fmax-0-5 & dcoh\_multitaper\_mean\_fs-1\_fmin-0\_fmax-0-5 & dcoh\_multitaper\_mean\_fs-1\_fmin-0\_fmax-0-25 \\
dcoh\_multitaper\_mean\_fs-1\_fmin-0-25\_fmax-0-5 & dcoh\_multitaper\_max\_fs-1\_fmin-0\_fmax-0-5 & dcoh\_multitaper\_max\_fs-1\_fmin-0\_fmax-0-25 & dcoh\_multitaper\_max\_fs-1\_fmin-0-25\_fmax-0-5 \\
pdcoh\_multitaper\_mean\_fs-1\_fmin-0\_fmax-0-5 & pdcoh\_multitaper\_mean\_fs-1\_fmin-0\_fmax-0-25 & pdcoh\_multitaper\_mean\_fs-1\_fmin-0-25\_fmax-0-5 & pdcoh\_multitaper\_max\_fs-1\_fmin-0\_fmax-0-5 \\
pdcoh\_multitaper\_max\_fs-1\_fmin-0\_fmax-0-25 & pdcoh\_multitaper\_max\_fs-1\_fmin-0-25\_fmax-0-5 & gpdcoh\_multitaper\_mean\_fs-1\_fmin-0\_fmax-0-5 & gpdcoh\_multitaper\_mean\_fs-1\_fmin-0\_fmax-0-25 \\
gpdcoh\_multitaper\_mean\_fs-1\_fmin-0-25\_fmax-0-5 & gpdcoh\_multitaper\_max\_fs-1\_fmin-0\_fmax-0-5 & gpdcoh\_multitaper\_max\_fs-1\_fmin-0\_fmax-0-25 & gpdcoh\_multitaper\_max\_fs-1\_fmin-0-25\_fmax-0-5 \\
ddtf\_multitaper\_mean\_fs-1\_fmin-0\_fmax-0-5 & ddtf\_multitaper\_mean\_fs-1\_fmin-0\_fmax-0-25 & ddtf\_multitaper\_mean\_fs-1\_fmin-0-25\_fmax-0-5 & ddtf\_multitaper\_max\_fs-1\_fmin-0\_fmax-0-5 \\
ddtf\_multitaper\_max\_fs-1\_fmin-0\_fmax-0-25 & ddtf\_multitaper\_max\_fs-1\_fmin-0-25\_fmax-0-5 & gd\_multitaper\_delay\_fs-1\_fmin-0\_fmax-0-25 & sgc\_nonparametric\_mean\_fs-1\_fmin-0\_fmax-0-5 \\
sgc\_nonparametric\_mean\_fs-1\_fmin-0\_fmax-0-25 & sgc\_nonparametric\_mean\_fs-1\_fmin-0-25\_fmax-0-5 & sgc\_nonparametric\_max\_fs-1\_fmin-0\_fmax-0-5 & sgc\_nonparametric\_max\_fs-1\_fmin-0\_fmax-0-25 \\
sgc\_nonparametric\_max\_fs-1\_fmin-0-25\_fmax-0-5 & sgc\_parametric\_mean\_fs-1\_fmin-0\_fmax-0-5\_order-None & sgc\_parametric\_mean\_fs-1\_fmin-0\_fmax-0-25\_order-None & sgc\_parametric\_mean\_fs-1\_fmin-0-25\_fmax-0-5\_order-None \\
sgc\_parametric\_mean\_fs-1\_fmin-1e-05\_fmax-0-5\_order-1 & sgc\_parametric\_mean\_fs-1\_fmin-0-25\_fmax-0-5\_order-1 & sgc\_parametric\_mean\_fs-1\_fmin-1e-05\_fmax-0-5\_order-20 & sgc\_parametric\_mean\_fs-1\_fmin-0\_fmax-0-25\_order-20 \\
sgc\_parametric\_mean\_fs-1\_fmin-0-25\_fmax-0-5\_order-20 & sgc\_parametric\_max\_fs-1\_fmin-1e-05\_fmax-0-5\_order-None & sgc\_parametric\_max\_fs-1\_fmin-0\_fmax-0-25\_order-None & sgc\_parametric\_max\_fs-1\_fmin-0-25\_fmax-0-5\_order-None \\
sgc\_parametric\_max\_fs-1\_fmin-0\_fmax-0-25\_order-1 & sgc\_parametric\_max\_fs-1\_fmin-0-25\_fmax-0-5\_order-1 & sgc\_parametric\_max\_fs-1\_fmin-1e-05\_fmax-0-5\_order-20 & sgc\_parametric\_max\_fs-1\_fmin-0\_fmax-0-25\_order-20 \\
sgc\_parametric\_max\_fs-1\_fmin-0-25\_fmax-0-5\_order-20 & psi\_wavelet\_mean\_fs-1\_fmin-0\_fmax-0-5\_mean & lmfit\_Ridge & lmfit\_Lasso \\
lmfit\_SGDRegressor & lmfit\_ElasticNet & lmfit\_BayesianRidge & gpfit\_DotProduct \\
gpfit\_RBF & coint\_johansen\_max\_eig\_stat\_order-0\_ardiff-10 & coint\_johansen\_trace\_stat\_order-0\_ardiff-10 & coint\_johansen\_max\_eig\_stat\_order-0\_ardiff-1 \\
coint\_johansen\_trace\_stat\_order-0\_ardiff-1 & coint\_johansen\_max\_eig\_stat\_order-1\_ardiff-10 & coint\_johansen\_trace\_stat\_order-1\_ardiff-10 & coint\_johansen\_max\_eig\_stat\_order-1\_ardiff-1 \\
coint\_johansen\_trace\_stat\_order-1\_ardiff-1 & coint\_aeg\_tstat\_trend-c\_autolag-aic\_maxlag-10 & coint\_aeg\_tstat\_trend-ct\_autolag-aic\_maxlag-10 & coint\_aeg\_tstat\_trend-ct\_autolag-bic\_maxlag-10 \\ \bottomrule
\end{tabular}
}
\end{table}

We compute all Statistical Pairwise Interactions (SPIs) using the open-source Python library \texttt{pyspi}~\citep{cliff2023unifying}, which provides a unified implementation of 284 diverse measures. To ensure numerical stability, we exclude 18 SPIs that produced invalid matrix entries (\texttt{NaN} values), resulting in a final set of $|\mathcal{S}|=266$ methods for our benchmark.

For a single multivariate time series (MTS) sample $\mathbf{X} \in \mathbb{R}^{N \times T}$, where $N$ is the number of regions of interest (ROIs) and $T$ is the number of time points, the time complexity of a typical SPI computation scales as $\mathcal{O}(N^2L)$, where $L$ reflects the method-specific internal dependency length. When considering the entire benchmarking task over a dataset $\mathcal{X}$ and the full suite of SPIs $\mathcal{S}$, the overall computational complexity lower bound becomes $\mathcal{O}(|\mathcal{X}| \cdot |\mathcal{S}| \cdot N^2L)$.

To quantify this theoretical burden in practical terms, we benchmarked each of the 266 SPIs on a representative sample (size $33 \times 240$). Using a 128 vCPU cluster as a concrete example, the resulting time distribution is shown in Figure~\ref{time_comp}. Based on these timings, we can estimate the total cost for our full dataset of $|\mathcal{X}|=4520$ samples. The time to process one sample with all 266 SPIs is approximately $18,950$ CPU-seconds (summing estimates from different time bins: $14 \text{ methods} \times 1000s + 45 \times 100s + 27 \times 10s + 180 \times 1s$). Extrapolating to the full dataset, the total computational cost is a staggering $4520 \times 18,950 \approx 8.57 \times 10^7$ CPU-seconds, equivalent to over \textbf{990 CPU-days} ($\approx 7.7$ CPU-days on a 128 vCPU). Even with access to massively parallelized cluster environments, this enormous consumption of resources makes a full benchmark practically infeasible and time-prohibitive.

This severe computational bottleneck motivates our core research question: how can we drastically reduce the number of samples while preserving a robust and reliable evaluation of the SPIs? This challenge naturally leads to our investigation of core-set selection for fMRI-based SPI benchmarking. As calculated, on a $10\%$ core-set the load reduces to $\approx99$ CPU-days ($10\%$ of the full cost). On the same $128$-vCPU cluster, this would take: $99$ CPU-days / $128$ cores = $\approx0.77$ days (or $\approx18.5$ hours). To validate our approach, we necessarily performed this exhaustive computation to establish a ground-truth ranking. However, for the purpose of evaluating core-set quality in our experiments, we restrict our analysis to a tractable subset of SPIs that take less than one second per sample, enabling rapid yet informative evaluation. The full list of 284 SPIs is summarized in Table~\ref{spis}, with those used for our core-set evaluation highlighted in bold.

\subsection{Computational Cost of Selection Methods}
\label{app:comp_cost}

To complete our analysis of efficiency, we provide an empirical comparison of the computational cost for \textbf{SCLCS} and the baseline methods. Table~\ref{tab:comp_cost_methods} details the practical time consumption required for each method. The `Time per Epoch' reflects the average wall-clock time to complete a single training epoch. The `Score Calculation Time' is the specific, one-time overhead for computing the final selection metric after the training phase is complete.

\begin{table}[ht]
\centering
\caption{Computational cost for core-set selection methods.}
\label{tab:comp_cost_methods}
\begin{tabular}{lcc}
\toprule
\textbf{Method} & \textbf{Time per Epoch (s)} & \textbf{Score Calculation Time (s)} \\
\midrule
Forgetting      & 1.8708                      & 0.0000                              \\
Entropy         & 1.9851                      & 2.1588                              \\
EL2N            & 2.5309                      & 0.3420                              \\
AUM             & 2.8389                      & 0.0957                              \\
CCS             & 2.8389                      & 11.9414                             \\
EVA             & 2.5964                      & 0.6479                              \\
BOSS            & 2.0952                      & 61.5406                             \\
\textbf{SCLCS}\textsubscript{Dense}           & 6.9296                      & 249.8204                            \\
\bottomrule
\end{tabular}
\end{table}

As the results indicate, all core-set selection methods incur a modest, one-time computational cost. This up-front investment is negligible when contrasted with the over 990 CPU-days required for a full downstream benchmark (as detailed in \textbf{Appendix~\ref{app:comps}}). It is worth noting that the score calculation for \textbf{SCLCS}\textsubscript{Dense}, which computes the \textbf{SPS} metric by measuring differences between attention matrices across epochs, represents a fixed, post-training overhead. While this step appears slower than other methods' scoring, it is a one-time process that is highly parallelizable. These findings confirm that investing a small computational budget in core-set selection is a practical and efficient solution. This validates our proposed paradigm for tackling the intractable problem of large-scale model benchmarking.

\section{Influence of Labels on Baselines}\label{app:labelef}
In the main paper, we report results of baseline methods trained using labels aligned with the evaluation objective. However, our proposed \textbf{SCLCS} framework uniformly adopts subject identity as the supervisory signal, which raises the question of training-evaluation label misalignment. To provide a more comprehensive analysis, this section further investigates the influence of such misalignment on baselines.
\subsection{Using MDD Label for Brain Fingerprinting Ranking}

\begin{table}[ht]
\centering
\caption{Comparison of brain fingerprinting ranking with subject and with MDD labels (mean $\pm$ std). Arrows indicate change under MDD supervision.}\label{Tab:mdd2bf}
\resizebox{\textwidth}{!}{
\begin{tabular}{l|l|ccc|ccc|ccc}
\toprule
\multirow{2}{*}{Method} & \multirow{2}{*}{Label} & \multicolumn{3}{c|}{nDCG@5} & \multicolumn{3}{c|}{nDCG@10} & \multicolumn{3}{c}{nDCG@20} \\
& & 0.1 & 0.3 & 0.5 & 0.1 & 0.3 & 0.5 & 0.1 & 0.3 & 0.5 \\
\midrule
\multirow{2}{*}{Forgetting} & Subject & 14.41$_{\pm\text{7.84}}$ & 54.43$_{\pm\text{13.26}}$ & 43.36$_{\pm\text{3.35}}$ & 15.49$_{\pm\text{7.83}}$ & 48.57$_{\pm\text{7.85}}$ & 49.87$_{\pm\text{4.66}}$ & 20.23$_{\pm\text{7.21}}$ & 55.80$_{\pm\text{6.72}}$ & 49.70$_{\pm\text{4.60}}$ \\
 & MDD & 14.41$_{\pm\text{7.84}}$\textcolor{green}{$\downarrow$} & 54.43$_{\pm\text{13.26}}$\textcolor{green}{$\downarrow$} & 43.36$_{\pm\text{3.34}}$\textcolor{red}{$\uparrow$} & 15.49$_{\pm\text{7.83}}$\textcolor{green}{$\downarrow$} & 48.57$_{\pm\text{7.85}}$\textcolor{green}{$\downarrow$} & 49.87$_{\pm\text{4.66}}$\textcolor{green}{$\downarrow$} & 20.23$_{\pm\text{7.21}}$\textcolor{green}{$\downarrow$} & 55.80$_{\pm\text{6.72}}$\textcolor{green}{$\downarrow$} & 49.70$_{\pm\text{4.60}}$\textcolor{green}{$\downarrow$} \\
\multirow{2}{*}{Entropy} & Subject & 47.40$_{\pm\text{40.26}}$ & 22.84$_{\pm\text{13.20}}$ & 59.95$_{\pm\text{26.99}}$ & 37.73$_{\pm\text{25.94}}$ & 32.05$_{\pm\text{15.22}}$ & 58.68$_{\pm\text{23.71}}$ & 36.05$_{\pm\text{18.11}}$ & 35.90$_{\pm\text{16.19}}$ & 57.72$_{\pm\text{21.84}}$ \\
 & MDD & 24.38$_{\pm\text{23.28}}$\textcolor{green}{$\downarrow$} & 36.60$_{\pm\text{9.47}}$\textcolor{red}{$\uparrow$} & 68.74$_{\pm\text{19.96}}$\textcolor{red}{$\uparrow$} & 29.45$_{\pm\text{28.94}}$\textcolor{green}{$\downarrow$} & 44.86$_{\pm\text{25.46}}$\textcolor{red}{$\uparrow$} & 58.27$_{\pm\text{13.07}}$\textcolor{green}{$\downarrow$} & 33.17$_{\pm\text{25.94}}$\textcolor{green}{$\downarrow$} & 46.55$_{\pm\text{13.74}}$\textcolor{red}{$\uparrow$} & 59.05$_{\pm\text{11.16}}$\textcolor{red}{$\uparrow$} \\
\multirow{2}{*}{El2N} & Subject & 35.56$_{\pm\text{36.16}}$ & 20.30$_{\pm\text{5.77}}$ & 31.03$_{\pm\text{34.57}}$ & 36.51$_{\pm\text{25.38}}$ & 33.18$_{\pm\text{14.68}}$ & 32.70$_{\pm\text{30.84}}$ & 33.30$_{\pm\text{21.55}}$ & 35.82$_{\pm\text{12.10}}$ & 40.06$_{\pm\text{32.08}}$ \\
 & MDD & 4.51$_{\pm\text{4.67}}$\textcolor{green}{$\downarrow$} & 19.27$_{\pm\text{15.05}}$\textcolor{green}{$\downarrow$} & 44.09$_{\pm\text{25.52}}$\textcolor{red}{$\uparrow$} & 14.67$_{\pm\text{20.47}}$\textcolor{green}{$\downarrow$} & 22.51$_{\pm\text{11.13}}$\textcolor{green}{$\downarrow$} & 47.17$_{\pm\text{20.08}}$\textcolor{red}{$\uparrow$} & 14.60$_{\pm\text{18.20}}$\textcolor{green}{$\downarrow$} & 24.27$_{\pm\text{10.47}}$\textcolor{green}{$\downarrow$} & 48.58$_{\pm\text{17.72}}$\textcolor{red}{$\uparrow$} \\
\multirow{2}{*}{AUM} & Subject & 65.92$_{\pm\text{33.80}}$ & 56.68$_{\pm\text{11.11}}$ & 38.17$_{\pm\text{13.94}}$ & 60.95$_{\pm\text{30.91}}$ & 62.05$_{\pm\text{4.91}}$ & 36.83$_{\pm\text{8.78}}$ & 51.75$_{\pm\text{22.42}}$ & 59.09$_{\pm\text{5.72}}$ & 38.34$_{\pm\text{6.88}}$ \\
 & MDD & 42.32$_{\pm\text{42.32}}$\textcolor{green}{$\downarrow$} & 72.58$_{\pm\text{23.18}}$\textcolor{red}{$\uparrow$} & 48.75$_{\pm\text{30.04}}$\textcolor{red}{$\uparrow$} & 36.06$_{\pm\text{29.57}}$\textcolor{green}{$\downarrow$} & 63.74$_{\pm\text{19.11}}$\textcolor{red}{$\uparrow$} & 53.60$_{\pm\text{25.75}}$\textcolor{red}{$\uparrow$} & 36.70$_{\pm\text{19.54}}$\textcolor{green}{$\downarrow$} & 65.14$_{\pm\text{17.92}}$\textcolor{red}{$\uparrow$} & 53.85$_{\pm\text{22.38}}$\textcolor{red}{$\uparrow$} \\
\multirow{2}{*}{CCS} & Subject & 1.90$_{\pm\text{2.07}}$ & 30.53$_{\pm\text{14.15}}$ & 46.65$_{\pm\text{22.32}}$ & 2.92$_{\pm\text{3.24}}$ & 29.13$_{\pm\text{8.00}}$ & 51.78$_{\pm\text{24.12}}$ & 16.24$_{\pm\text{13.34}}$ & 32.56$_{\pm\text{14.15}}$ & 52.18$_{\pm\text{20.20}}$ \\
 & MDD & 11.75$_{\pm\text{11.52}}$\textcolor{red}{$\uparrow$} & 41.01$_{\pm\text{24.51}}$\textcolor{red}{$\uparrow$} & 24.68$_{\pm\text{20.39}}$\textcolor{green}{$\downarrow$} & 12.93$_{\pm\text{12.11}}$\textcolor{red}{$\uparrow$} & 44.33$_{\pm\text{21.58}}$\textcolor{red}{$\uparrow$} & 33.11$_{\pm\text{26.79}}$\textcolor{green}{$\downarrow$} & 18.45$_{\pm\text{10.26}}$\textcolor{red}{$\uparrow$} & 44.20$_{\pm\text{22.46}}$\textcolor{red}{$\uparrow$} & 33.65$_{\pm\text{26.74}}$\textcolor{green}{$\downarrow$} \\
\multirow{2}{*}{EVA} & Subject & 38.40$_{\pm\text{40.57}}$ & 62.03$_{\pm\text{26.64}}$ & 37.80$_{\pm\text{42.45}}$ & 43.37$_{\pm\text{19.92}}$ & 55.01$_{\pm\text{20.05}}$ & 49.56$_{\pm\text{33.28}}$ & 43.22$_{\pm\text{15.28}}$ & 53.51$_{\pm\text{14.70}}$ & 65.49$_{\pm\text{21.99}}$ \\
 & MDD & 31.99$_{\pm\text{32.72}}$\textcolor{green}{$\downarrow$} & 30.57$_{\pm\text{29.76}}$\textcolor{green}{$\downarrow$} & 42.34$_{\pm\text{42.47}}$\textcolor{red}{$\uparrow$} & 30.85$_{\pm\text{30.00}}$\textcolor{green}{$\downarrow$} & 31.90$_{\pm\text{30.09}}$\textcolor{green}{$\downarrow$} & 42.55$_{\pm\text{25.48}}$\textcolor{green}{$\downarrow$} & 41.33$_{\pm\text{15.02}}$\textcolor{green}{$\downarrow$} & 28.07$_{\pm\text{23.41}}$\textcolor{green}{$\downarrow$} & 53.45$_{\pm\text{16.83}}$\textcolor{green}{$\downarrow$} \\
\multirow{2}{*}{BOSS} & Subject & 15.98$_{\pm\text{25.58}}$ & 42.11$_{\pm\text{24.33}}$ & 35.36$_{\pm\text{9.21}}$ & 29.44$_{\pm\text{11.05}}$ & 40.57$_{\pm\text{23.37}}$ & 39.15$_{\pm\text{6.71}}$ & 31.45$_{\pm\text{9.45}}$ & 38.24$_{\pm\text{19.65}}$ & 38.92$_{\pm\text{5.97}}$ \\
 & MDD & 22.90$_{\pm\text{23.99}}$\textcolor{red}{$\uparrow$} & 51.70$_{\pm\text{13.14}}$\textcolor{red}{$\uparrow$} & 40.08$_{\pm\text{31.21}}$\textcolor{red}{$\uparrow$} & 22.92$_{\pm\text{18.55}}$\textcolor{green}{$\downarrow$} & 51.87$_{\pm\text{12.74}}$\textcolor{red}{$\uparrow$} & 39.62$_{\pm\text{29.12}}$\textcolor{red}{$\uparrow$} & 29.20$_{\pm\text{17.36}}$\textcolor{green}{$\downarrow$} & 51.71$_{\pm\text{10.03}}$\textcolor{red}{$\uparrow$} & 40.14$_{\pm\text{28.44}}$\textcolor{red}{$\uparrow$} \\
\bottomrule
\end{tabular}}
\end{table}

We first evaluate the performance of baseline methods trained with MDD diagnosis labels for core-set selection in the brain fingerprinting ranking task, using the results from subject-identity supervision (as reported in the main paper) as reference. This alternative labeling scheme aligns with the intended design of most baseline algorithms, which aim to select core-sets based on the target task labels (i.e., MDD vs. HC diagnosis labels from the REST-meta-MDD dataset).

As shown in Table~\ref{Tab:mdd2bf}, this supervision shift generally leads to performance degradation across most methods, suggesting a potential mismatch between the binary nature of MDD labels and the requirements of brain fingerprinting, which involves a one-vs-all subject-level identification task. Specifically, training with MDD labels provides weaker sample-level supervision due to reduced class granularity, potentially limiting the diversity captured during core-set selection. Consequently, the selected samples may fail to adequately support the subject-wise discriminative capacity of SPIs.

Nevertheless, certain methods such as AUM and BOSS exhibit improved ranking stability at higher sampling ratios. This may be attributed to their scoring strategies, which explicitly account for sample diversity or informativeness. Under this paradigm, incorporating diagnosis-based labels introduces an additional semantic dimension that enhances the selection process, enabling these methods to better preserve the representational structure of the full dataset.

\subsection{Using Subject Label for MDD Ranking}

\begin{table}[ht]
\centering
\caption{Comparison of MDD diagnosis ranking with MDD labels and with subject identities(mean $\pm$ std). Arrows indicate change under subject supervision.}
\resizebox{\textwidth}{!}{
\begin{tabular}{l|l|ccc|ccc|ccc}
\toprule
\multirow{2}{*}{Method} & \multirow{2}{*}{Label} & \multicolumn{3}{c|}{nDCG@5} & \multicolumn{3}{c|}{nDCG@10} & \multicolumn{3}{c}{nDCG@20} \\
& & 0.1 & 0.3 & 0.5 & 0.1 & 0.3 & 0.5 & 0.1 & 0.3 & 0.5 \\
\midrule
\multirow{2}{*}{Forgetting} & MDD & 28.70$_{\pm\text{28.06}}$ & 41.60$_{\pm\text{28.08}}$ & 61.42$_{\pm\text{4.31}}$ & 40.56$_{\pm\text{28.06}}$ & 50.27$_{\pm\text{32.37}}$ & 66.72$_{\pm\text{8.60}}$ & 44.37$_{\pm\text{28.07}}$ & 57.02$_{\pm\text{33.39}}$ & 75.70$_{\pm\text{7.58}}$ \\
 & Subject & 24.64$_{\pm\text{28.64}}$\textcolor{green}{$\downarrow$} & 43.69$_{\pm\text{36.57}}$\textcolor{red}{$\uparrow$} & 87.74$_{\pm\text{10.67}}$\textcolor{red}{$\uparrow$} & 28.52$_{\pm\text{30.48}}$\textcolor{green}{$\downarrow$} & 46.33$_{\pm\text{37.31}}$\textcolor{green}{$\downarrow$} & 88.57$_{\pm\text{11.93}}$\textcolor{red}{$\uparrow$} & 32.88$_{\pm\text{32.35}}$\textcolor{green}{$\downarrow$} & 51.60$_{\pm\text{38.74}}$\textcolor{green}{$\downarrow$} & 90.93$_{\pm\text{8.89}}$\textcolor{red}{$\uparrow$} \\
\multirow{2}{*}{Entropy} & MDD & 29.00$_{\pm\text{45.10}}$ & 31.84$_{\pm\text{46.19}}$ & 57.48$_{\pm\text{29.81}}$ & 29.17$_{\pm\text{43.24}}$ & 40.75$_{\pm\text{44.05}}$ & 63.23$_{\pm\text{23.44}}$ & 30.30$_{\pm\text{43.52}}$ & 45.85$_{\pm\text{45.00}}$ & 70.79$_{\pm\text{18.00}}$ \\
 & Subject & 53.09$_{\pm\text{17.82}}$\textcolor{red}{$\uparrow$} & 46.99$_{\pm\text{34.99}}$\textcolor{red}{$\uparrow$} & 66.46$_{\pm\text{19.72}}$\textcolor{red}{$\uparrow$} & 58.79$_{\pm\text{16.78}}$\textcolor{red}{$\uparrow$} & 52.25$_{\pm\text{36.91}}$\textcolor{red}{$\uparrow$} & 75.19$_{\pm\text{18.27}}$\textcolor{red}{$\uparrow$} & 65.36$_{\pm\text{13.54}}$\textcolor{red}{$\uparrow$} & 56.61$_{\pm\text{37.02}}$\textcolor{red}{$\uparrow$} & 81.40$_{\pm\text{14.68}}$\textcolor{red}{$\uparrow$} \\
\multirow{2}{*}{El2N} & MDD & 30.93$_{\pm\text{41.17}}$ & 51.70$_{\pm\text{28.14}}$ & 68.85$_{\pm\text{12.46}}$ & 36.05$_{\pm\text{24.41}}$ & 58.54$_{\pm\text{27.55}}$ & 72.96$_{\pm\text{15.77}}$ & 41.04$_{\pm\text{39.61}}$ & 64.68$_{\pm\text{26.25}}$ & 78.85$_{\pm\text{14.56}}$ \\
 & Subject & 68.75$_{\pm\text{21.44}}$\textcolor{red}{$\uparrow$} & 42.90$_{\pm\text{32.24}}$\textcolor{green}{$\downarrow$} & 60.18$_{\pm\text{17.44}}$\textcolor{green}{$\downarrow$} & 71.38$_{\pm\text{22.07}}$\textcolor{red}{$\uparrow$} & 50.92$_{\pm\text{30.16}}$\textcolor{green}{$\downarrow$} & 66.67$_{\pm\text{18.04}}$\textcolor{green}{$\downarrow$} & 77.12$_{\pm\text{16.54}}$\textcolor{red}{$\uparrow$} & 56.72$_{\pm\text{28.89}}$\textcolor{green}{$\downarrow$} & 73.55$_{\pm\text{15.61}}$\textcolor{green}{$\downarrow$} \\
\multirow{2}{*}{AUM} & MDD & 36.07$_{\pm\text{12.59}}$ & 42.68$_{\pm\text{46.16}}$ & 35.49$_{\pm\text{14.37}}$ & 39.17$_{\pm\text{14.58}}$ & 44.41$_{\pm\text{44.05}}$ & 58.14$_{\pm\text{16.54}}$ & 44.94$_{\pm\text{18.48}}$ & 48.44$_{\pm\text{42.48}}$ & 64.36$_{\pm\text{14.40}}$ \\
 & Subject & 41.59$_{\pm\text{10.89}}$\textcolor{red}{$\uparrow$} & 52.72$_{\pm\text{39.35}}$\textcolor{red}{$\uparrow$} & 65.20$_{\pm\text{46.67}}$\textcolor{red}{$\uparrow$} & 53.62$_{\pm\text{5.24}}$\textcolor{red}{$\uparrow$} & 58.66$_{\pm\text{37.39}}$\textcolor{red}{$\uparrow$} & 67.17$_{\pm\text{44.70}}$\textcolor{red}{$\uparrow$} & 53.62$_{\pm\text{5.24}}$\textcolor{red}{$\uparrow$} & 65.26$_{\pm\text{36.40}}$\textcolor{red}{$\uparrow$} & 70.07$_{\pm\text{41.78}}$\textcolor{red}{$\uparrow$} \\
\multirow{2}{*}{CCS} & MDD & 55.95$_{\pm\text{16.28}}$ & 59.92$_{\pm\text{30.08}}$ & 74.73$_{\pm\text{12.46}}$ & 59.25$_{\pm\text{16.26}}$ & 67.01$_{\pm\text{24.61}}$ & 75.41$_{\pm\text{18.08}}$ & 68.98$_{\pm\text{15.67}}$ & 71.62$_{\pm\text{21.15}}$ & 78.77$_{\pm\text{13.49}}$ \\
 & Subject & 47.81$_{\pm\text{27.72}}$\textcolor{green}{$\downarrow$} & 25.25$_{\pm\text{8.66}}$\textcolor{green}{$\downarrow$} & 56.95$_{\pm\text{27.90}}$\textcolor{green}{$\downarrow$} & 55.46$_{\pm\text{21.63}}$\textcolor{green}{$\downarrow$} & 32.40$_{\pm\text{8.73}}$\textcolor{green}{$\downarrow$} & 62.59$_{\pm\text{25.19}}$\textcolor{green}{$\downarrow$} & 62.04$_{\pm\text{17.97}}$\textcolor{green}{$\downarrow$} & 41.36$_{\pm\text{7.82}}$\textcolor{green}{$\downarrow$} & 69.14$_{\pm\text{19.28}}$\textcolor{green}{$\downarrow$} \\
\multirow{2}{*}{EVA} & MDD & 31.80$_{\pm\text{17.70}}$ & 66.81$_{\pm\text{9.12}}$ & 70.07$_{\pm\text{6.05}}$ & 36.11$_{\pm\text{14.79}}$ & 72.61$_{\pm\text{8.73}}$ & 76.26$_{\pm\text{6.17}}$ & 48.39$_{\pm\text{19.48}}$ & 75.34$_{\pm\text{8.48}}$ & 81.73$_{\pm\text{5.61}}$ \\
 & Subject & 21.25$_{\pm\text{11.48}}$\textcolor{green}{$\downarrow$} & 67.32$_{\pm\text{30.66}}$\textcolor{red}{$\uparrow$} & 75.63$_{\pm\text{16.84}}$\textcolor{red}{$\uparrow$} & 28.40$_{\pm\text{10.75}}$\textcolor{green}{$\downarrow$} & 73.84$_{\pm\text{21.80}}$\textcolor{red}{$\uparrow$} & 79.06$_{\pm\text{15.46}}$\textcolor{red}{$\uparrow$} & 36.98$_{\pm\text{10.08}}$\textcolor{green}{$\downarrow$} & 80.82$_{\pm\text{15.48}}$\textcolor{red}{$\uparrow$} & 83.18$_{\pm\text{12.42}}$\textcolor{red}{$\uparrow$} \\
\multirow{2}{*}{BOSS} & MDD & 42.44$_{\pm\text{24.37}}$ & 57.52$_{\pm\text{20.11}}$ & 79.57$_{\pm\text{16.62}}$ & 50.64$_{\pm\text{25.12}}$ & 64.86$_{\pm\text{21.86}}$ & 84.70$_{\pm\text{14.07}}$ & 58.11$_{\pm\text{23.22}}$ & 71.36$_{\pm\text{18.94}}$ & 88.95$_{\pm\text{9.49}}$ \\
 & Subject & 39.60$_{\pm\text{37.05}}$\textcolor{green}{$\downarrow$} & 67.40$_{\pm\text{13.42}}$\textcolor{red}{$\uparrow$} & 79.51$_{\pm\text{17.82}}$\textcolor{green}{$\downarrow$} & 45.30$_{\pm\text{31.25}}$\textcolor{green}{$\downarrow$} & 68.47$_{\pm\text{15.58}}$\textcolor{red}{$\uparrow$} & 86.21$_{\pm\text{9.45}}$\textcolor{red}{$\uparrow$} & 54.29$_{\pm\text{2.51}}$\textcolor{green}{$\downarrow$} & 77.59$_{\pm\text{11.02}}$\textcolor{red}{$\uparrow$} & 87.29$_{\pm\text{8.34}}$\textcolor{green}{$\downarrow$} \\
\bottomrule
\end{tabular}}
\end{table}

As a complementary analysis, we also evaluate the performance of baselines trained with subject identity labels for core-set selection in the MDD diagnosis ranking task. This setting closely aligns with our proposed SCLCS framework and allows us to investigate whether individual-level supervision provides a stronger basis for core-set construction.

Compared to the MDD-supervised setting, we observe that a greater number of methods benefit from improved ranking stability under subject identity supervision. For instance, Entropy, AUM, and El2N exhibit consistent performance gains at higher sampling ratios (e.g., AUM improves from 58.14\% to 67.17\% in nDCG@10@0.5, and Entropy from 63.23\% to 75.19\%). This trend suggests that supervision aligned with subject-level heterogeneity may better preserve fine-grained information necessary for identifying high-quality representative samples.

Despite these improvements, a clear performance gap remains between all baselines and our method, as shown in Table~\ref{tab:MRT}. This highlights the non-trivial advantage of \textbf{SCLCS}, where structural perturbation scoring and density-aware sampling jointly enforce both informativeness and diversity in selected subsets.

Interestingly, AUM shows consistent gains across nearly all metrics, suggesting that its original scoring, formulated under task-specific supervision, may underestimate structural variation among samples. Subject-based training appears to compensate for this limitation by injecting more diverse contrastive signals during scoring, revealing a potential direction for enhancing its robustness.

Entropy also achieves performance gains across the board, with improvements as high as 11.6\% in nDCG@20 (from 70.79\% to 81.40\%). However, as shown in Figure~\ref{balance}, Entropy consistently selects highly imbalanced subsets regardless of supervision label, often dominated by a small number of subjects. This structural bias undermines its utility for benchmarking core-set methods, as it fails to preserve a representative distribution of the dataset. Thus, despite the numerical improvements, Entropy remains unsuitable for core-set-based SPI evaluation.

These findings reinforce the need to consider both label alignment and structural diversity in core-set selection. Subject-level supervision offers a promising direction, but our method’s explicit modeling of structure-aware consistency and coverage remains critical for reliable benchmarking.

\section{Influence of the Proposed Density-balanced Sampling on Baselines}\label{app:density-bal}
\begin{figure}[ht!]
  \centering
  \includegraphics[width=0.9\linewidth]{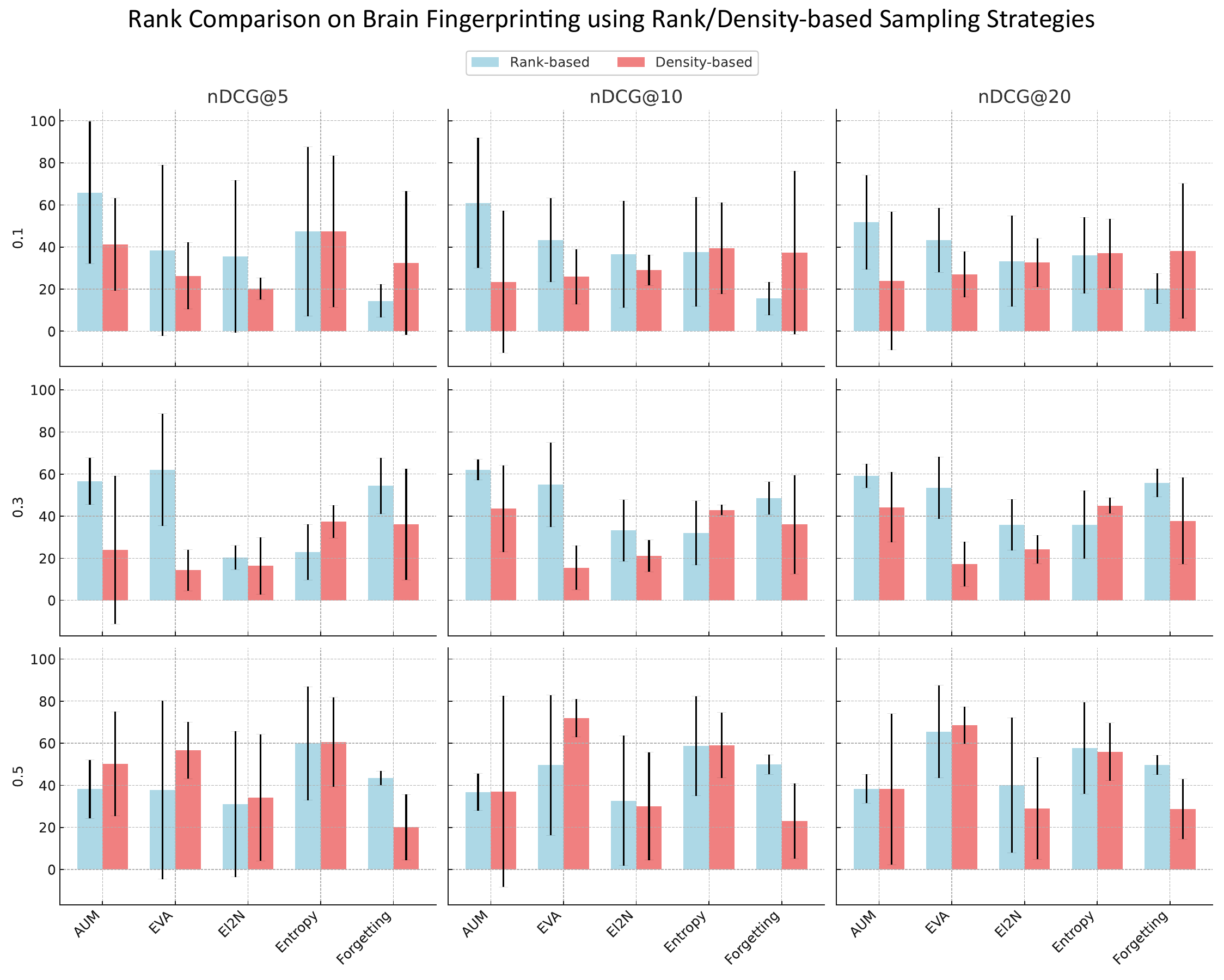}
  \caption{Rank comparison on brain fingerprinting using rank/density-based sampling strategies.}
  \label{fig:rankdensebfcomp}
\end{figure}

\begin{figure}[ht!]
  \centering
  \includegraphics[width=0.9\linewidth]{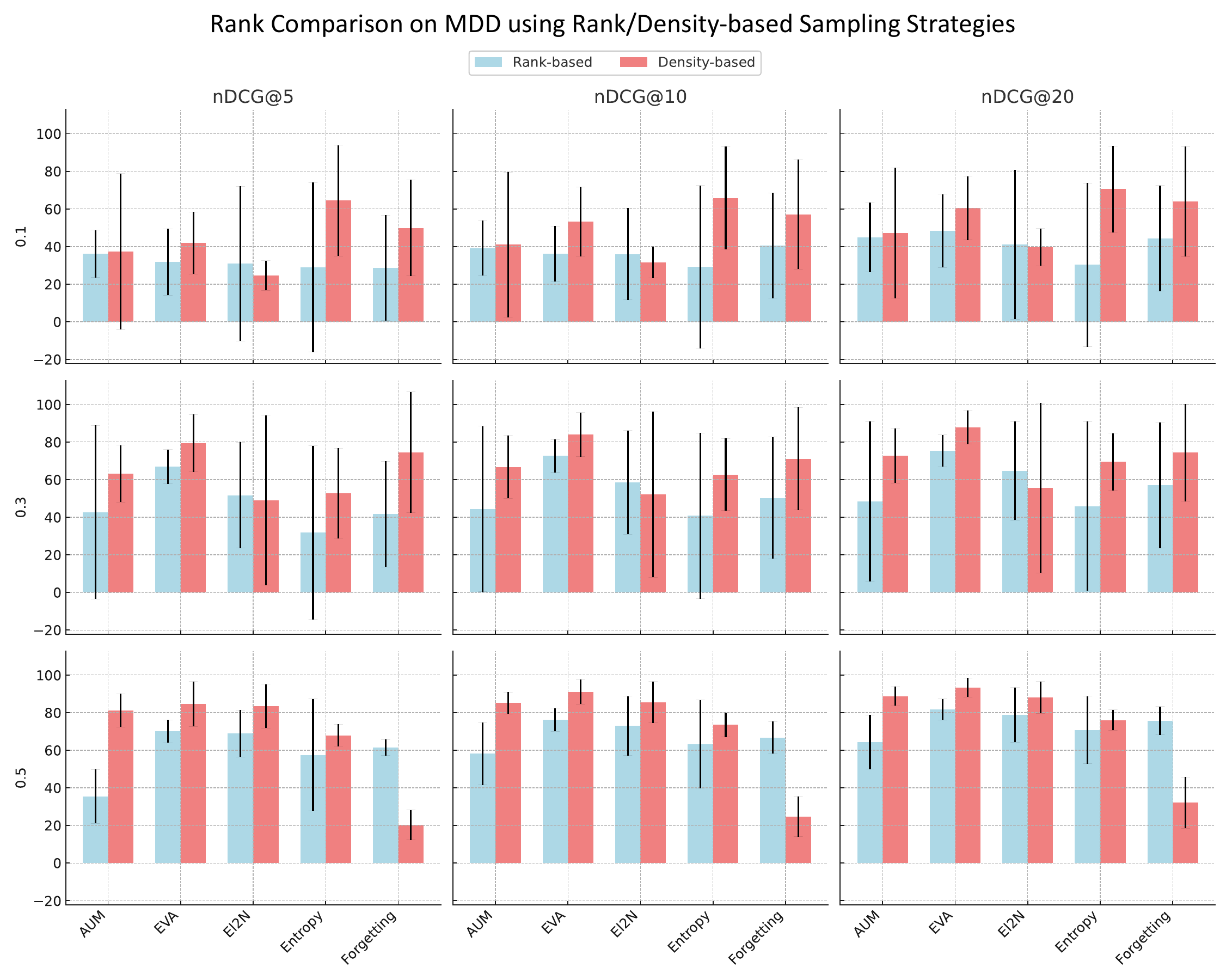}
  \caption{Rank comparison on MDD diagnosis using rank/density-based sampling strategies.}
  \label{fig:rankdensemdcomp}
\end{figure}
As stated in \textbf{Section~\ref{sec:structure_aware_sampling}}, our proposed density-based sampling strategy is not only central to the \textbf{SCLCS} framework, but also generalizable to other score-based core-set selection methods. To evaluate its applicability beyond our method, we visualize comparative results on the Brain Fingerprinting and MDD Diagnosis tasks in Figure~\ref{fig:rankdensebfcomp} and Figure~\ref{fig:rankdensemdcomp}, respectively. We exclude CCS and BOSS from this analysis due to their use of task-specific sampling designs.

On the Brain Fingerprinting task, density-based sampling frequently alters SPI ranking performance across baselines. For example, El2N underperforms in multiple metrics when density is applied (e.g., nDCG@5 at ratio 0.1 drops from $\sim$40 to $\sim$20), while EVA shows inconsistent trends across sampling ratios. This instability may be attributed to the fact that, when supervised with subject identity, score-based methods tend to prioritize samples that support subject-level diversity. Replacing this priority with a density-based criterion may inadvertently distort the structural balance of the selected subset.

Conversely, in the MDD Diagnosis setting, density-based sampling consistently improves performance. Baselines such as EVA and Entropy show marked gains in ranking stability, with EVA's nDCG@10 increasing from $\sim$70 to $\sim$90 at sampling ratio 0.5. This contrast suggests that when supervision involves fewer discrete classes (e.g., binary labels), the density structure becomes easier to estimate and more semantically aligned with the downstream evaluation objective.

An interesting deviation is observed in the Forgetting method. At high sampling ratios in both tasks, its performance noticeably declines under density-based sampling. This may stem from Forgetting’s underlying assumption: that low-confidence samples correspond to noisy or uninformative data, which does not hold well for fMRI. Due to the complex nature of brain dynamics, such samples may actually be densely clustered and structurally meaningful. Consequently, density-based sampling could overemphasize regions marked by high forgetting scores, thereby degrading ranking consistency.

In summary, density-based sampling demonstrates strong compatibility with several baseline strategies, highlighting its potential as a general-purpose augmentation. However, the interaction between density criteria and different scoring heuristics can be nontrivial, and may lead to unintended trade-offs in performance. These results underscore the importance of further empirical studies to better understand the conditions determining whether density-based selection benefits or interferes with score-driven core-set construction.

\section{Empirical Results for Theorems}\label{app:theorem_exp}
\begin{figure}[!th]
\centering
\includegraphics[width=\textwidth]{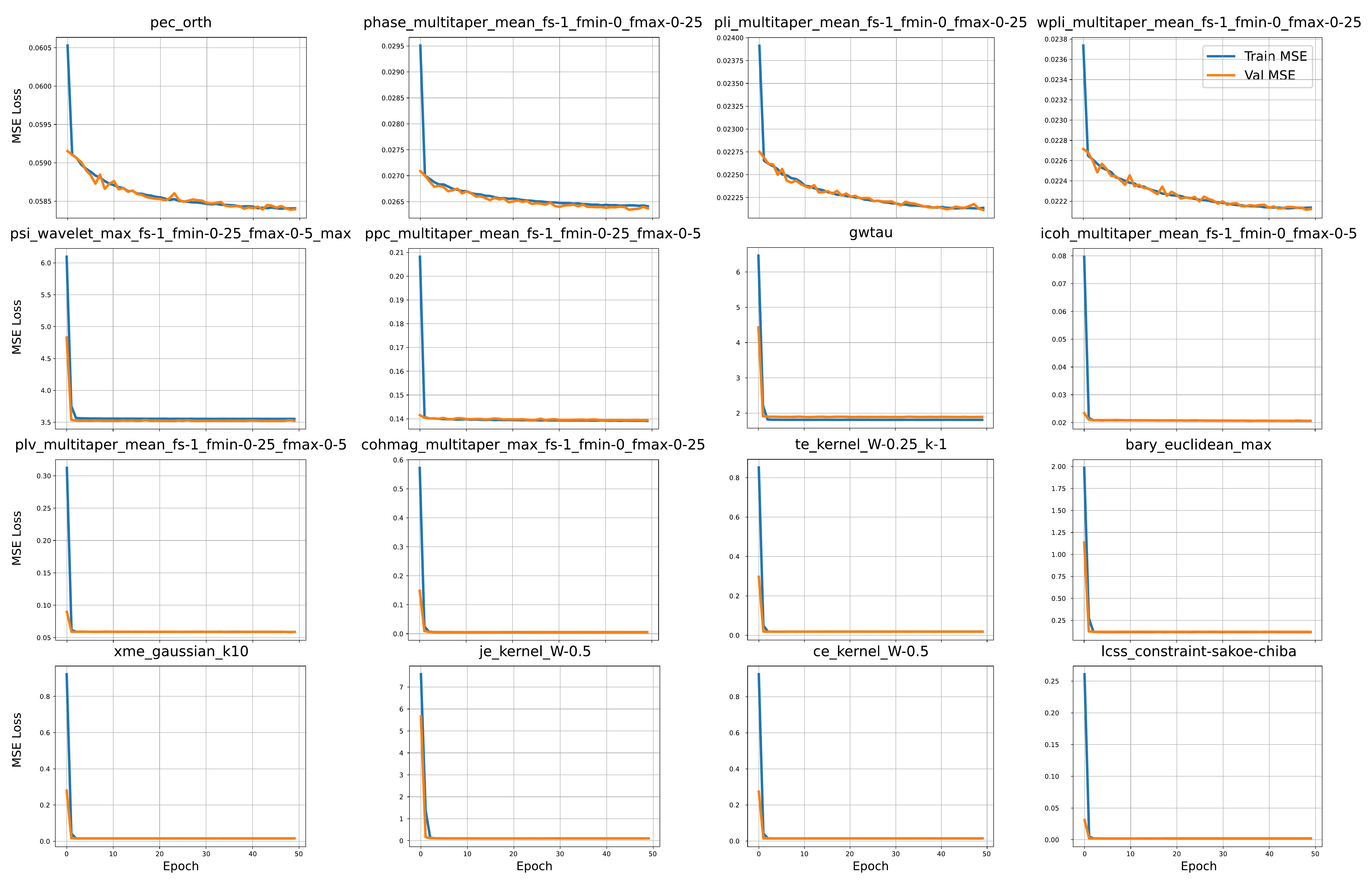}
\caption{Training and validation MSE loss convergence curves for the 16 SPI operators used in the empirical validation of \textbf{Theorem 2}.}
\label{fig:convergence_universal}
\end{figure}

\subsection{Convergence of Universal Approximation}\label{convuni}
\paragraph{Experimental Setup}
To empirically validate the universal approximation capability of our modified Transformer architecture, as posited in \textbf{Theorem~\ref{thm:universal}}, we designed and conducted a direct fitting experiment. The theorem states that our model architecture is, in principle, capable of approximating any continuous Statistical Pairwise Interaction (SPI) operator.

The core objective of this experiment was to train instances of our model to directly mimic the Functional Connectivity (FC) matrices produced by specific SPI operators. We trained a separate model instance for each operator in a diverse set of 16 representative SPIs. The experimental setup was as follows:

\begin{itemize}
\item \textbf{Input}: The input for each model was an fMRI time series sample $\mathbf{X}\in \mathbb{R}^{N\times T}$ , consistent with the primary data used in our paper.
\item \textbf{Target}: For each model, the learning target was the "ground-truth" FC matrix $\mathbf{A}_S\in \mathbb{R}^{N\times N}$, computed for that specific fMRI sample using the corresponding SPI operator.
\item \textbf{Loss Function}: We optimized the model parameters by directly minimizing the Mean Squared Error (MSE) between the model's output adjacency matrix $\mathbf{A}$ and the target matrix $\mathbf{A}_S$. Specifically, we use $\operatorname{MSE}((\mathbf{A}+\mathbf{A}^T)/2,(\mathbf{A}_S+\mathbf{A}_S^T)/2)$ to ensure the symmetry of the adjacencies.
\item \textbf{Training and Validation}: We partitioned the dataset into training, validation, and testing sets with a 70\%/10\%/20\% split. During training, we employed early stopping (patience=10) based on the validation set performance to ensure the model learned a generalizable transformation rather than merely overfitting to the training data.
\end{itemize}

\paragraph{Results and Analysis}

Figure~\ref{fig:convergence_universal} displays the convergence curves from the training process for all 16 SPI operators. Each subplot in the figure represents an independent model instance, with the x-axis denoting the training epoch and the y-axis representing the MSE Loss. The blue line indicates the MSE on the training set, while the orange line represents the MSE on the validation set.

As the plots clearly demonstrate, our model architecture exhibits a strong capacity to fit all 16 SPIs, regardless of their diverse underlying computational principles. Key observations include:

\begin{enumerate}
\item \textbf{Successful Convergence}: In all experiments, both the training and validation MSE decrease rapidly from a high initial value and eventually converge to a stable, low level. This indicates that the optimization process was successful and that the model effectively learned the mapping from the fMRI time series to the target FC matrix.
\item \textbf{Good Generalization}: The validation loss curves closely track the training loss curves without significant divergence. This confirms that the models did not overfit and that the learned approximations generalize well to unseen data.
\end{enumerate}

These convergence curves, combined with the low final test MSE values reported in Table~\ref{tab:theorem2_validation} of the main text, provide strong empirical support for \textbf{Theorem~\ref{thm:universal}}. The results collectively confirm that our proposed modified Transformer architecture possesses the practical expressive power required to represent the diverse functional forms inherent to our benchmarking task.

\subsection{Stationary and Convergence Analysis}\label{app:staconv}

\begin{figure}[!ht]
  \centering
  \includegraphics[width=0.8\linewidth]{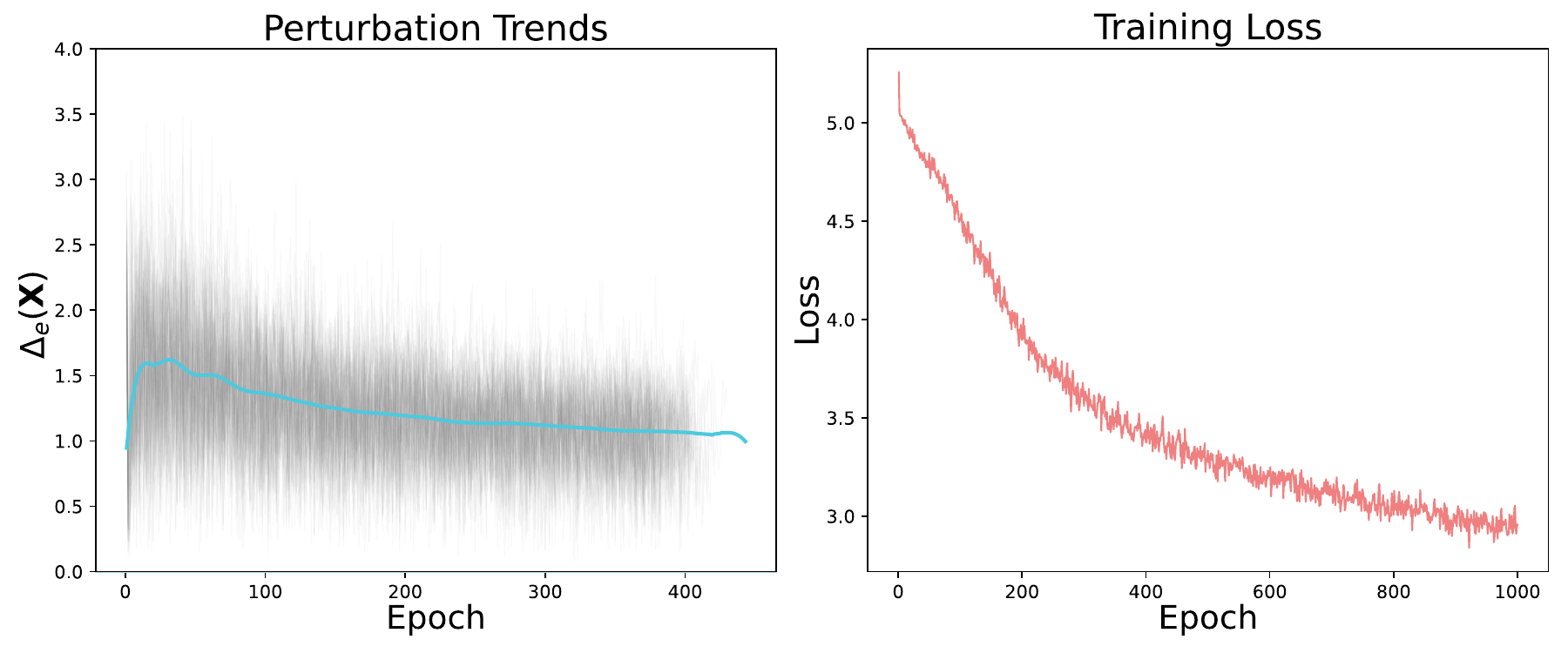}
  \caption{Convergence dynamics of \textbf{SCLCS}. \textbf{(Left)} Perturbation trends show that the mean \textbf{SPS} stabilizes relatively early in training. \textbf{(Right)} The training loss converges more gradually over $1000$ epochs. The different x-axes are used to visualize the distinct convergence timescales of each metric.}
  \label{fig:convergence}
\end{figure}

As demonstrated in \textbf{Lemma~\ref{lem:sps_consistency}}, the reliability of the Structural Perturbation Score (\textbf{SPS}) relies on the assumption that the per-epoch difference $\Delta_e(\mathbf{X}) = \|\mathbf{A}^{(e)}_{(\mathbf{X})}-\mathbf{A}^{(e-1)}_{(\mathbf{X})}\|_F^2$ is stationary and ergodic. Since our training does not incorporate validation-based early stopping, we use model convergence as an implicit criterion for termination. To empirically support this, Figure~\ref{fig:convergence} visualizes the convergence trends of structural perturbation and training loss.

Notably, these two metrics converge on different effective timescales. The left panel shows that the mean \textbf{SPS} (blue curve), which reflects the stability of the learned connectivity structures, reaches a stable plateau relatively early (around epoch 500). The initial spike in perturbation reflects an expected "burn-in" phase before the model learns stable representations. In contrast, the right panel shows that the training loss continues to decrease more gradually over the full 1000 epochs as the model makes finer adjustments to the embedding space to fully optimize the contrastive objective.

The clear convergence of both metrics, despite their different timescales, provides strong empirical validation for the assumptions in \textbf{Lemma~\ref{lem:sps_consistency}}. This confirms that \textbf{SPS} is a stable and consistent measure of structural influence, and the overall loss convergence supports the robustness of our end-to-end training pipeline for core-set selection.

\section{Generalization and Robustness Analysis}
\label{app:GandR}
This section provides experiment results to address the concerns regarding site-level generalization and SPI subset bias to prove the impact of the proposed task.
\subsection{Extension on SPI Properties}

\begin{table*}[t]
\centering
\caption{Performance comparison: brain fingerprinting vs. MDD diagnosis}
\label{tab:side_by_side_comparison}
\scriptsize 
\setlength{\tabcolsep}{2pt} 

\resizebox{\textwidth}{!}{
\begin{tabular}{l|ccccccccc|ccccccccc}
\toprule
\textbf{Method} & \multicolumn{9}{c|}{\textbf{Brain Fingerprinting}} & \multicolumn{9}{c}{\textbf{MDD Diagnosis}} \\
 & \multicolumn{3}{c}{nDCG@5} & \multicolumn{3}{c}{nDCG@10} & \multicolumn{3}{c|}{nDCG@20} & \multicolumn{3}{c}{nDCG@5} & \multicolumn{3}{c}{nDCG@10} & \multicolumn{3}{c}{nDCG@20} \\
\cmidrule(lr){2-4} \cmidrule(lr){5-7} \cmidrule(lr){8-10} \cmidrule(lr){11-13} \cmidrule(lr){14-16} \cmidrule(lr){17-19}
Ratio & 0.1 & 0.3 & 0.5 & 0.1 & 0.3 & 0.5 & 0.1 & 0.3 & 0.5 & 0.1 & 0.3 & 0.5 & 0.1 & 0.3 & 0.5 & 0.1 & 0.3 & 0.5 \\
\midrule
\multicolumn{19}{l}{\textbf{SECTION A: Slow-Only SPIs}} \\
\midrule
Random       & 0.883 & 0.863 & 0.805 & 0.903 & 0.898 & 0.835 & 0.921 & 0.923 & 0.863 & 0.946 & 0.998 & 0.997 & 0.962 & 0.998 & 0.998 & 0.975 & 0.999 & 0.998 \\
Forgetting   & 0.591 & 0.441 & 0.837 & 0.614 & 0.478 & 0.871 & 0.642 & 0.514 & 0.898 & 0.781 & 0.946 & 0.997 & 0.839 & 0.965 & 0.998 & 0.886 & 0.977 & 0.999 \\
Entropy      & 0.544 & 0.963 & 0.907 & 0.615 & 0.970 & 0.977 & 0.683 & 0.975 & 0.982 & 0.604 & 0.636 & 0.941 & 0.619 & 0.640 & 0.959 & 0.628 & 0.652 & 0.973 \\
EI2N         & 0.887 & 0.852 & 0.928 & 0.912 & 0.884 & 0.945 & 0.928 & 0.909 & 0.958 & 0.967 & 0.897 & 0.994 & 0.976 & 0.930 & 0.996 & 0.983 & 0.957 & 0.997 \\
AUM          & 0.835 & 0.983 & 0.808 & 0.866 & 0.984 & 0.848 & 0.905 & 0.983 & 0.881 & 0.934 & 0.849 & 0.933 & 0.953 & 0.890 & 0.955 & 0.963 & 0.923 & 0.971 \\
CCS          & 0.785 & 0.737 & 0.845 & 0.822 & 0.790 & 0.880 & 0.859 & 0.832 & 0.908 & 0.840 & 0.966 & 0.999 & 0.887 & 0.978 & 0.999 & 0.923 & 0.985 & 0.999 \\
EVA          & 0.628 & 0.906 & 0.650 & 0.663 & 0.923 & 0.707 & 0.701 & 0.935 & 0.761 & 0.795 & 0.997 & 0.999 & 0.848 & 0.998 & 0.999 & 0.895 & 0.999 & 0.999 \\
BOSS         & 0.385 & 0.956 & 0.602 & 0.444 & 0.951 & 0.656 & 0.521 & 0.945 & 0.710 & 0.636 & 0.968 & 0.992 & 0.760 & 0.976 & 0.995 & 0.843 & 0.988 & 0.995 \\
\textbf{SCLCS} & 0.879 & 0.941 & 0.999 & 0.907 & 0.955 & 0.999 & 0.932 & 0.961 & 0.999 & 0.874 & 0.983 & 0.997 & 0.916 & 0.989 & 0.998 & 0.945 & 0.993 & 0.999 \\
\textbf{SCLCS}$_{\text{Dense}}$ & 0.903 & 0.962 & 0.943 & 0.922 & 0.967 & 0.956 & 0.935 & 0.969 & 0.964 & 0.954 & 0.999 & 0.999 & 0.969 & 0.999 & 0.999 & 0.979 & 0.999 & 0.999 \\
\midrule
\multicolumn{19}{l}{\textbf{SECTION B: Mixture SPIs}} \\
\midrule
Random       & 0.280 & 0.680 & 0.646 & 0.389 & 0.689 & 0.715 & 0.392 & 0.765 & 0.722 & 0.504 & 0.242 & 0.718 & 0.601 & 0.263 & 0.779 & 0.633 & 0.306 & 0.828 \\
Forgetting   & 0.323 & 0.464 & 0.711 & 0.421 & 0.485 & 0.742 & 0.465 & 0.497 & 0.749 & 0.255 & 0.415 & 0.634 & 0.380 & 0.502 & 0.670 & 0.432 & 0.563 & 0.758 \\
Entropy      & 0.442 & 0.660 & 0.663 & 0.485 & 0.656 & 0.678 & 0.551 & 0.668 & 0.684 & 0.289 & 0.327 & 0.565 & 0.291 & 0.414 & 0.621 & 0.307 & 0.467 & 0.702 \\
EI2N         & 0.485 & 0.441 & 0.531 & 0.552 & 0.515 & 0.570 & 0.564 & 0.570 & 0.623 & 0.310 & 0.522 & 0.656 & 0.360 & 0.576 & 0.705 & 0.416 & 0.646 & 0.761 \\
AUM          & 0.491 & 0.680 & 0.870 & 0.565 & 0.737 & 0.882 & 0.604 & 0.767 & 0.877 & 0.361 & 0.415 & 0.536 & 0.403 & 0.435 & 0.585 & 0.452 & 0.478 & 0.640 \\
CCS          & 0.228 & 0.480 & 0.495 & 0.345 & 0.478 & 0.507 & 0.352 & 0.576 & 0.549 & 0.469 & 0.599 & 0.747 & 0.519 & 0.670 & 0.754 & 0.628 & 0.699 & 0.790 \\
EVA          & 0.193 & 0.527 & 0.601 & 0.362 & 0.597 & 0.622 & 0.396 & 0.640 & 0.655 & 0.274 & 0.529 & 0.688 & 0.353 & 0.570 & 0.755 & 0.440 & 0.616 & 0.812 \\
BOSS         & 0.438 & 0.592 & 0.505 & 0.433 & 0.662 & 0.531 & 0.428 & 0.687 & 0.561 & 0.373 & 0.620 & 0.796 & 0.435 & 0.620 & 0.859 & 0.519 & 0.721 & 0.879 \\
\textbf{SCLCS} & 0.790 & 0.704 & 0.785 & 0.795 & 0.734 & 0.807 & 0.823 & 0.755 & 0.819 & 0.279 & 0.703 & 0.631 & 0.303 & 0.708 & 0.613 & 0.368 & 0.764 & 0.680 \\
\textbf{SCLCS}$_{\text{Dense}}$ & 0.442 & 0.758 & 0.812 & 0.552 & 0.751 & 0.827 & 0.576 & 0.750 & 0.827 & 0.447 & 0.449 & 0.716 & 0.522 & 0.533 & 0.706 & 0.621 & 0.594 & 0.752 \\
\bottomrule
\end{tabular}
}
\end{table*}

We create two new benchmark sets: (1) A "Slower-Only" set containing 50 SPIs with $>1$s compute time (up to $10$s), and (2) A "Mixture" set combining these $50$ slow SPIs with our original $130$ fast SPIs (Total = $180$). Results are reported in Table~\ref{tab:side_by_side_comparison}.

The findings are two-fold:
\begin{itemize}
    \item On "Slow-Only": Surprisingly, the ranking task becomes "easier." Even the Random baseline performs robustly (e.g., nDCG@5 > 0.9 on the MDD task). This suggests that for this specific subset of slower methods, performance is less sensitive to specific data selection.
    \item On "Mixture" (Full Benchmark): However, when combining fast and slow \textbf{SPIs}, \textbf{SCLCS} re-emerges as the clear winner. On the Brain Fingerprinting task ($10\%$ ratio), \textbf{SCLCS} achieves an nDCG@5 of $0.79$, whereas Random collapses to $0.28$.
\end{itemize}

It proves that simple heuristics (like Random) are brittle and fail in realistic, heterogeneous benchmarking scenarios (the "Mixture" case). \textbf{SCLCS} is necessary because it robustly handles the full spectrum of SPI behaviors, effectively weighting the top-performing methods that researchers care about most. More importantly, it suggests that different properties of the candidate models may affect the difficulty of the proposed problem. Exploring such effect is a promising future direction.

\begin{table*}[h]
\centering
\caption{Site balance on core-set (L1 distance, lower better)}
\label{tab:site_balance}
\footnotesize
\resizebox{\textwidth}{!}{
\begin{tabular}{lcccccc}
\toprule
\textbf{Method} & \multicolumn{2}{c}{\textbf{L1 Distance @ 0.1 ratio}} & \multicolumn{2}{c}{\textbf{L1 Distance @ 0.3 ratio}} & \multicolumn{2}{c}{\textbf{L1 Distance @ 0.5 ratio}} \\
\cmidrule(lr){2-3} \cmidrule(lr){4-5} \cmidrule(lr){6-7}
 & Sample Level & Subject Level & Sample Level & Subject Level & Sample Level & Subject Level \\
\midrule
Forgetting   & 0.239 & 0.243 & 0.184 & 0.107 & 0.126 & 0.048 \\
Entropy      & 0.186 & 0.139 & 0.149 & 0.112 & 0.146 & 0.157 \\
EI2N         & 0.303 & 0.350 & 0.260 & 0.220 & 0.235 & 0.195 \\
AUM          & 0.157 & 0.143 & 0.094 & 0.181 & 0.057 & 0.176 \\
CCS          & 0.202 & 0.183 & 0.143 & 0.175 & 0.103 & 0.148 \\
EVA          & 0.111 & 0.101 & 0.047 & 0.017 & 0.024 & 0.081 \\
BOSS         & 0.737 & 0.362 & 0.457 & 0.107 & 0.248 & 0.022 \\
\textbf{SCLCS}        & 0.127 & 0.146 & 0.096 & 0.039 & 0.068 & 0.011 \\
\textbf{SCLCS}$_{\text{Dense}}$ & 0.195 & 0.146 & 0.112 & 0.054 & 0.071 & 0.017 \\
\bottomrule
\end{tabular}
}
\end{table*}

\subsection{Extension on Data Properties}
We further explore the relationship between 'site ID' and the core-sets selected by all the baseline methods and the proposed \textbf{SCLCS} (which is blind to site). Performances are quantified by calculating the L1 Distance between the site distribution of the original dataset and the site distribution of the selected core-set. The samples of each subject are same on the original dataset but vary on the core-set. Thus we provide two levels of result based on sample and subject, respectively. Results are reported in Table~\ref{tab:site_balance}.

\textbf{SCLCS} demonstrates robustness to site imbalance. For example, at a $30\%$ sampling ratio, \textbf{SCLCS} achieves an L1 distance of $\approx0.096$, ranking second only to the variance-based method EVA ($\approx0.047$).
Crucially, \textbf{SCLCS} significantly outperforms other baselines like Entropy (0.149), which tend to be more biased towards specific sites.
This is an interesting finding. It necessitates exploring methods to balance different data properties and the performance on the proposed ranking preservation task, which is beyond the scope of our current paper.

\subsection{Robustness to Window Size}

\begin{table*}[t]
\centering
\caption{Robustness of different methods to window size}
\label{tab:horizontal_robustness}
\scriptsize 
\setlength{\tabcolsep}{2pt} 

\resizebox{\textwidth}{!}{
\begin{tabular}{l|ccccccccc|ccccccccc}
\toprule
\textbf{Method} & \multicolumn{9}{c|}{\textbf{Robustness on MDD Diagnosis Task}} & \multicolumn{9}{c}{\textbf{Robustness on Brain Fingerprinting Task}} \\
 & \multicolumn{3}{c}{nDCG@5} & \multicolumn{3}{c}{nDCG@10} & \multicolumn{3}{c|}{nDCG@20} & \multicolumn{3}{c}{nDCG@5} & \multicolumn{3}{c}{nDCG@10} & \multicolumn{3}{c}{nDCG@20} \\
\cmidrule(lr){2-4} \cmidrule(lr){5-7} \cmidrule(lr){8-10} \cmidrule(lr){11-13} \cmidrule(lr){14-16} \cmidrule(lr){17-19}
Ratio & 0.1 & 0.3 & 0.5 & 0.1 & 0.3 & 0.5 & 0.1 & 0.3 & 0.5 & 0.1 & 0.3 & 0.5 & 0.1 & 0.3 & 0.5 & 0.1 & 0.3 & 0.5 \\
\midrule
Random       & 0.213 & 0.498 & 0.802 & 0.413 & 0.611 & 0.843 & 0.558 & 0.707 & 0.886 & 0.206 & 0.285 & 0.282 & 0.234 & 0.386 & 0.353 & 0.270 & 0.443 & 0.376 \\
Forgetting   & 0.234 & 0.190 & 0.637 & 0.336 & 0.287 & 0.689 & 0.469 & 0.376 & 0.745 & 0.252 & 0.478 & 0.417 & 0.287 & 0.525 & 0.461 & 0.324 & 0.556 & 0.500 \\
Entropy      & 0.258 & 0.467 & 0.566 & 0.293 & 0.506 & 0.633 & 0.360 & 0.533 & 0.717 & 0.227 & 0.474 & 0.576 & 0.243 & 0.526 & 0.582 & 0.347 & 0.544 & 0.591 \\
EI2N         & 0.174 & 0.007 & 0.531 & 0.253 & 0.199 & 0.567 & 0.344 & 0.250 & 0.597 & 0.372 & \textbf{0.670} & 0.333 & 0.366 & \textbf{0.638} & 0.395 & 0.374 & \textbf{0.631} & 0.405 \\
AUM          & 0.452 & 0.631 & 0.701 & 0.490 & 0.725 & 0.784 & 0.528 & 0.797 & 0.852 & 0.291 & 0.690 & 0.399 & 0.324 & 0.684 & 0.483 & 0.332 & 0.728 & 0.522 \\
CCS          & 0.464 & 0.596 & 0.727 & 0.554 & 0.682 & 0.794 & 0.625 & 0.757 & 0.853 & 0.240 & 0.502 & 0.473 & 0.282 & 0.560 & 0.506 & 0.306 & 0.566 & 0.523 \\
EVA          & 0.347 & 0.422 & 0.513 & 0.456 & 0.496 & 0.597 & 0.560 & 0.580 & 0.663 & 0.430 & 0.456 & 0.444 & 0.417 & 0.510 & 0.476 & 0.474 & 0.517 & 0.488 \\
BOSS         & 0.281 & 0.369 & 0.833 & 0.380 & 0.430 & 0.867 & 0.468 & 0.493 & 0.899 & 0.062 & 0.364 & 0.669 & 0.110 & 0.379 & 0.747 & 0.151 & 0.416 & 0.752 \\
\textbf{SCLCS} & \textbf{0.621} & \textbf{0.834} & 0.708 & \textbf{0.662} & \textbf{0.818} & 0.766 & \textbf{0.735} & \textbf{0.849} & 0.826 & \textbf{0.475} & 0.569 & 0.673 & \textbf{0.491} & 0.569 & \textbf{0.750} & 0.495 & 0.593 & \textbf{0.762} \\
\textbf{SCLCS}$_\text{Dense}$ & \textbf{0.661} & 0.732 & \textbf{0.836} & \textbf{0.718} & 0.781 & \textbf{0.885} & \textbf{0.778} & 0.826 & \textbf{0.917} & 0.420 & 0.593 & \textbf{0.712} & 0.459 & 0.617 & 0.707 & \textbf{0.496} & 0.657 & 0.705 \\
\bottomrule
\end{tabular}
}
\end{table*}

We re-process the entire dataset using a different sliding window configuration (Window Size = $50$ TRs, Stride = $45$ TRs), which differs significantly from the setting used in the main paper (Window Size = $70$, Stride = $35$). We then re-calculated all 130 SPIs and re-evaluated the ranking consistency of \textbf{SCLCS} against all baselines on both downstream tasks.

The results, detailed in Table~\ref{tab:horizontal_robustness}, empirically demonstrate that \textbf{SCLCS} maintains its superior performance and ranking stability regardless of the window size, confirming that our method captures intrinsic structural patterns rather than artifacts of specific preprocessing parameters.

On MDD diagnosis task, even with shorter window lengths (which can introduce more noise), \textbf{SCLCS} and \textbf{SCLCS}$_{\text{Dense}}$ consistently outperform all baselines. Notably, at the challenging 0.1 sampling ratio, \textbf{SCLCS}$_{\text{Dense}}$ achieves an nDCG@5 of 0.661, which is significantly higher than the strongest baselines like AUM (0.452) and CCS (0.464).
This confirms that our density-aware selection strategy is highly robust to variations in temporal segmentation.

On brain fingerprinting task, \textbf{SCLCS} continues to show state-of-the-art performance, particularly at the 0.1 ratio with an nDCG@5 of $0.475$, far exceeding Random (0.206).
While baselines like El2N show high variance (performing well at $0.3$ but dropping significantly at $0.5$), \textbf{SCLCS} and \textbf{SCLCS}$_{\text{Dense}}$ exhibit a more stable performance trajectory as the sampling ratio increases (e.g., \textbf{SCLCS} improves from $0.475$ to $0.673$).

These empirical results directly demonstrate that \textbf{SCLCS} is not sensitive to the choice of window size and can reliably select high-quality core-sets across different pipeline configurations. We believe this evidence strongly supports the practical reliability of \textbf{SCLCS} in diverse experimental settings.

\section{Reproducibility Information}\label{app:repro}
\subsection{Data Preprocessing}\label{app:dp}
We use a subset of 904 subjects (458 MDD, 446 controls) from the REST-meta-MDD consortium, selected via a two-stage filtering process. First, we adopt the quality-controlled cohort defined in~\citet{yan2019reduced}, which retains $1,642$ subjects across 17 sites. Second, we intersect this cohort with the 9-site subset used in~\citet{long2020altered}, chosen for consistent acquisition parameters (3T scanners, 240 time points, TR = 2000 ms). This yields a final sample from 7 sites with harmonized imaging protocols.

The analysis focuses on $33$ regions of interest (ROIs) associated with the default mode network (DMN), as defined in the Dosenbach-160 atlas~\citep{dosenbach2010prediction}. ROI-level time series were obtained directly from the preprocessed data released by REST-meta-MDD. We use the version with global signal regression (GSR) applied, consistent with prior findings~\citep{yan2019reduced}.

Our decision to focus on the DMN-33 ROI was a principled decision based on scientific control, reproducibility, and computational feasibility:
\begin{itemize}[nosep]
    \item \textbf{Scientific Control:} To validate the proposed new 'ranking preservation' task, our first priority was to isolate variables. Our primary goal is to define the "Ranking Preservation" problem and prove that a "structure-based" approach is a feasible path. Introducing multiple atlases or pipelines would change our scenario from Benchmark (SPIs) to a different, combinatorially explosive scenario of Benchmark (SPIs x atlases/pipelines), which is a valuable, promising but separate scientific task.
    \item \textbf{Reproducibility:} This scientific control principle is reinforced by the dataset itself. To ensure the highest standard of reproducibility, we used the official, standardized pre-processed data from the REST-meta-MDD consortium. This setting ensures the results are not affected by personally defined preprocessing pipelines.
    \item \textbf{The Computational Infeasibility:} Finally, even if we were to use an official atlas like AAL-90 (N = 90), an average complexity of each SPI would increase compute time by $\approx9$x (from N = 33).
\end{itemize}

Each subject is represented by a multivariate time series of $T \times R$, where $T$ is the number of time points and $R = 33$. To standardize downstream sampling, we truncate all time series to 210 time points. We apply a sliding window of length 70 TRs with a step of 35 TRs, yielding five overlapping temporal segments per subject. This configuration is consistent with prior dynamic connectivity studies~\citep{allen2014tracking, preti2017dynamic, long2020altered}, and provides a balance between temporal resolution and estimation reliability. These segments serve as dynamic samples for our core-set selection framework, alongside static networks built from each subject’s full time series.

\begin{table}[h]
\centering
\caption{Acquisition details of the 7 selected REST-meta-MDD sites. All data were collected using 3T scanners and have consistent TR ($2,000$ ms). Minor variation exists in number of time points.}
\label{tab:site-info}

\resizebox{\linewidth}{!}{
\begin{tabular}{c|p{6.5cm}|c|c|c|c|c|c}
\midrule
\textbf{Site ID} & \textbf{Institution} & \textbf{MDD} & \textbf{HC} & \textbf{Scanner} & \textbf{TR (ms)} & \textbf{TE (ms)} & \textbf{Timepoints} \\
\midrule
15 & Zhongda Hospital, Southeast University & 37 & 30 & Siemens Verio 3T & 2000 & 25.0 & 240 \\
17 & First Affiliated Hospital of Chongqing Medical University & 41 & 41 & GE Signa 3T & 2000 & 40.0 & 240 \\
19 & Anhui Medical University & 31 & 18 & GE Signa 3T & 2000 & 22.5 & 240 \\
20 & Southwest University & 229 & 250 & Siemens Tim Trio 3T & 2000 & 30.0 & 242 \\
21 & Beijing Anding Hospital, Capital Medical University & 65 & 79 & Siemens Tim Trio 3T & 2000 & 30.0 & 240 \\
22 & Second Xiangya Hospital, Central South University & 20 & 18 & Philips Gyroscan Achieva 3.0T & 2000 & 30.0 & 250 \\
23 & West China Hospital, Sichuan University & 23 & 22 & Philips Achieva 3.0T TX & 2000 & 30.0 & 240 \\
\midrule
\end{tabular}
}
\end{table}

Across the selected sites in the REST-meta-MDD consortium, the number of retained time points after preprocessing slightly varies due to site-specific acquisition protocols. While most sites provided 230 time points, others contributed data with 232 or 240 time points. These variations are a result of both initial protocol settings and preprocessing procedures (e.g., discarding initial scans to ensure magnetization equilibrium). A summary of time point lengths by site is given in Table~\ref{tab:ts-lengths}.

\begin{table}[h]
\centering
\caption{Post-preprocessing time point lengths across selected sites.}
\label{tab:ts-lengths}
\begin{tabular}{c|p{6.5cm}|c}
\midrule
\textbf{Site ID} & \textbf{Institution} & \textbf{Timepoints (after preprocessing)} \\
\midrule
15 & Zhongda Hospital, Southeast University & 230 \\
17 & First Affiliated Hospital of Chongqing Medical University & 230 \\
19 & Anhui Medical University & 230 \\
20 & Southwest University & 232 \\
21 & Beijing Anding Hospital, Capital Medical University & 230 \\
22 & Second Xiangya Hospital, Central South University & 240 \\
23 & West China Hospital, Sichuan University & 230 \\
\midrule
\end{tabular}
\end{table}

To ensure consistency in downstream dynamic sampling, all time series were uniformly truncated to the first 210 time points. This guarantees a consistent sampling space across all subjects regardless of site-specific acquisition length.

We then applied a sliding window with a fixed length of 70 TRs and a step size of 35 TRs. This configuration yields exactly five overlapping segments per subject, each representing a snapshot of short-term functional dynamics. These segments serve as candidate samples for evaluating structural stability and selecting representative subjects in our core-set selection framework.

We used 33 regions of interest (ROIs) associated with the default mode network (DMN), selected from the Dosenbach-160 atlas~\citep{dosenbach2010prediction} following the specification provided by~\citet{yan2019reduced}. These ROIs were identified using public scripts available at \url{https://github.com/Chaogan-Yan/PaperScripts/tree/master/Yan_2019_PNAS/Dos160} and used consistently in REST-meta-MDD-related studies.

\begin{table}[h]
\centering
\caption{List of 33 DMN-related ROIs selected from the Dosenbach-160 atlas.}
\label{tab:dmn-rois}
\begin{tabular}{c|c|c}
\midrule
\textbf{ROI Index} & \textbf{ROI Type} & \textbf{Yeo Network (Label)} \\
\midrule
1 & vmPFC & 7 (DMN) \\
4 & mPFC & 7 (DMN) \\
5 & aPFC & 7 (DMN) \\
6 & vmPFC & 7 (DMN) \\
7 & vmPFC & 7 (DMN) \\
11 & vmPFC & 7 (DMN) \\
13 & vmPFC & 7 (DMN) \\
14 & ACC & 7 (DMN) \\
15 & vlPFC & 7 (DMN) \\
17 & sup frontal & 7 (DMN) \\
20 & sup frontal & 7 (DMN) \\
25 & vFC & 7 (DMN) \\
63 & inf temporal & 7 (DMN) \\
72 & inf temporal & 7 (DMN) \\
73 & post cingulate & 7 (DMN) \\
85 & precuneus & 7 (DMN) \\
90 & post cingulate & 7 (DMN) \\
91 & inf temporal & 7 (DMN) \\
93 & post cingulate & 7 (DMN) \\
94 & precuneus & 7 (DMN) \\
100 & sup temporal & 7 (DMN) \\
102 & angular gyrus & 7 (DMN) \\
104 & IPL & 7 (DMN) \\
105 & precuneus & 7 (DMN) \\
108 & post cingulate & 7 (DMN) \\
111 & post cingulate & 7 (DMN) \\
112 & precuneus & 7 (DMN) \\
115 & post cingulate & 7 (DMN) \\
117 & angular gyrus & 7 (DMN) \\
124 & angular gyrus & 7 (DMN) \\
132 & precuneus & 7 (DMN) \\
134 & IPS & 7 (DMN) \\
137 & occipital & 7 (DMN) \\
\midrule
\end{tabular}
\end{table}

\subsection{Baseline Introduction and Parameter Settings}\label{app:bias}
\paragraph{Baseline Introduction}
\begin{itemize}

  \item \textbf{Random}: Uniformly samples instances without considering model behavior or data statistics.

  \item \textbf{k-Means}~\citep{hartigan1979algorithm}: An unsupervised clustering algorithm that partitions data into k clusters based on feature similarity, with the core-set formed by selecting samples closest to the cluster centroids. 

  \item \textbf{Forgetting}~\citep{toneva2018empirical}: Ranks samples by the number of times they transition from correct to incorrect predictions during training.

  \item \textbf{Entropy}~\citep{Coleman2020Selection}: Scores samples using the entropy of model output probabilities to reflect prediction uncertainty.

  \item \textbf{Area Under the Margin (AUM)}~\citep{pleiss2020identifying}: Computes the average margin between the true class probability and the highest non-true probability across epochs.

  \item \textbf{Example-Level L2 Norm (EL2N)}~\citep{paul2021deep}: Measures the L2 norm between model predictions and true labels over early training epochs as a proxy for example difficulty.

  \item \textbf{Coverage-Centric Selection (CCS)}~\citep{zheng2022coverage}: Stratifies samples by importance scores (e.g., AUM) and performs balanced sampling across strata to preserve distributional coverage.

  \item \textbf{Evolution-aware Variance (EVA)}~\citep{hong2024evolution}: Aggregates prediction error variances within early and late training windows to capture evolving sample dynamics.

  \item \textbf{Balanced One-shot Subset Selection (BOSS)}~\citep{acharya2024balancing}: Greedily selects a subset by maximizing a Beta-weighted objective over feature similarity, label variability, and difficulty-based scores.

\end{itemize}
\paragraph{Parameter Settings}
All baseline methods are evaluated using a unified training setup. The classification model is a compact residual network designed for multivariate time series inputs. It consists of a stem convolution followed by three residual blocks with output channels of 32, 64, and 128, respectively. A global average pooling layer and a fully connected classifier complete the architecture. This design balances expressiveness and efficiency for medium-scale time series classification.

The model is trained for 200 epochs using the Adam optimizer with a learning rate of 0.01, weight decay of 1e-4, and a batch size of 256. All methods, including full-data training and subset-based training, use identical configurations.

Per-sample importance scores are derived from training dynamics and used to evaluate a range of sampling strategies. For EVA, we compute the variance of the prediction error vector within two non-overlapping windows: epochs 100–109 and 190–199, following the original protocol.

Hyper parameters of \textbf{SCLCS} are optimized using grid searching in the flowing spaces:
\begin{enumerate}
  \item Dimension of Transformer: [4, 8, 16, 32, 64, 128, 256].
  \item Head Number: [2, 4, 8, 16, 32, 64].
  \item Epochs for calculating \textbf{SPS}: [50, 100, 150, 200].
\end{enumerate}

\subsection{Environment}
Experiments are performed on an 8-GPU (H20) high-performance computing cluster provided by the Large-scale Instrument Sharing Platform of Southwest University.

\section{Limitations}\label{Limi}
While this work establishes a new paradigm for efficient FC benchmarking, we identify several exciting avenues for future investigation that build upon our findings:

\begin{itemize}
    \item \textbf{Deepening the SPS-SPI Connection:} Our results demonstrate a strong empirical link between low \textbf{SPS} (structural stability) and effective core-set selection. However, the precise theoretical mechanism connecting the training dynamics of our encoder to the performance ranking of diverse, external SPI models warrants deeper investigation. Future work could explore this link to build a more formal bridge between learnable latent structures and statistical model behavior.

    \item \textbf{Developing an Adaptive Sampling Strategy:} Our experiments show that the optimal choice between simple low-SPS ranking (\textbf{SCLCS}) and density-aware sampling (\textbf{SCLCS\_Dense}) is task-dependent. A key next step is to develop heuristics or a meta-learning framework to automatically select the optimal sampling strategy based on dataset characteristics (e.g., class structure, sample heterogeneity), removing the need for manual selection.

    \item \textbf{Broader Generalization and Application:} While our evaluation on the large-scale, heterogeneous REST-meta-MDD dataset provides a strong test of robustness, the framework's generalizability should be further validated. A full study of supervision (site-ID, multi-task, and more) and how SPI properties modulate task difficulty is a valuable future direction. The relationship between the RP problem and data properties is another major extension to our work. Future studies should also apply \textbf{SCLCS} to fMRI datasets from different clinical populations (e.g., Alzheimer's disease, ADHD), other imaging modalities, or even entirely different domains of multivariate time-series analysis to establish the full scope of its applicability.
    \item \textbf{More Heterogeneous Scenarios:} Simply pulling resting-state and task-fMRI scans together into the contrastive loss would be problematic, as the functional state changes. This poses a more complex heterogeneous data fusion challenge. Exploring model benchmarking problem on heterogeneous scan types is a valuable future direction.
\end{itemize}

\section*{The Use of Large Language Models (LLMs)}
In the preparation of this manuscript, we utilize Large Language Models (LLMs) in a supportive capacity. Specifically, their use is confined to the following areas:
\begin{itemize}
    \item \textbf{Writing and Editing:} LLMs are employed to enhance the clarity, grammar, and style of the text, ensuring the manuscript's readability.

    \item \textbf{Assistance with Theorem Proofs:} LLMs serves as an assistive tool to verify the logical consistency and correctness of individual steps within our mathematical derivations and proofs. 
\end{itemize}
The core scientific ideas, the structure of the proofs, the experimental design, and all final conclusions presented in this paper are conceived and developed entirely by the authors.
\end{document}